\pgfplotsset{compat=1.17}
\renewcommand{\figwidth}{0.30\textwidth}
\newcommand{\tikzfigwidth}{1\textwidth}
\theoremstyle{thmstyleone}
\newtheorem{theorem}{Theorem}
\newtheorem{proposition}{Proposition} 
\newtheorem{lemma}{Lemma}
\newtheorem{corollary}{Corollary}
\theoremstyle{thmstyletwo}
\newtheorem{remark}{Remark}
\theoremstyle{thmstylethree}
\newtheorem{definition}{Definition}
    \newcommand{\Bra}[1]{\left[ #1 \right]}
    \newcommand{\Par}[1]{\left( #1 \right)}
    \newcommand{\Abs}[1]{\left\vert #1 \right\vert}
    \newcommand{\Ang}[1]{\left\langle #1 \right\rangle}
    \newcommand{\At}[1]{\left. #1 \right\vert}
    \newcommand{\Norm}[1]{\left\Vert #1 \right\Vert}
\renewcommand{\iff}{\Leftrightarrow}
\let\a\alpha
\let\g\gamma
\let\e\varepsilon
\let\pd\partial
\newcommand{\bbR}{\mathbb{R}}
\newcommand{\bbM}{\mathbb{M}}
\newcommand{\cG}{\mathcal{G}}
\newcommand{\cA}{\mathcal{A}}
\newcommand{\cL}{\mathcal{L}}
\newcommand{\cH}{\mathcal{H}}
\newcommand{\cO}{\mathcal{O}}
\newcommand{\bp}{\mathbf{p}}
\newcommand{\bx}{\mathbf{x}}
\newcommand{\by}{\mathbf{y}}
\newcommand{\bR}{\mathbf{R}}
\newcommand{\bn}{\mathbf{n}}
\newcommand{\bq}{\mathbf{q}}
\DeclareMathOperator{\sinc}{sinc}
\DeclareMathOperator{\arctantwo}{arctan2}
\DeclareMathOperator{\omod}{mod}
\tikzset{            
    declare function = {
        l(\x,\y,\t,\wm,\wl,\wa) = sqrt((min(\wm,\wl)*\x)^2 + (min(\wm,\wl)*\y)^2 + (\wa*\t)^2);
        u1(\x,\y,\t,\wm,\wl,\wa) = sqrt((max(\wm,\wl)*\x)^2 + (max(\wm,\wl)*\y)^2 + (\wa*\t)^2);
        u2(\x,\y,\t,\wm,\wl,\wa) = sqrt((min(\wm,\wl)*\x)^2 + (min(\wm,\wl)*\y)^2) + \wa*pi;
    }
}
\newcommand\scalemath[2]{\scalebox{#1}{\mbox{\ensuremath{\displaystyle #2}}}}
\newcommand{\pdv}[2]{\frac{\partial #1}{\partial #2}}
\begin{document}
    
    \title[(sub)Riemannian PDE-G-CNNs]{Analysis of (sub-)Riemannian PDE-G-CNNs}
    
    \author*[]{\fnm{Gijs} \sur{Bellaard}}\email{g.bellaard@tue.nl}
    \author{\fnm{Daan L. J.} \sur{Bon}}\email{d.l.j.bon@tue.nl}
    \author{\fnm{Gautam} \sur{Pai}}\email{g.pai@tue.nl}
    \author{\fnm{Bart M. N.} \sur{Smets}}\email{b.m.n.smets@tue.nl}
    \author{\fnm{Remco} \sur{Duits}}\email{r.duits@tue.nl}
    
    \affil{
        \orgdiv{Department of Mathematics and Computer Science}, 
        \orgname{CASA, Eindhoven University of Technology}, 
        \orgaddress{
            \city{Eindhoven}, 
            \country{The Netherlands}
        }
    }
    \keywords{Convolutional neural networks, Scale space theory, Geometric deep learning, Morphological convolutions, PDEs, Riemannian Geometry, sub-Riemannian Geometry}
    
    \abstract{
        Group equivariant convolutional neural networks (G-CNNs) have been successfully applied in geometric deep learning. Typically, G-CNNs have the advantage over CNNs that they do not waste network capacity on training symmetries that should have been hard-coded in the network. The recently introduced framework of PDE-based G-CNNs (PDE-G-CNNs) generalises G-CNNs. PDE-G-CNNs have the core advantages that they simultaneously 1) reduce network complexity, 2) increase classification performance, and 3) provide geometric interpretability. Their implementations primarily consist of linear and morphological convolutions with kernels. 
\\[2pt]
In this paper we show that the previously suggested approximative morphological kernels do not always accurately approximate the exact kernels accurately. More specifically, depending on the spatial anisotropy of the Riemannian metric, we argue that one must resort to sub-Riemannian approximations. We solve this problem by providing a new approximative kernel that works regardless of the anisotropy. We provide new theorems with better error estimates of the approximative kernels, and prove that they all carry the same reflectional symmetries as the exact ones.
\\[2pt]
We test the effectiveness of multiple approximative kernels within the PDE-G-CNN framework on two datasets, and observe an improvement with the new approximative kernels. We report that the PDE-G-CNNs again allow for a considerable reduction of network complexity while having comparable or better performance than G-CNNs and CNNs on the two datasets. Moreover, PDE-G-CNNs have the advantage of better geometric interpretability over G-CNNs, as the morphological kernels are related to association fields from neurogeometry.

    }
    
    \maketitle           
    
    \section{Introduction} \label{sec:introduction}

Many classification, segmentation, and tracking tasks in computer vision and digital image processing require some form of ``symmetry''. Think, for example, of image classification. If one rotates, reflects, or translates an image the classification stays the same. We say that an ideal image classification is \textit{invariant} under these symmetries. A slightly different situation is image segmentation. In this case, if the input image is in some way changed the output should change accordingly. Therefore, an ideal image segmentation is \textit{equivariant} with respect to these symmetries.

Many computer vision and image processing problems are currently being tackled with \textit{neural networks} (NNs). It is desirable to design neural networks in such a way that they respect the symmetries of the problem, i.e. make them invariant or equivariant. Think for example of a neural network that detects cancer cells. It would be disastrous if, by for example slightly translating an image, the neural network would give totally different diagnosis, even though the input is essentially the same.

One way to make the networks equivariant or invariant is to simply train them on more data. One could take the training dataset and augment it with translated, rotated and reflected versions of the original images. This approach however is undesirable: invariance or equivariance is still not guaranteed and the training takes longer. It would be better if the networks are \textit{inherently} invariant or equivariant by design. This avoids a waste of network-capacity, guarantees invariance or equivariance, and increases performances, see for example \cite{bekkers2018roto}.   

More specifically, many computer vision and image processing problems are tackled with \textit{convolutional neural networks} (CNNs) \cite{lecun1989backpropagation,krizhevsky2012imagenet,litjens2017survey}. Convolution neural networks have the property that they inherently respect, to some degree, translation symmetries. CNNs do not however take into account rotational or reflection symmetries. Cohen and Welling introduced \textit{group equivariant convolutional neural networks} (G-CNNs) in \cite{cohen2016group} and designed a classification network that is inherently invariant under 90 degree rotations, integer translations and vertical/horizontal reflections. Much work is being done on invariant/equivariant networks that exploit inherent symmetries, a non-exhaustive list is \cite{bekkers2018roto,dieleman2016exploiting,dieleman2015rotation,winkels20183d,worrall2018cubenet,oyallon2015deep,weiler2018learning,bekkers2019bspline,finzi2020generalizing,cohen2019general,worrall2017harmonic,kondor2018generalization,esteves2018learning,weiler2019general,paoletti2020rotation,weiler2021coordinate,cohen2019gauge,bogatskiy2020lorentz,sifre2013rotation,bekkers2018template,worrall2019deep,satorras2021equivariant}. The idea of including geometric priors, such as symmetries, into the design of neural networks is called `Geometric Deep Learning' in \cite{bronstein2021geometric}.


In \cite{smets2022pdebased} partial differential equation (PDE) based G-CNNs are presented, aptly called PDE-G-CNNs. In fact, G-CNNs are shown to be \textit{a special case} of PDE-G-CNNs (if one restricts the PDE-G-CNNs only to convection, using many transport vectors \cite[Sec.6]{smets2022pdebased}). With PDE-G-CNNs the usual non-linearities that are present in current networks, such as the ReLU activation function and max-pooling, are replaced by solvers for specifically chosen non-linear evolution PDEs. \Cref{fig:difference_CNNs_and_PDEGCNNS} illustrates the difference between a traditional CNN layer and a PDE-G-CNN layer.

\begin{figure*}
    \centering
    \includegraphics[width=0.8\linewidth]{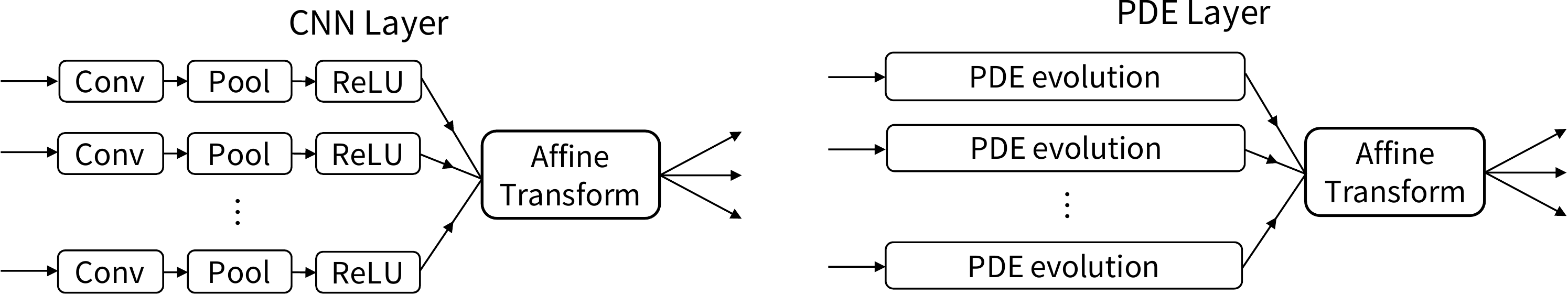}
    \caption{The difference between a traditional CNN layer and a PDE-G-CNN layer. In contrast to traditional CNNs, the layers in a PDE-G-CNN do not depend on ad-hoc non-linearities like ReLU's, and are instead implemented as solvers of (non)linear PDEs. What the PDE evolution block consists of can be seen in \Cref{fig:pde_evolution}.}
    \label{fig:difference_CNNs_and_PDEGCNNS}
\end{figure*}

The PDEs that are used in PDE-G-CNNs are not chosen arbitrarily: they come directly from the world of geometric image analysis, and thus their effects are geometrically interpretable. This makes PDE-G-CNNs more geometrically meaningful and interpretable than traditional CNNs. Specifically, the PDEs considered are diffusion, convection, dilation and erosion. These 4 PDEs correspond to the common notions of smoothing, shifting, max pooling, and min pooling. They are solved by linear convolutions, resamplings, and so-called \textit{ morphological convolutions}. \Cref{fig:pde_evolution} illustrates the basic building block of a PDE-G-CNN. 

\begin{figure}
    \centering
    \includegraphics[width=1\linewidth]{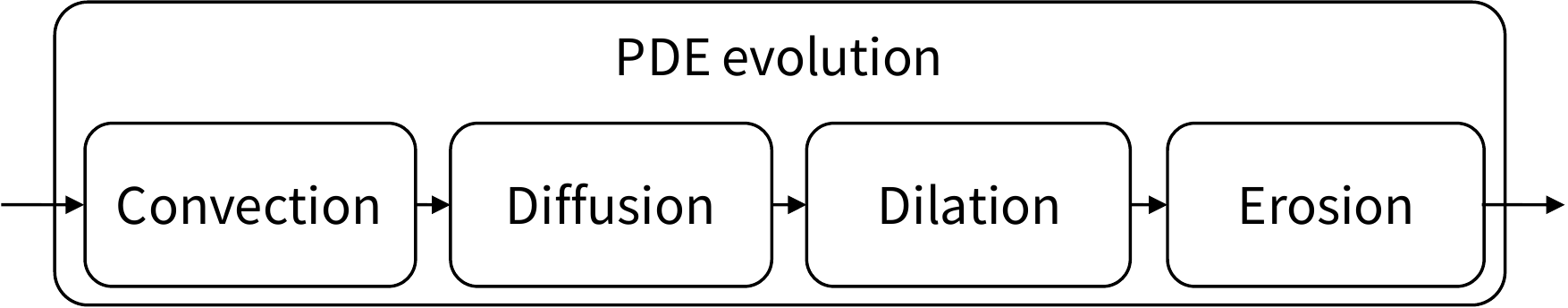}
    \caption{Overview of a PDE evolution block. Convection is solved by resampling, diffusion is solved by a linear group convolution with a certain kernel \protect\cite[Sec.5.2]{smets2022pdebased}, and dilation and erosion are solved by morphological group convolutions \eqref{eq:morphological_convolution} with a morphological kernel \eqref{eq:morphological_kernel_intro}. 
    }
    \label{fig:pde_evolution}
\end{figure}

One shared property of G-CNNs and PDE-G-CNNs is that the input data usually needs to be \textit{lifted} to a higher dimensional space. Take, for example, the case of image segmentation with a convolution neural network where we model/idealize the images as real-valued function on \(\bbR^2\). If we keep the data as functions on \(\bbR^2\) and want the convolutions within the network to be equivariant, then the only possible ones that are allowed are with isotropic kernels, \cite[p.258]{duits2010leftinvariant}. This type of short-coming generalizes to other symmetry groups as well \cite[Thm.1]{bekkers2019bspline}. One can imagine that this is a constraint too restrictive to work with, and that is why we lift the image data.

Within the PDE-G-CNN framework the input images are considered real-valued functions on \(\bbR^d\), the desired symmetries are represented by the Lie group of roto-translations \(SE(d)\), and the data is lifted to the homogeneous space of \(d\) dimensional positions and orientations \(\bbM_d\). It is on this higher dimensional space on which the evolution PDEs are defined, and the effects of diffusion, dilation, and erosion are completely determined by the Riemannian metric tensor field \(\cG\) that is chosen on \(\bbM_d\). If this Riemannian metric tensor field $\mathcal{G}$ is left-invariant, the overall processing is equivariant, this follows by combining techniques in~\cite[Thm.~21, Chpt.~4]{duits2005perceptual}, \cite[Lem.~3, Thm.~4]{duits2013morphological}. 

The Riemannian metric tensor field \(\cG\) we will use in this article is left-invariant and determined by three nonnegative parameters: \(w_1\), \(w_2\), and \(w_3\). The definition can be found in the preliminaries, \Cref{sec:preliminaries} \Cref{eq:diagonal_metric}. It is exactly these three parameters that during the training of a PDE-G-CNN are optimized. Intuitively, the parameters correspondingly regulate the cost of main spatial, lateral spatial, and angular motion. An important quantity in the analysis of this paper is the \textit{spatial anisotropy} \(\zeta := \frac{w_1}{w_2}\), as will become clear later. 

In this article we only consider the 2 dimensional case, i.e. \(d=2\). In this case, the elements of both \(\bbM_2\) and \(SE(2)\) can be represented by three real numbers: \((x,y,\theta) \in \bbR^2 \times [0,2\pi)\). In the case of \(\bbM_2\) the \(x\) and \(y\) represent a position and \(\theta\) represents an orientation. Throughout the article we take \(\bp_0 := (0,0,0) \in \bbM_2\) as our \textit{reference point} in \(\bbM_2\). In the case of \(SE(2)\) we have that \(x\) and \(y\) represent a translation and \(\theta\) a rotation. 

As already stated, within the PDE-G-CNN framework images are lifted to the higher dimensional space of positions and orientations \(\bbM_d\). There are a multitude of ways of achieving this, but there is one very natural way to do it: the \textit{orientation score transform} \cite{duits2005perceptual,duits2007scale,franken2008enhancement,bekkers2017retinal}. In this transform we pick a point \((x,y) \in \bbR^2\) in an image and determine how good a certain orientation \(\theta \in [0, 2\pi)\) fits the chosen point. In \Cref{fig:orientation_scores} an example of an orientation score is given. We refer to \cite[Sec.2.1]{bekkers2017retinal} for a summary of how an orientation score transform works.

\begin{figure}
    \centering
    \includegraphics[width=0.8\linewidth]{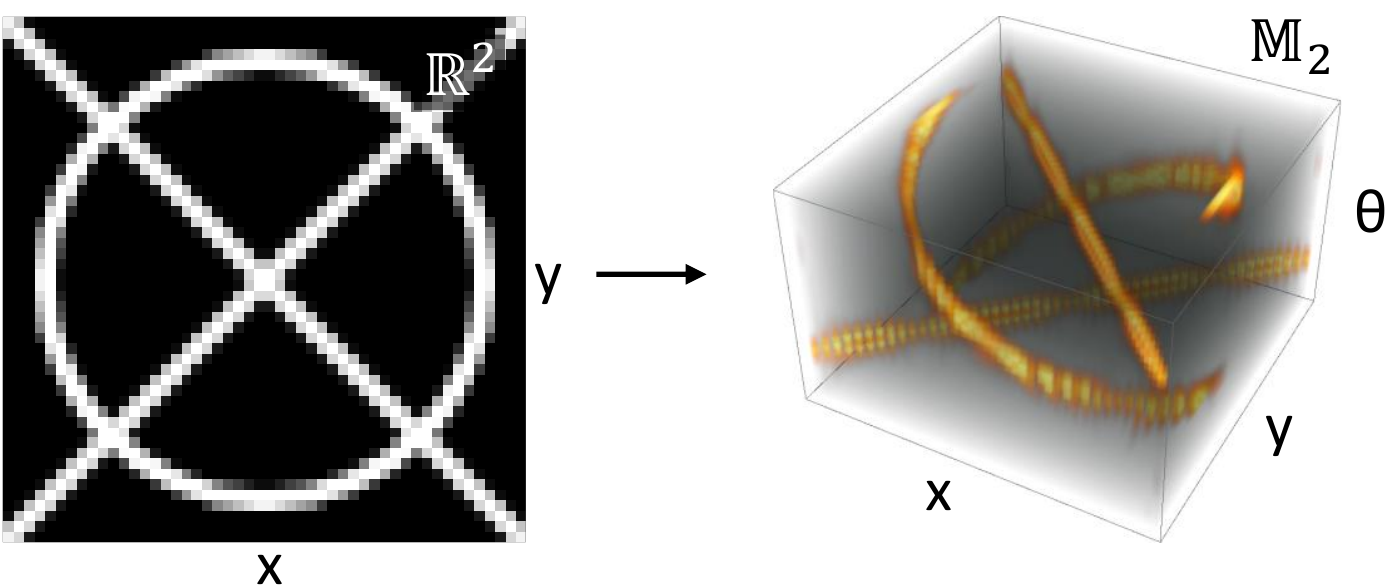}
    \caption{An example of an image together with its orientation score. We can see that the image, a real-valued function on \(\bbR^2\), is lifted to an orientation score, a real-valued function on \(\bbM_2\). Notice that the lines that are crossing in the left image are disentangled in the orientation score.}
    \label{fig:orientation_scores}
\end{figure}

Inspiration for using orientation scores comes from biology. The Nobel laureates Hubel and Wiesel found that many cells in the visual cortex of cats have a preferred orientation \cite{hubel1959receptive,bosking1997orientation}. Moreover, a neuron that fires for a specific orientation excites neighboring neurons that have an ``aligned'' orientation. Petitot and Citti-Sarti proposed a model \cite{petitot2003neurogeometry,citti2006cortical} for the distribution of the orientation preference and this excitation of neighbors based on sub-Riemannian geometry on \(\bbM_2\). They relate the phenomenon of preference of aligned orientations to the concept of \textit{association fields} \cite{field1993contour}, which model how a specific local orientation places expectations on surrounding orientations in human vision. \Cref{fig:association_field} provides an impression of such an association field.

\begin{figure}
    \centering
    \includegraphics[width=0.8\linewidth]{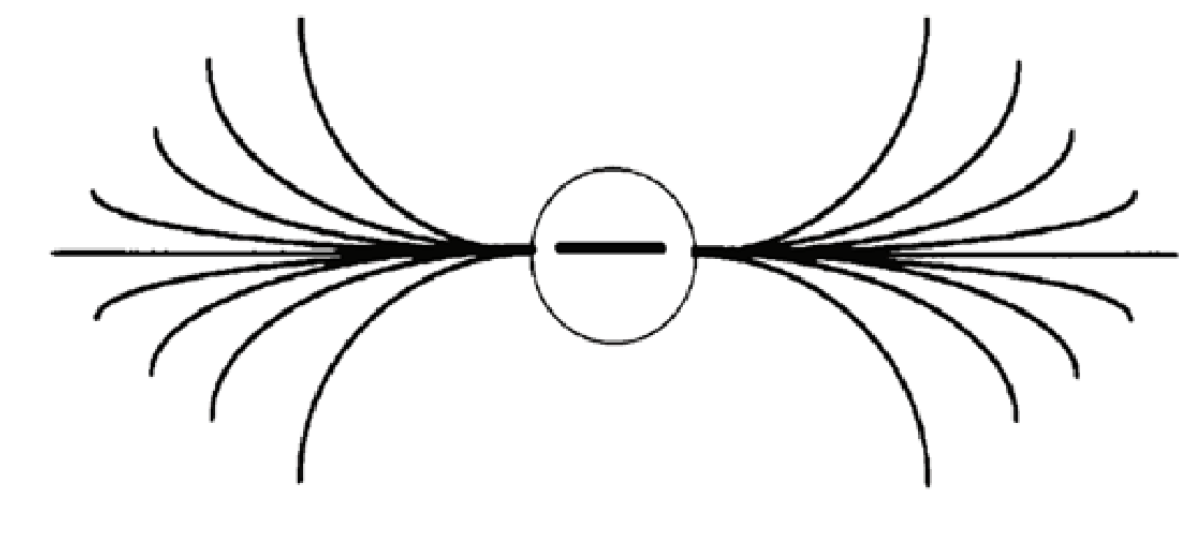}
    \caption{Association field lines from neurogeometry \protect\cite[Fig.43]{petitot2003neurogeometry}, \protect\cite[Fig.16]{field1993contour}. Such association field lines can be well approximated by spatially projected sub-Riemannian geodesics in \(\bbM_2\) \protect\cite{petitot2003neurogeometry,citti2006cortical,baspinar2021cortical,franceschiello2019geometrical}, \protect\cite[Fig.17]{duits2013association}. }
    \label{fig:association_field}
\end{figure}

As shown in \cite[Fig.17]{duits2013association} association fields are closely approximated by (projected) sub-Riemannian geodesics in \(\bbM_2\) for which optimal synthesis has been obtained by Sachkov and Moiseev \cite{sachkov2011cutlocus,moiseev2010maxwell}. Furthermore, in \cite{duits2018optimal} it is shown that the Riemannian geodesics in \(\bbM_2\) converge to the sub-Riemannian geodesics by increasing the spatial anisotropy \(\zeta\) of the metric. This shows that in practice one can approximate the sub-Riemannian model by Riemannian models. \Cref{fig:relation_distance_association_field} shows the relation between association fields and sub-Riemannian geometry in \(\bbM_2\).

\renewcommand{\figwidth}{0.49\linewidth}
\begin{figure}
   \centering
    \begin{subfigure}{\figwidth}
        \centering
        \includegraphics[width=\linewidth]{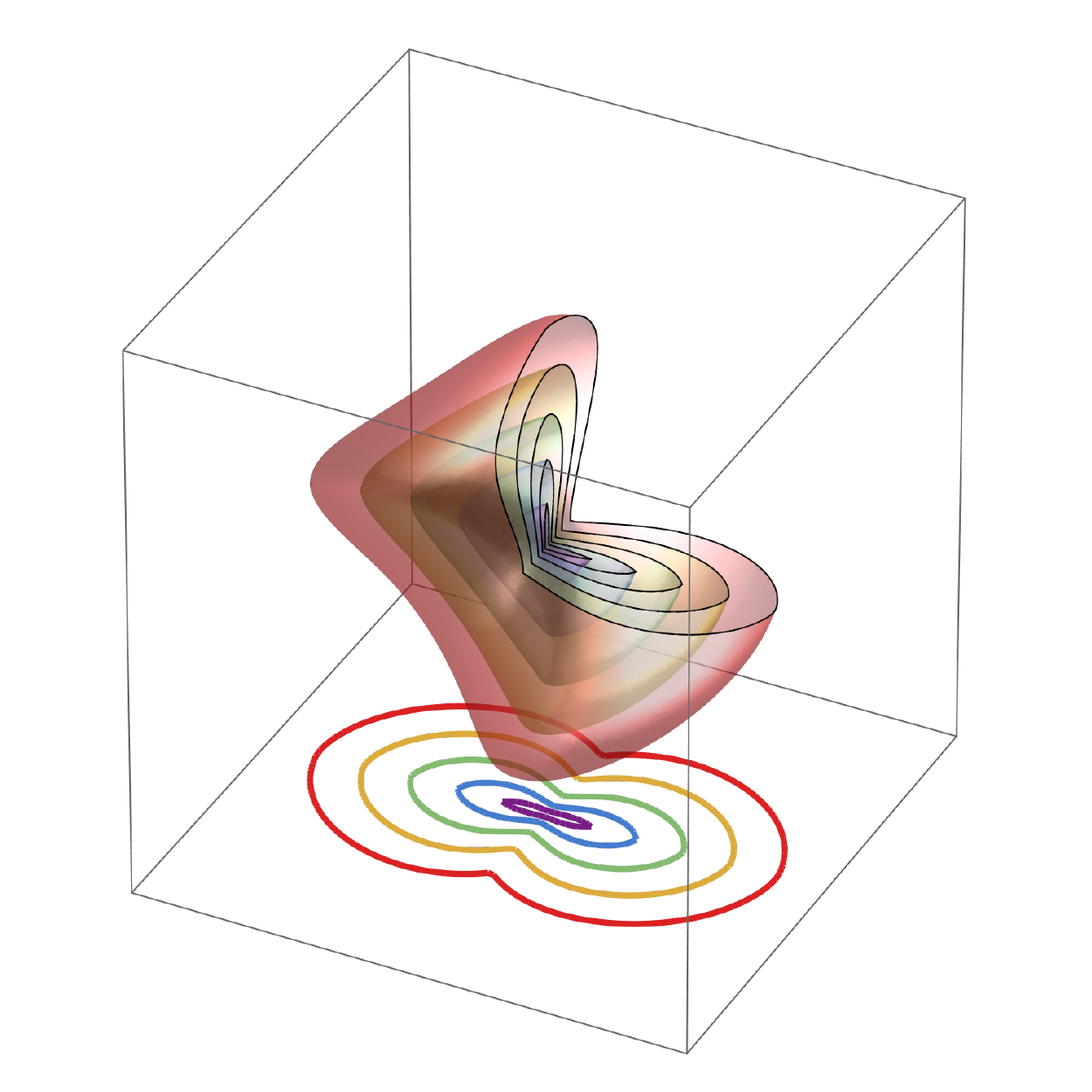}
        \caption{}
        \label{fig:multiple_contours_distance}
    \end{subfigure}
    \begin{subfigure}{\figwidth}
        \centering
        \includegraphics[width=\linewidth]{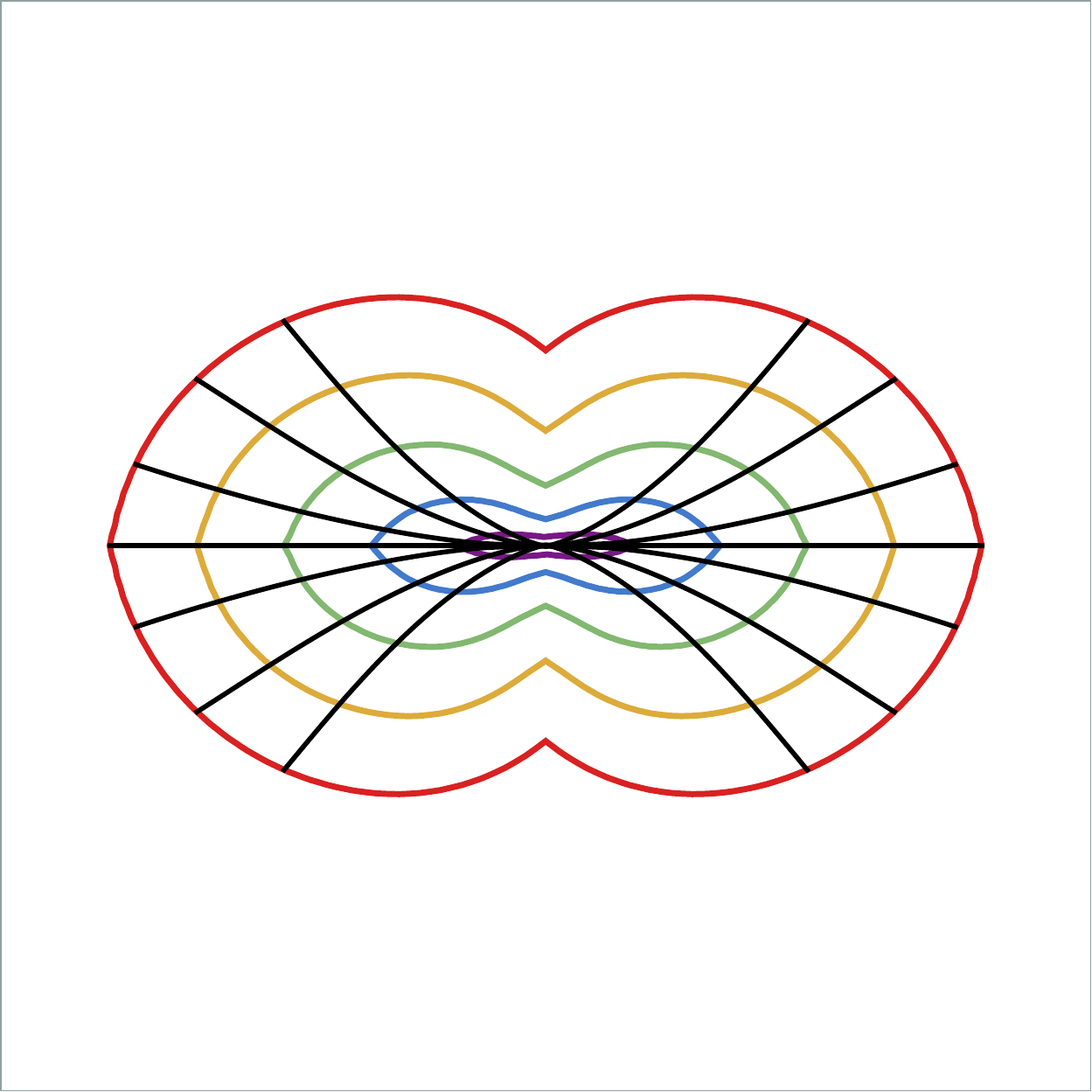}
        \caption{}
        \label{fig:geodesics_association_field}
    \end{subfigure}
    \caption{A visualization of the exact Riemannian distance \(d\), and its relation with association fields. In \ref{fig:multiple_contours_distance} we see isocontours of \(d(\bp_0, \cdot)\) in \(\bbM_2\), and on the bottom we see the min-projection over \(\theta\) of these contours (thus we selected the minimal ending angle in contrast to \Cref{fig:association_field}). The domain of the plot is \([-3,3]^2\times[-\pi,\pi) \subset \bbM_2\). The chosen contours are \(d = 0.5, 1, 1.5, 2\), and \(2.5\). The metric parameters are \((w_1,w_2,w_3)=(1,64,1)\). Due to the very high spatial anisotropy we approach the sub-Riemannian setting. In \ref{fig:geodesics_association_field} we see the same min-projection together with some corresponding spatially projected geodesics.}
    \label{fig:relation_distance_association_field}
\end{figure}

The relation between association fields and Riemannian geometry on \(\bbM_2\) directly extends to a relation between dilation/erosion and association fields. Namely, performing dilation on an orientation score in \(\bbM_2\) is similar to extending a line segment along its association field lines. Similarly, performing erosion is similar to sharpening a line segment perpendicular to its association field lines. This makes dilation/erosion the perfect candidate for a task such as \textit{line completion}. 

In the line completion problem, the input is an image containing multiple line segments, and the desired output is an image of the line that is ``hidden'' in the input image. \Cref{fig:line_completion_sample} shows such an input and desired output. This is also what David Field et al. studied in \cite{field1993contour}. We anticipate that PDE-G-CNNs outperform classical CNNs in the line completion problem due to PDE-G-CNNs being able to dilate and erode. To investigate this we made a synthetic dataset called ``Lines'' consisting of grayscale \(64\times 64\) pixel images, together with their ground-truth line completion. In \Cref{fig:full_network} a complete abstract overview of the architecture of a PDE-G-CNN performing line completion is visualized. \Cref{fig:feature_maps} illustrates how a PDE-G-CNN and CNN incrementally complete a line throughout their layers.

\renewcommand{\figwidth}{0.45\linewidth}
\begin{figure}
    \centering
    \begin{subfigure}{\figwidth}
        \centering
        \includegraphics[width=0.8\linewidth]{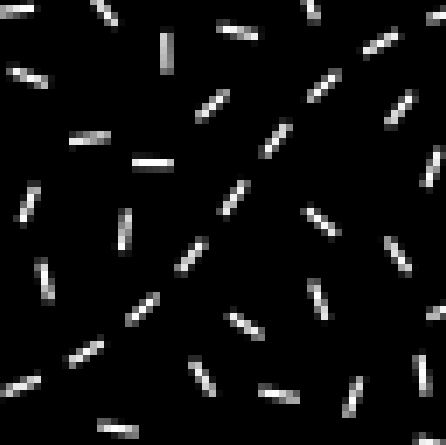}
        \caption{}
        \label{fig:line_completion_sample_input}
    \end{subfigure}
    \begin{subfigure}{\figwidth}
        \centering
        \includegraphics[width=0.8\linewidth]{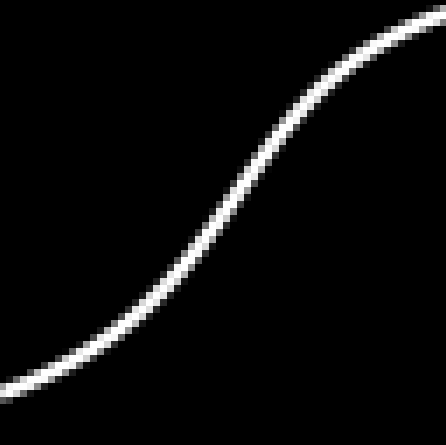}
        \caption{}
        \label{fig:line_completion_sample_gt}
    \end{subfigure}
    \caption{One sample of the Lines dataset. In \ref{fig:line_completion_sample_input} we see the input, in \ref{fig:line_completion_sample_gt} the perceived curve that we consider as ground-truth (as the input is constructed by interrupting the ground-truth line and adding random local orientations).}
    \label{fig:line_completion_sample}
\end{figure}

\begin{figure*}
    \centering
    \includegraphics[width=\linewidth]{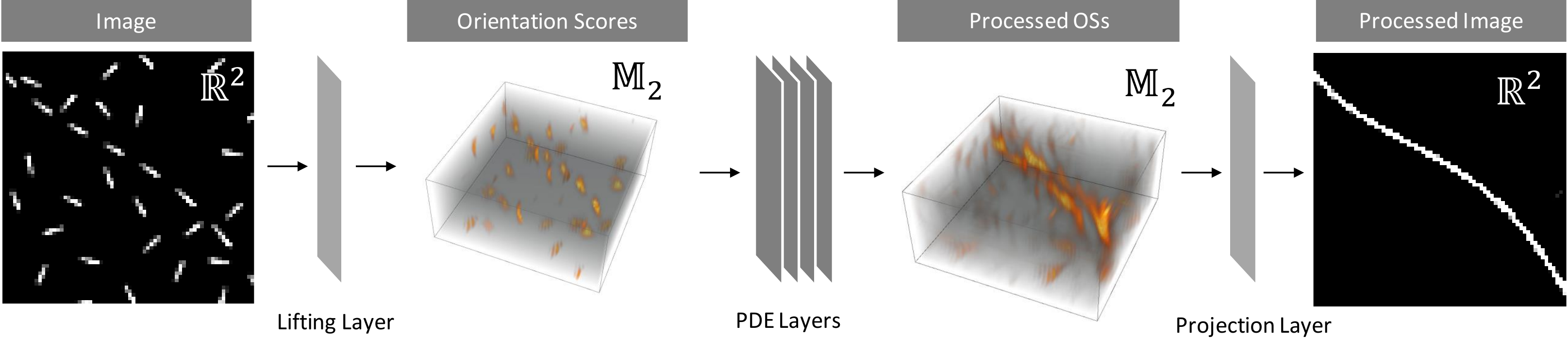}
    \caption{The overall architecture for a PDE-G-CNN performing line completion on the Lines data set. Note how the input image is lifted to an orientation score that lives in the higher dimensional space \(\bbM_2\), run through PDE-G-CNN layers(\Cref{fig:difference_CNNs_and_PDEGCNNS,fig:pde_evolution}), and afterwards projected down back to \(\bbR^2\). Usually this projection is done by taking the maximum value of a feature map over the orientations \(\theta\), for every position \((x,y) \in \bbR^2\). }
    \label{fig:full_network}
\end{figure*}

\begin{figure*}
    \centering
    \includegraphics[width=0.85\linewidth]{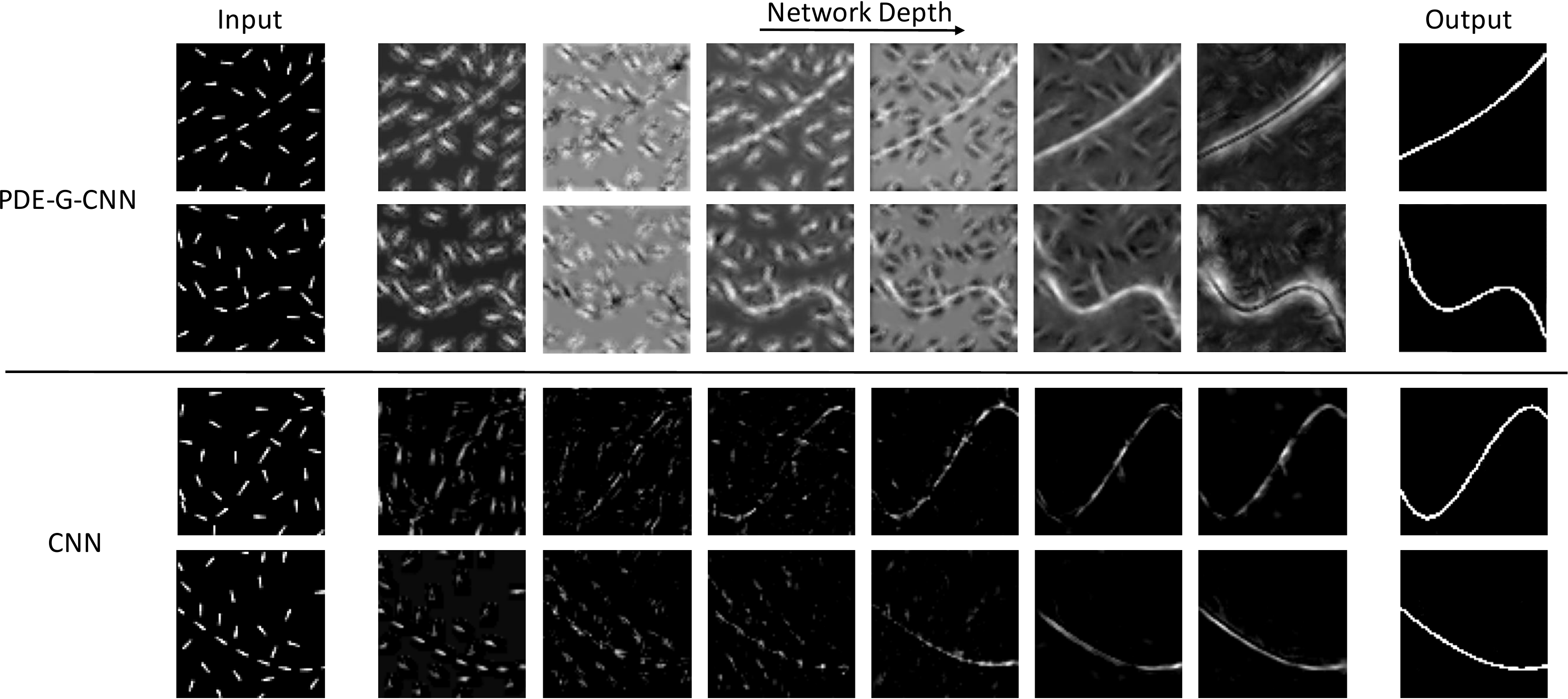}
    \caption{Visualization of how a PDE-G-CNN and CNN incrementally complete a line throughout their layers. The first two rows are of a PDE-G-CNN, the second two rows of a CNN. The first column is the input, the last column the output. The intermediate columns are a representative selection of feature maps from the output of the respective CNN or PDE layer (\Cref{fig:difference_CNNs_and_PDEGCNNS}). The feature maps of the PDE-G-CNN live in \(\bbM_2\), but for clarity we only show the max-projection over \(\theta\). Within the feature maps of the PDE-G-CNN association fields from neurogeometry \protect\cite{field1993contour,petitot2003neurogeometry,petitot2017elements} become visible as network depth increases. Such merging of association fields is not visible in the feature maps of the CNN. This observation is consistent throughout different inputs.
    }
    \label{fig:feature_maps}
\end{figure*}

In \Cref{res:kernels} we show that solving the dilation and erosion PDEs can be done by performing a morphological convolution with a \textit{morphological kernel} \(k_t^{\alpha} : \bbM_2 \to \bbR_{\geq 0}\), which is easily expressed in the Riemannian distance \(d=d_{\mathcal{G}}\) on the manifold:
\begin{equation} \label{eq:morphological_kernel_intro}
    k_t^{\alpha}(\bp)=\frac{t}{\beta} \left( \frac{d_{\cG}(\bp_0,\bp)}{t}\right)^{\beta}.
\end{equation}
Here \(\bp_0 = (0,0,0)\) is our reference point in \(\bbM_2\), and time $t>0$ controls the amount of erosion and dilation. Furthermore, $\alpha>1$ controls the ``softness'' of the max and min-pooling, with $\frac{1}{\alpha}+\frac{1}{\beta}=1$. Erosion is done through a direct morphological convolution \eqref{eq:morphological_convolution} with this specific kernel. Dilation is solved in a slightly different way but again with the same kernel (\Cref{res:kernels} in \Cref{sec:morphological} will explain the details).

And this is where a problem arises: calculating the exact distance \(d\) on \(\bbM_2\) required in (\ref{eq:morphological_kernel_intro}) is computationally expensive \cite{bekkers2015pde}. To alleviate this issue, we resort to estimating the true distance \(d\) with computationally efficient approximative distances, denoted throughout the article by \(\rho\). We then use such a distance approximation within \eqref{eq:morphological_kernel_intro} to create a corresponding approximative morphological kernel, and in turn use this to efficiently calculate the effect of dilation and erosion.

In \cite{smets2022pdebased} one such distance approximation is used: the \textit{logarithmic distance estimate} \(\rho_c\) which uses the logarithmic coordinates \(c^i\) \eqref{eq:logarithm_coordinates}. In short, \(\rho_c(\bp)\) is equal to the Riemannian length of the exponential curve that connects \(\bp_0\) to \(\bp\). The formal definition will follow in \Cref{sec:distances}. In \Cref{fig:relation_rho_c_association_field} an impression of \(\rho_c\) is given.

\renewcommand{\figwidth}{0.49\linewidth}
\begin{figure}
   \centering
    \begin{subfigure}{\figwidth}
        \centering
        \includegraphics[width=\linewidth]{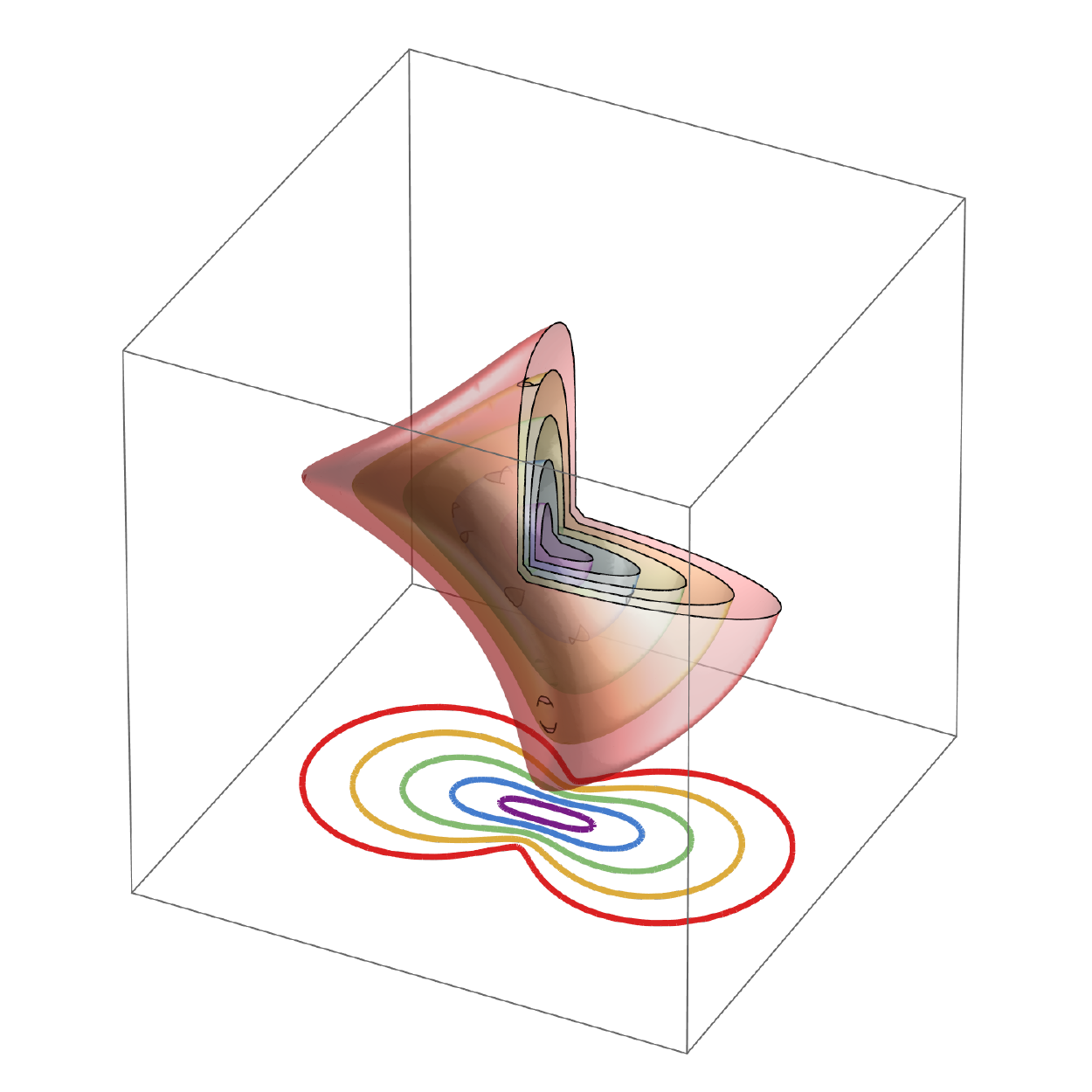}
        \caption{}
        \label{fig:multiple_contours_rho_c}
    \end{subfigure}
    \begin{subfigure}{\figwidth}
        \centering
        \includegraphics[width=\linewidth]{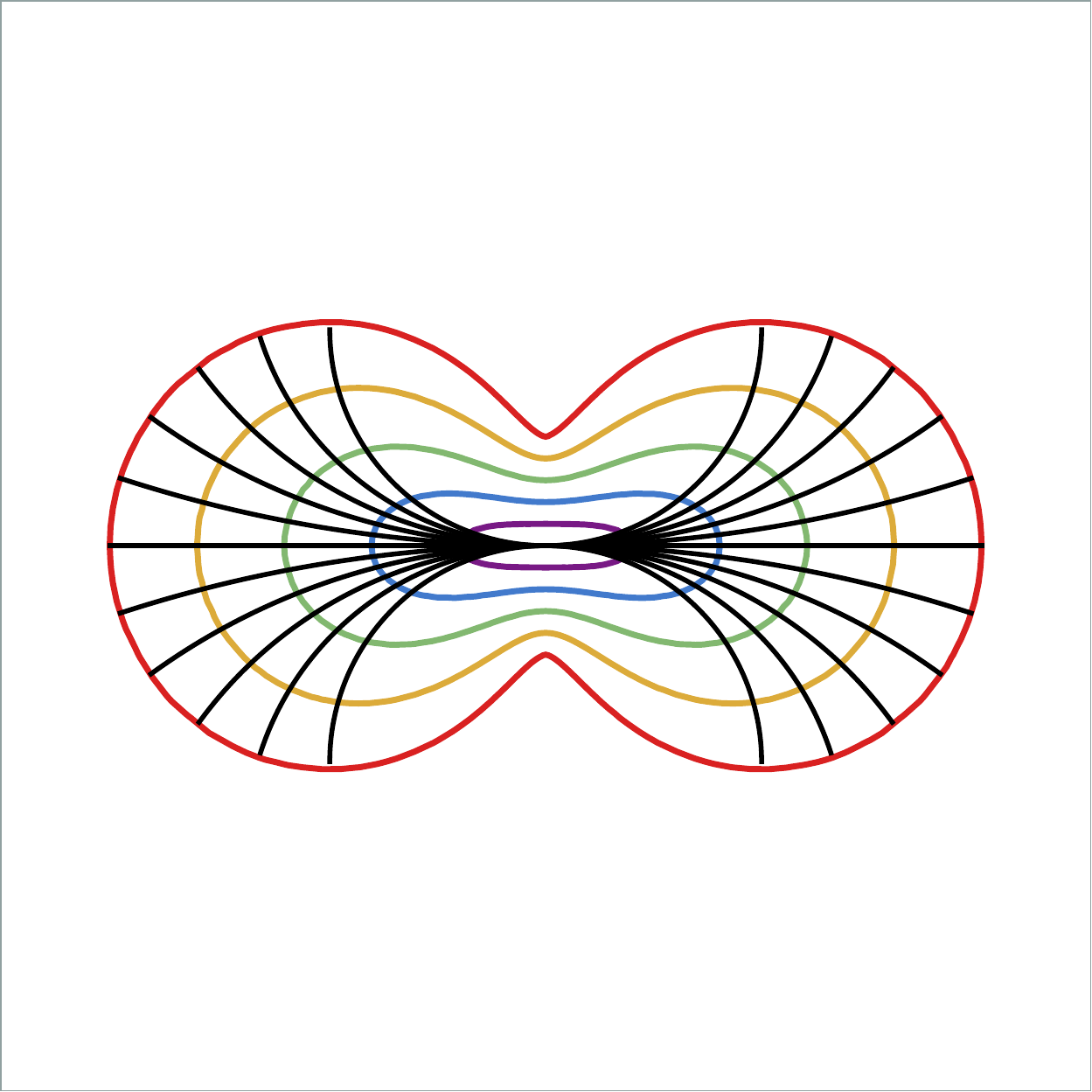}
        \caption{}
        \label{fig:exp_curves_association_field}
    \end{subfigure}
    \caption{A visualization of \(\rho_c\), similar to \Cref{fig:relation_distance_association_field}. In \ref{fig:multiple_contours_rho_c} we see multiple contours of \(\rho_c\), and on the bottom we see the min-projection over \(\theta\). The metric parameters are \((w_1,w_2,w_3)=(1,4,1)\). In \ref{fig:exp_curves_association_field} we see the same min-projection together with some corresponding spatially projected exponential curves. Note the similarity to \Cref{fig:association_field}. }
    \label{fig:relation_rho_c_association_field}
\end{figure}

Clearly, an error is made when the effect of erosion and dilation is calculated with an approximative morphological kernel. As a morphological kernel is completely determined by its corresponding (approximative) distance, it follows that one can analyse the error by analyzing the difference between the exact distance \(d\) and approximative distance \(\rho\) that is used.

Despite showing in \cite{smets2022pdebased} that $d \leq \rho_c$ no concrete bounds are given, apart from the asymptotic \( \rho_c^2 \leq d^2 + \cO(d^4) \). This motivates us to do a more in-depth analysis on the quality of the distance approximations.

We introduce a variation on the logarithmic estimate \(\rho_c\) called the \textit{half-angle distance estimate} \(\rho_b\), and analyse that. The half-angle approximation uses not the logarithmic coordinates but half-angle coordinates \(b^i\). The definition of these is also given later \eqref{eq:half_angle_coordinates}. In practice \(\rho_c\) and \(\rho_b\) do not differ much, but analysing \(\rho_b\) is much easier!

The main theorem of the paper, \Cref{res:main_results}, collects new theoretical results that describe the quality of using the half-angle distance approximation \(\rho_b\) for solving dilation and erosion in practice. It relates the approximative morphological kernel \(k_b\) corresponding with \(\rho_b\), to the exact kernel \(k\) \eqref{eq:morphological_kernel_intro}. 

Both the logarithmic estimate \(\rho_c\) and half-angle estimate \(\rho_b\) approximate the true Riemannian distance \(d\) quite well in certain cases. One of these cases is when the Riemannian metric has a low spatial anisotropy \(\zeta\). We can show this visually by comparing the isocontours of the exact and approximative distances. However, interpreting and comparing these surfaces can be difficult. This is why we have decided to additionally plot multiple \(\theta\)-isocontours of these surfaces. In \Cref{fig:true_distance_plot_intro} one such plot can be seen, and illustrates how it must be interpreted.

\begin{figure}
    \centering
    \includegraphics[width=0.8\linewidth]{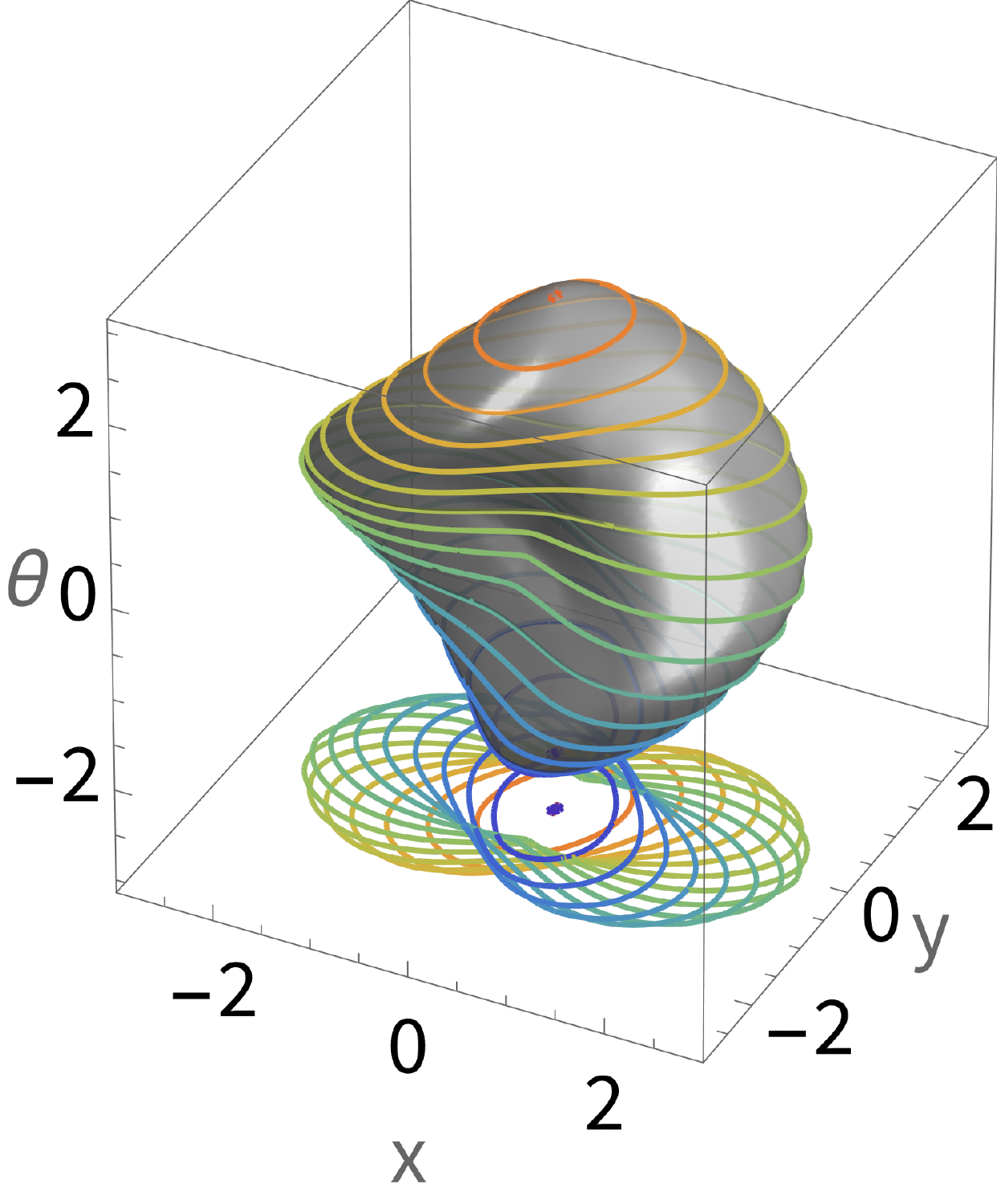}
    \caption{In grey the isocontour \(d=2.5\) is plotted. The metric parameters are \((w_1,w_2,w_3)=(1,8,1)\). For \(\theta = k\pi/10\) with \( k = -10,\dots,10 \) the isocontours are drawn and projected onto the bottom of the figure. The same kind of visualizations are used in \Cref{tab:balls,tab:balls_high_anisotropy}. }
    \label{fig:true_distance_plot_intro}
\end{figure}

In \Cref{tab:balls} a spatially isotropic \(\zeta = 1\) and low-anisotropic case \(\zeta = 2\) is visualized. Note that \(\rho_b\) approximates \(d\) well in these cases. In fact, \(\rho_b\) is exactly equal to the true distance \(d\) in the spatially isotropic case, which \textit{is not true} for \(\rho_c\).

Both the logarithm and half-angle approximation fail specifically in the high spatial anisotropy regime. For example when \(\zeta = 8\). The first two columns of \Cref{tab:balls_high_anisotropy} show that, indeed, \(\rho_b\) is no longer a good approximation of the exact distance \(d\). For this reason we introduce a novel \textit{sub-Riemannian} distance approximations \(\rho_{b, sr}\), which is visualized in the third column of \Cref{tab:balls_high_anisotropy}. 

Finally, we propose an approximative distance \(\rho_{com}\) that carefully combines the Riemannian and sub-Riemannian approximations into one. This combined approximation automatically switches to the estimate that is more appropriate depending on the spatial anisotropy, and hence covers both the low and high anisotropy regimes. Using the corresponding morphological kernel of \(\rho_{com}\) to solve erosion and dilation we obtain more accurate (and still tangible) solutions of the non-linear parts in the PDE-G-CNNs.

\renewcommand{\figwidth}{0.2\linewidth}
\begin{table*}
    \centering
    \begin{tabular}{c||c|c}
    
         & \(\zeta = 1 \) & \(\zeta = 2\)\\
         
        \hline \hline
        
        \(d\)
        & 
        \includegraphics[width=\figwidth]{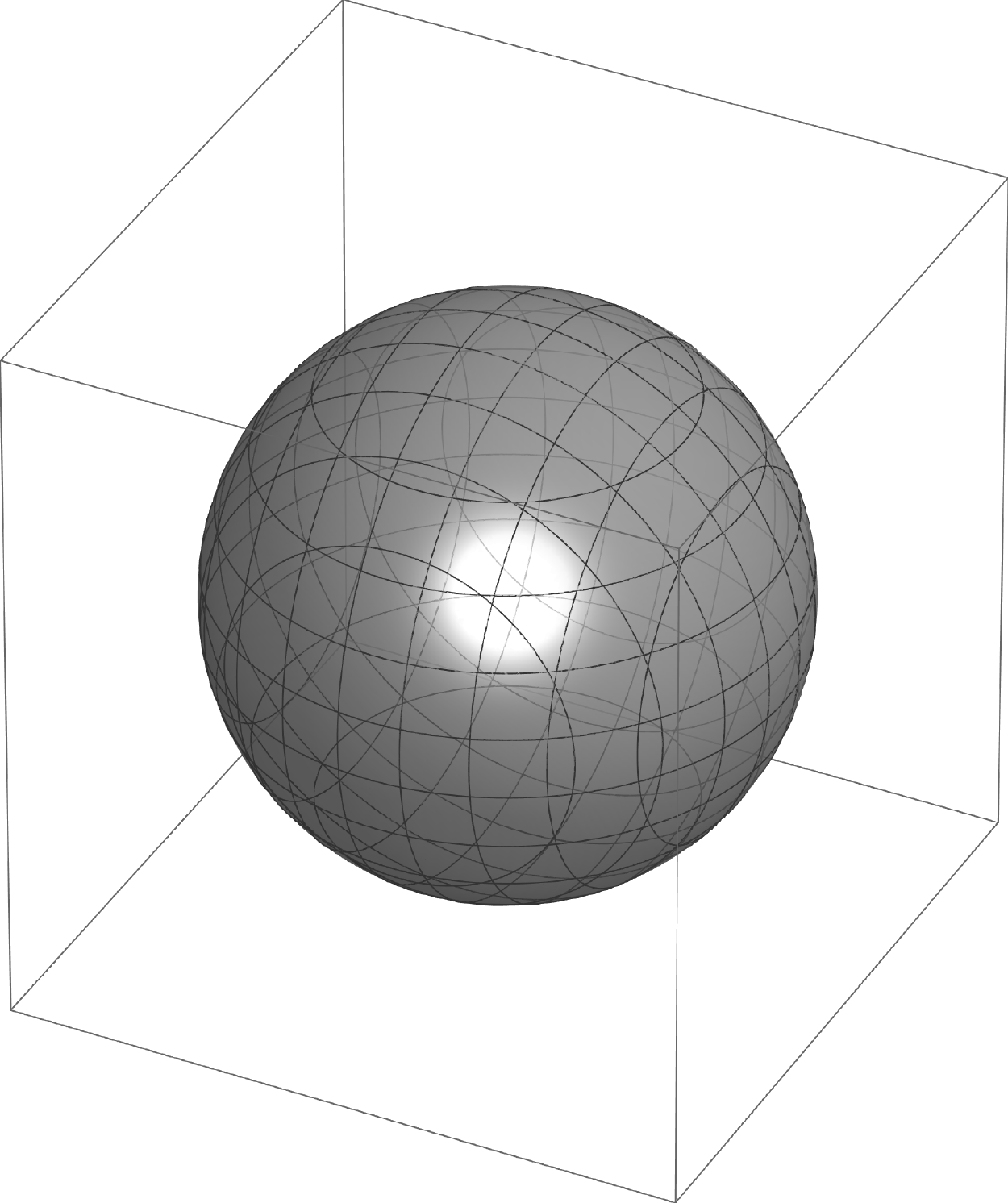}
        \includegraphics[width=\figwidth]{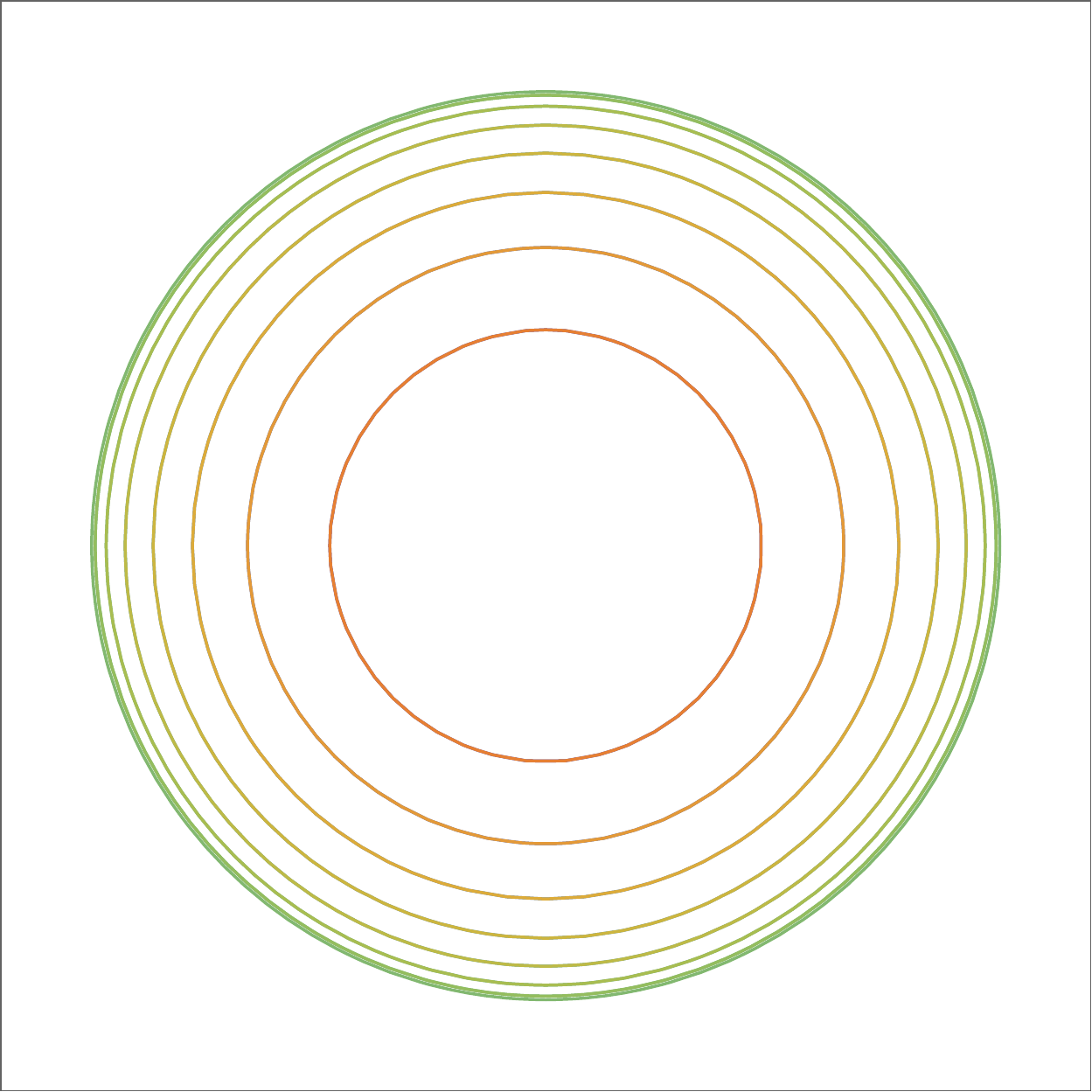} 
        & 
        \includegraphics[width=\figwidth]{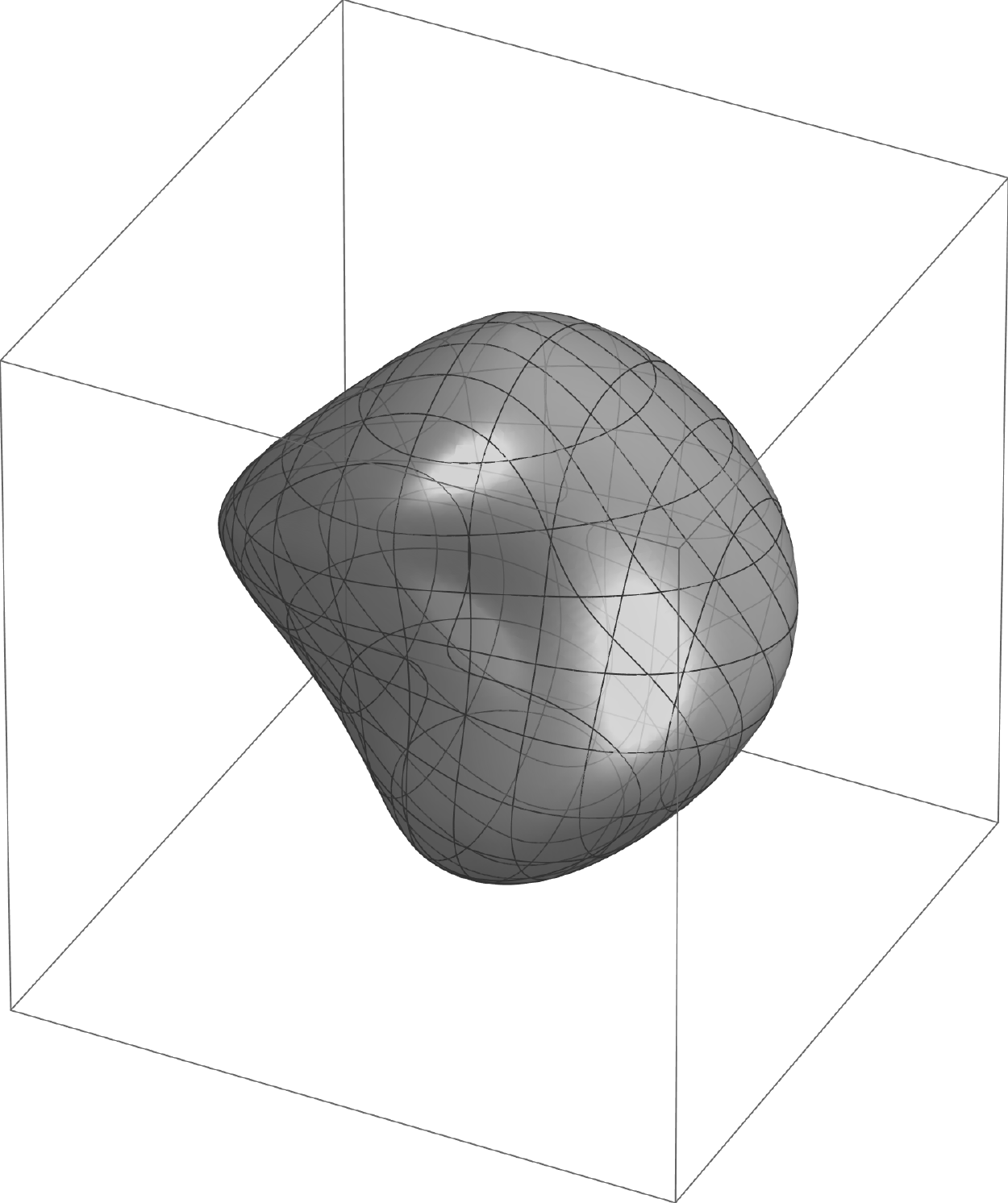} 
        \includegraphics[width=\figwidth]{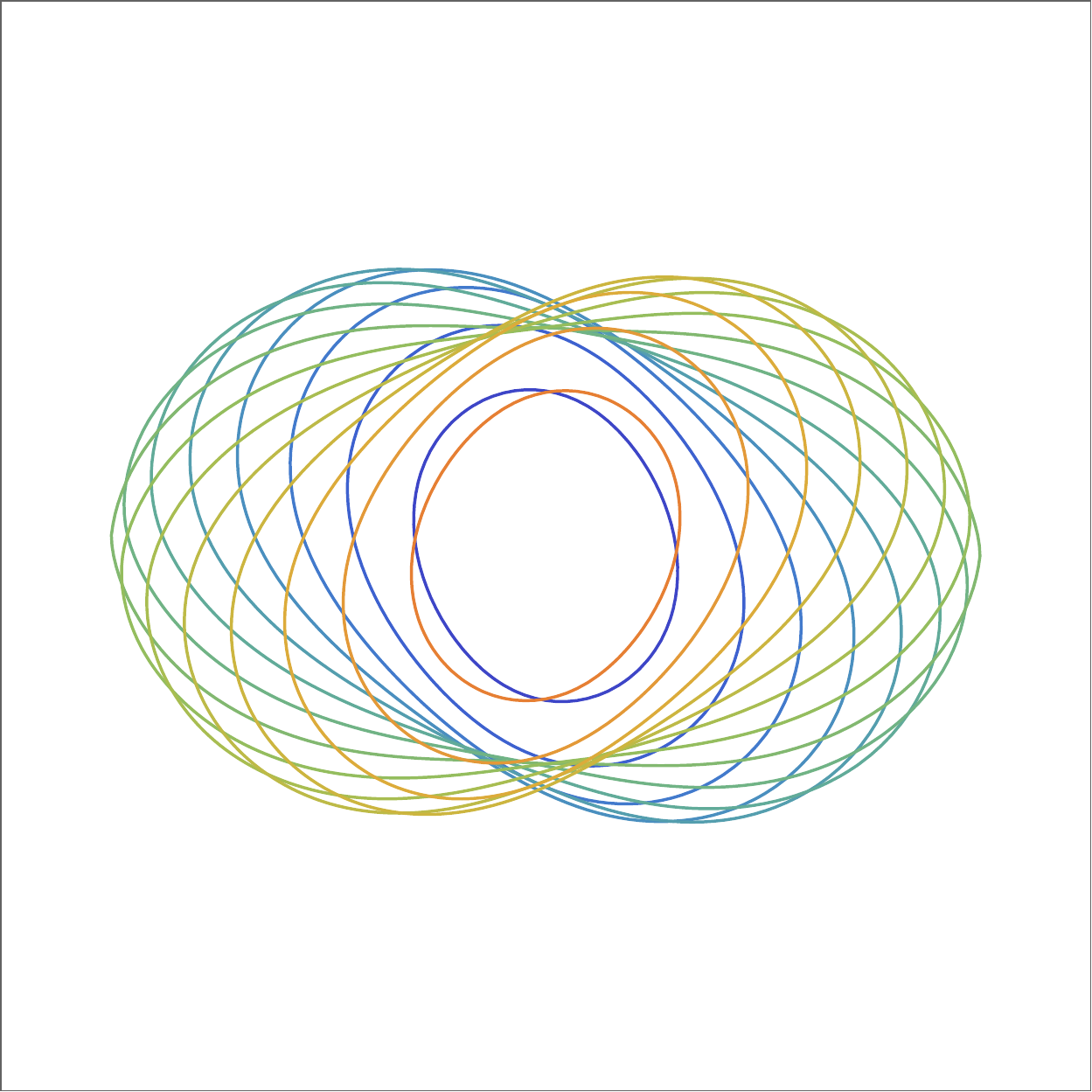} 
        \\
        
        \hline
        
        \(\rho_b\)
        & 
        \includegraphics[width=\figwidth]{figures/clean_rho_b_w=1,1,1_svg-tex.pdf}
        \includegraphics[width=\figwidth]{figures/clean_flat_rho_b_w=1,1,1_svg-tex.pdf} 
        &
        \includegraphics[width=\figwidth]{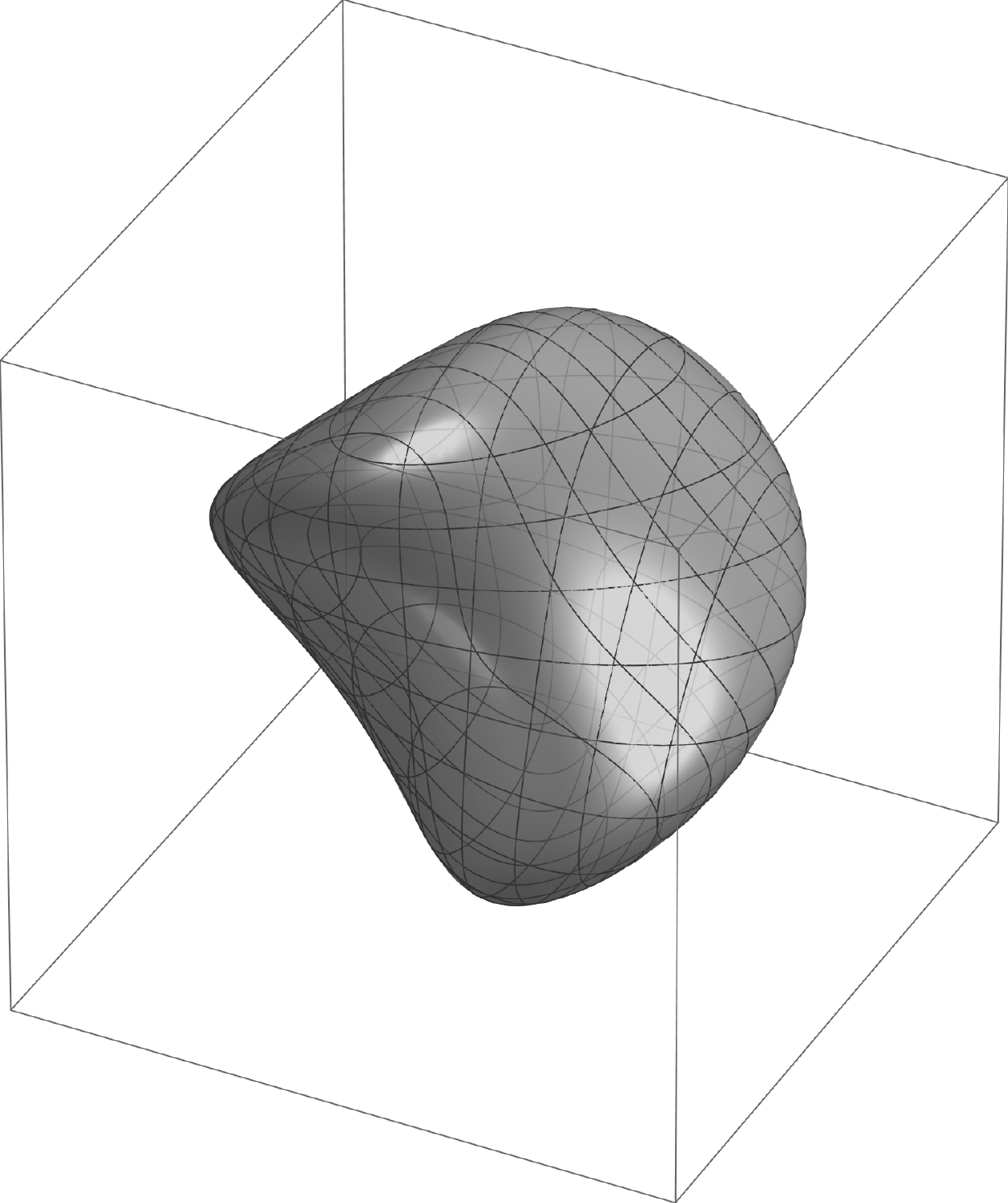}
        \includegraphics[width=\figwidth]{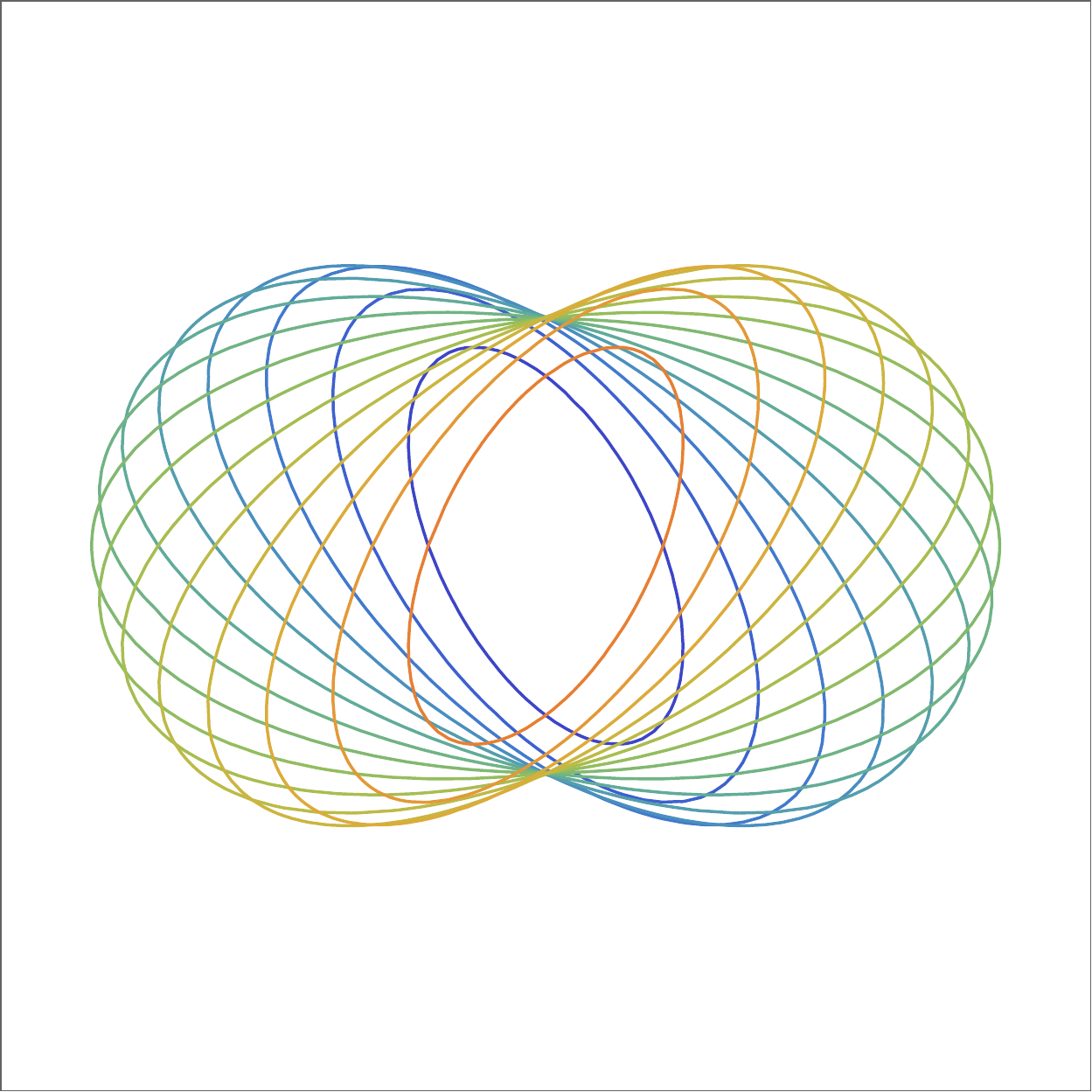}
    \end{tabular}
    \caption{The balls of the exact distance \(d\) and approximative distance \(\rho_b\) in the isotropic and low anisotropic case. The radius of the balls is set to \(r = 2.5\). The domain of the plots is \([-3,3]\times[-3,3]\times[-\pi,\pi)\). We fix \(w_1=w_3=1\) throughout the plots and vary \(w_2\). For \(\theta = k\pi/10\) with \( k = -10,\dots,10 \) the isocontours are drawn, similar to \Cref{fig:true_distance_plot_intro}.}
    \label{tab:balls}
\end{table*}

\begin{table*}
    \centering
    \begin{tabular}{c||c|c|c}
        & \(d\) & \(\rho_b\) & \(\rho_{b,sr}\) \\
        \hline \hline
        
        \(\zeta = 8\)
        & \includegraphics[width=\figwidth]{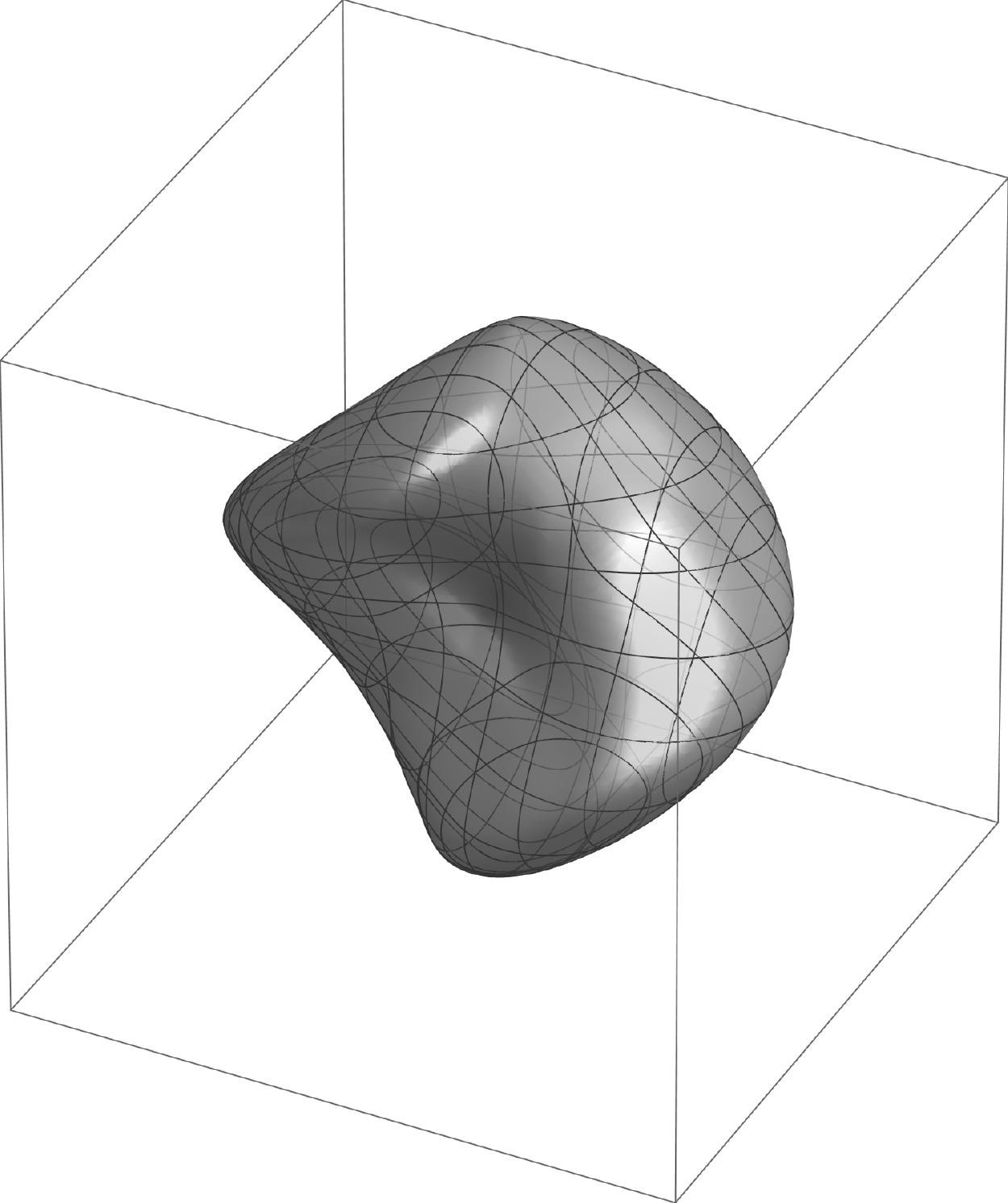} 
        & \includegraphics[width=\figwidth]{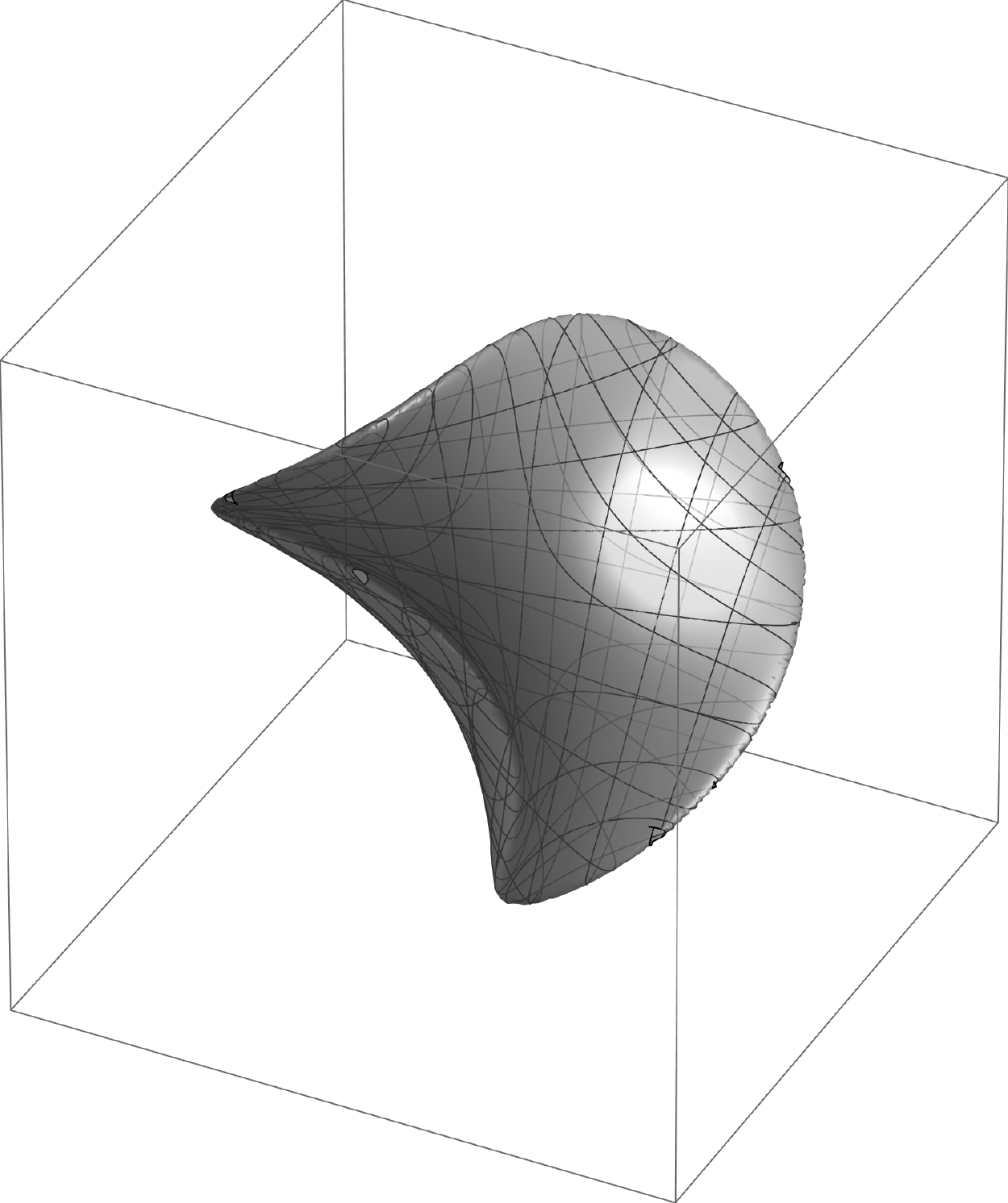}
        & \includegraphics[width=\figwidth]{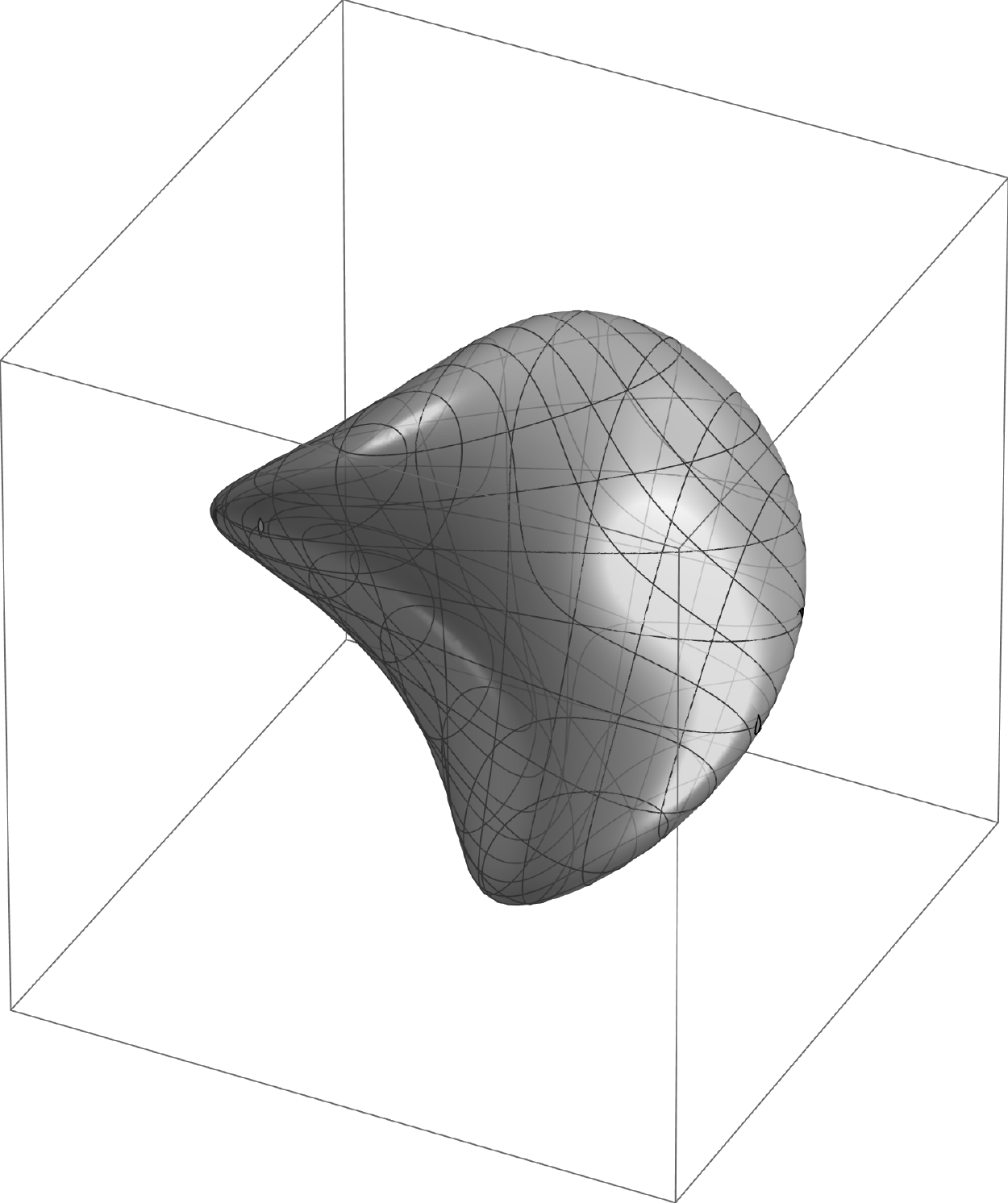}\\ 
         
        & \includegraphics[width=\figwidth]{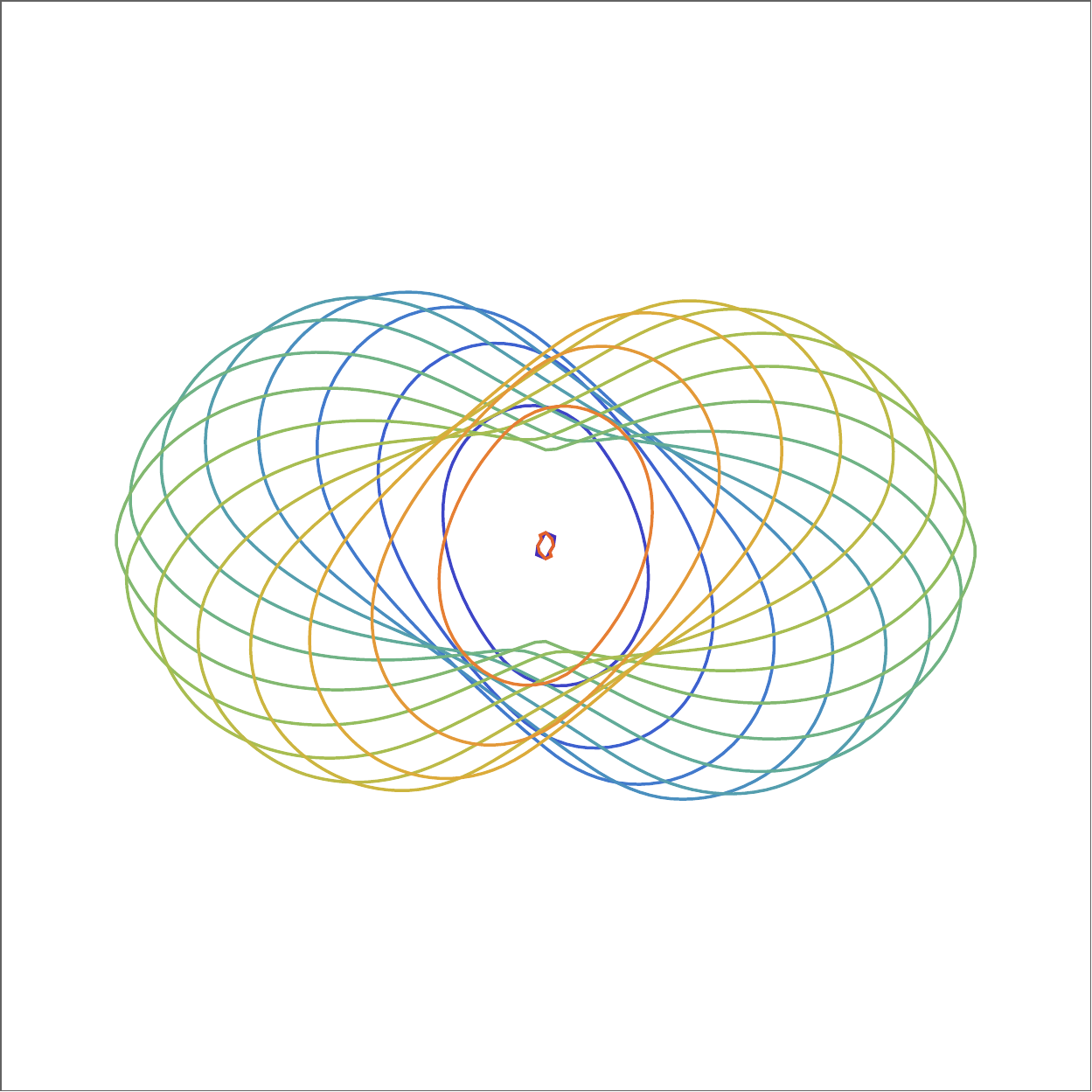} 
        & \includegraphics[width=\figwidth]{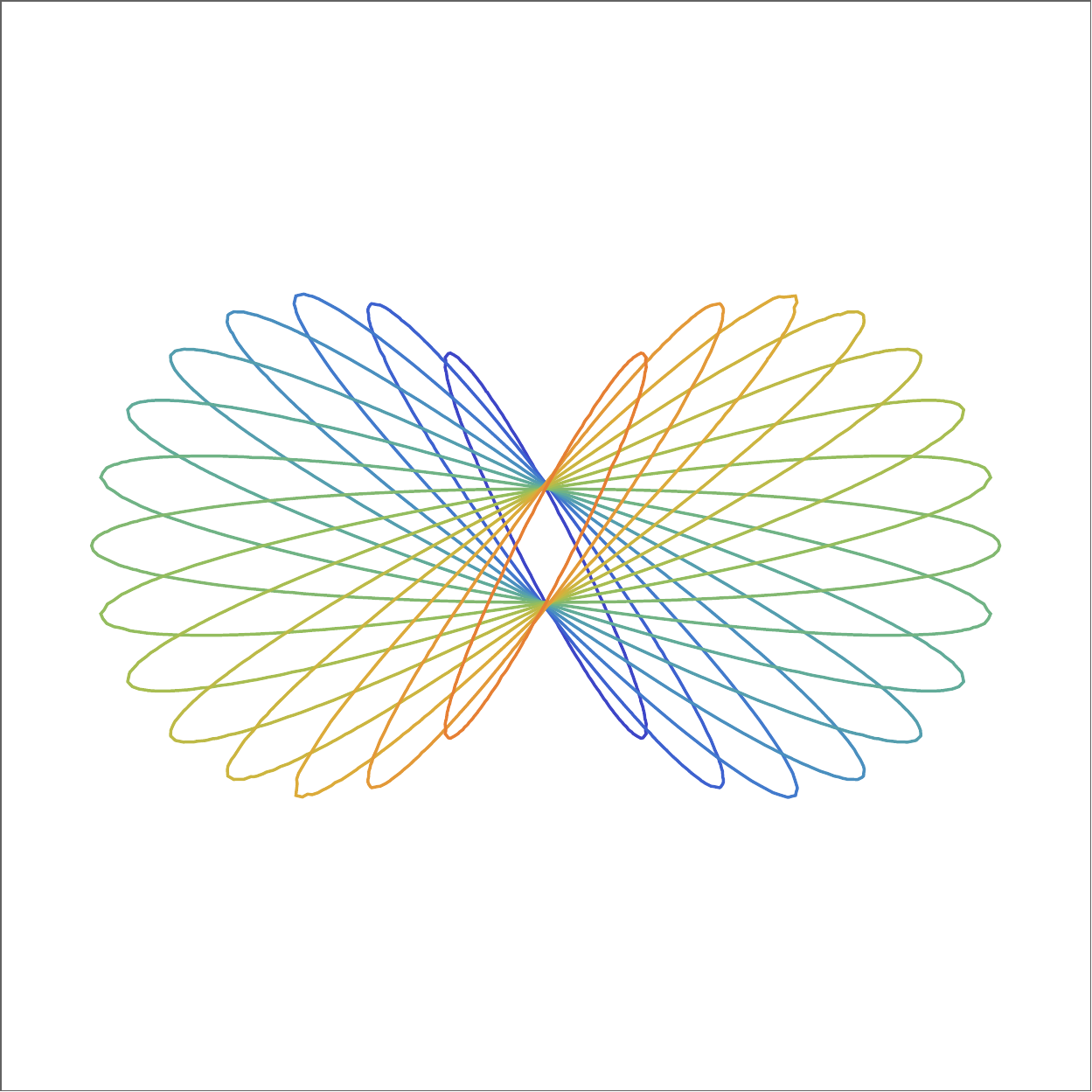} 
        & \includegraphics[width=\figwidth]{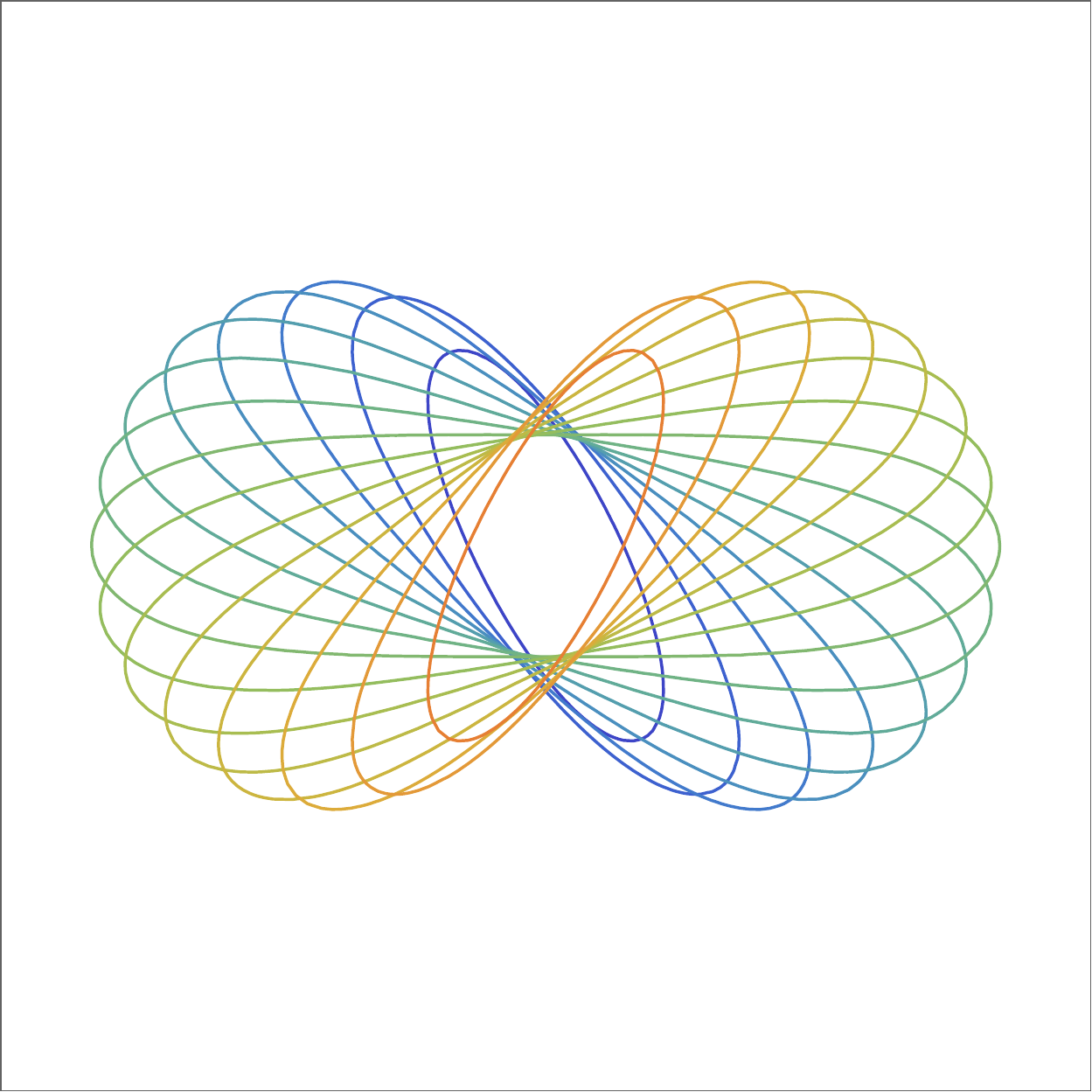}
    \end{tabular}
    \caption{The same as \Cref{tab:balls} but in the high spatially anisotropic case. Alongside the approximation \(\rho_b\) the sub-Riemannian distance approximation \(\rho_{b,sr}\) is plotted with \(\nu = 1.6\). We see that the isocontours of \(\rho_b\) are too ``thin'' compared to the isocontours of \(d\). The isocontours of \(\rho_{b,sr}\) are better in this respect. }
    \label{tab:balls_high_anisotropy}
\end{table*}

For every distance approximation (listed in \Cref{sec:distances}) we perform an empirical analysis in \Cref{sec:experiments} by seeing how the estimate changes the performance of the PDE-G-CNNs when applied to two datasets: the Lines dataset and publicly available DCA1 dataset.

\subsection{Contributions}

    In \Cref{res:kernels} we summarize how the nonlinear units in PDE-G-CNNs (described by morphological PDEs) are solved using morphological kernels and convolutions, which provides sufficient and essential background for the discussions and results in this paper. 
    
    The key contributions of this article are:
    \begin{itemize}
        \item \Cref{res:main_results} summarizes our mathematical analysis of the quality of the half-angle distance approximation \(\rho_b\) and its corresponding morphological kernel \(k_b\) in PDE-G-CNNs. We do this by comparing \(k_b\) to the exact morphological kernel \(k\). Globally, one can show that they both carry the same symmetries, and that for low spatial anisotropies \(\zeta\) they are almost indistinguishable. Furthermore, we show that locally both kernels are similar through an upper bound on the relative error. This improves upon results in \cite[Lem.20]{smets2022pdebased}.
        
        \item \Cref{tab:balls_high_anisotropy} demonstrates qualitatively that \(\rho_b\) becomes a poor approximation when the spatial anisotropy is high \(\zeta \gg 1\). In \Cref{res:rho_b_goes_certainly_bad} we underpin this theoretically and in \Cref{sec:quantitative_analysis_rho_b} we validate this observation numerically. This motivates the use of a sub-Riemannian approximation when \(\zeta\) is large.
        
        \item In \Cref{sec:distances} we introduce and derive a novel sub-Riemannian distance approximation \(\rho_{sr}\), that overcomes difficulties in previous existing sub-Riemannian kernel approximations \cite{bekkers2018nilpotent}. Subsequently, we propose our approximation \(\rho_{com}\) that combines the Riemannian and sub-Riemannian approximations into one that automatically switches to the approximation that is more appropriate depending on the metric parameters.
        
        \item \Cref{fig:baseline_dca1_scatter,fig:baseline_lines_scatter} shows that PDE-G-CNNs perform just as well as, and sometimes better than, G-CNNs and CNNs on the DCA1 and Lines dataset, while having the least amount of parameters.  \Cref{fig:lines_distance_approximations_scatter,fig:dca1_distance_approximations_scatter} depict an evaluation of the performance of PDE-G-CNNs when using the different distance approximations, again on  the DCA1 and Lines dataset. We observe that the new kernel \(\rho_{b,com}\) provides best results.
    \end{itemize}
    
    Our theoretical contributions are also relevant outside the context of geometric deep learning. Namely, it also applies to general geometric image processing \cite{bekkers2018nilpotent}, neurogeometry \cite{citti2006cortical,petitot2003neurogeometry}, and robotics \cite[Sec.6.8.4]{chirikjian2000engineering}. 
    
    In addition, \Cref{fig:association_field,fig:relation_distance_association_field,fig:relation_rho_c_association_field,fig:feature_maps} show a connection between the PDE-G-CNN framework with the theory of association fields from neurogeometry \cite{petitot2003neurogeometry,field1993contour}. Thereby, PDE-G-CNNs reveal improved geometrical interpretability, in comparison to existing convolution neural networks. In \Cref{app:geometric} we further clarify the geometrical interpretability.
    
\subsection{Outline}

    In \Cref{sec:preliminaries} a short overview of the necessary mathematical preliminaries is given. 
    \Cref{sec:morphological} collects some known results on the exact solution of erosion and dilation on the homogeneous space of two-dimensional positions and orientations \(\bbM_2\), and motivates the use of morphological kernels. 
    In \Cref{sec:distances} all approximative distances are listed. The approximative distances give rise to corresponding approximative morphological kernels. 
    The main theorem of this paper can be found in \Cref{sec:analysis} and consist of three parts, of which the proofs can be found in the relevant subsections. 
    The main theorem mostly concerns itself with the analysis of the approximative morphological kernel \(k_b\). 
    Experiments with various approximative kernels are done and the result can be found in \Cref{sec:experiments}. 
    Finally, we end the paper with a conclusion in \Cref{sec:conclusion}.
    
    \section{Preliminaries} \label{sec:preliminaries}
    
\noindent
\textbf{Coordinates on \(SE(2)\) and \(\bbM_2\).} Let \(G = SE(2) = \bbR^2 \rtimes SO(2)\) be the two-dimensional rigid body motion group. We identify elements \(g \in G\) with \(g \equiv (x,y,\theta) \in \bbR^2 \times \bbR/(2\pi \mathbb{Z})\), via the isomorphism \(SO(2) \cong \bbR/(2\pi\mathbb{Z})\). Furthermore, we always use the small-angle identification \( \bbR/(2\pi \mathbb{Z}) = [-\pi, \pi)\). 

For \(g_1=(x_1, y_1, \theta_1)\), \(g_2 = (x_2, y_2, \theta_2) \in SE(2)\) we have the group product
\begin{equation*}
\begin{split}
    g_1 g_2 := (
        &x_1 + x_2 \cos \theta_1 - y_2 \sin \theta_1, \\
        &y_1 + x_2 \sin \theta_1 + y_2 \cos \theta_1, \\
        &
        \theta_1 + \theta_2 \omod 2\pi
        ),
\end{split}
\end{equation*}
and the identity is \(e = (0,0,0)\). The rigid body motion group acts on the homogeneous space of two-dimensional positions and orientations \(\bbM_{2} = \bbR^2 \times S^1 \subseteq \bbR^2 \times \bbR^2\) by the left-action \(\odot\):
\begin{equation*}
    (\bx,\bR) \odot (\by,\bn)= (\bx + \bR\by,\bR\bn), 
\end{equation*}
with \((\bx,\bR) \in SE(2)\) and \((\by,\bn) \in \bbM_2\). If context allows it we may omit writing \(\odot\) for conciseness. By choosing the reference element \(\bp_0 = (0,0,(1,0)) \in \bbM_2\) we have:
\begin{equation} \label{eq:one_to_one}
    (x,y,\theta) \odot \bp_0 = (x,y,(\cos \theta, \sin \theta)).
\end{equation}
This mapping is a diffeomorphism and allows us to identify \(SE(2)\) and \(\bbM_2\). Thereby we will also freely use the \((x,y,\theta)\) coordinates on \(\bbM_2\). 

\noindent
\textbf{Morphological group convolution.} Given functions \(f_1,f_2:\bbM_2 \to \bbR\)  we define their morphological convolution (or `infimal convolution') \cite{schmidt2016morphological,boomgaard1994morphological} by
\begin{equation} \label{eq:morphological_convolution}
    (f_1 \mathbin{\square} f_2)(\bp)= \inf \limits_{g \in G} \left\{f_1(g^{-1} \bp) + f_2(g \, \bp_0)\right\}
\end{equation}

\noindent
\textbf{Left-invariant (co-)vector fields on \(\bbM_2\). } Throughout this paper we shall rely on the following basis of left-invariant vector fields:
\begin{equation*}
\begin{split}
    \cA_{1} &= \cos \theta \pd_x + \sin \theta \pd_y, \\
    \cA_{2} &= -\sin \theta \pd_x + \cos \theta \pd_y, \textrm{ and }\\ \cA_{3} &= \pd_{\theta}.
\end{split}
\end{equation*}
The dual frame \(\omega^i\) is given by \(\langle \omega^i, \cA_{j}\rangle =\delta^{i}_j\), i.e:
\begin{equation*}
\begin{split}
    \omega^1 &=  \cos \theta {\rm d}x + \sin \theta {\rm d}y, \\
    \omega^2 &= -\sin \theta {\rm d}x  +\cos \theta {\rm d}y, \textrm{ and } \\
    \omega^3 &= {\rm d}\theta.
\end{split}
\end{equation*}

\noindent
\textbf{Metric tensor fields on \(\bbM_2\). } We consider the following left-invariant metric tensor fields:
\begin{equation} \label{eq:diagonal_metric} 
\begin{split}
    \cG = \sum_{i=1}^{3} w_i^2 \ \omega^{i} \otimes \omega^i
\end{split} 
\end{equation}
and write \(\Vert\dot{\bp}\Vert=\sqrt{\cG_{\bp}(\dot{\bp},\dot{\bp})}\). Here \(w_i > 0\) are the metric parameters. We also use the dual norm \(\Vert\hat \bp\Vert_* = \sup \limits_{\dot \bp \in T_\bp \bbM_2} \frac{\Ang{\dot \bp, \hat \bp}}{\Vert\dot \bp\Vert}\). We will assume, without loss of generality, that \(w_2 \geq w_1\) and introduce the ratio 
\begin{equation} \label{eq:spatial_anisotropy}
    \zeta := \frac{w_2}{w_1} \geq 1
\end{equation}
that is called the \textit{spatial anisotropy} of the metric.
\noindent
\textbf{Distances on \(\bbM_2\).} The left-invariant metric tensor field \(\cG\) on \(\bbM_2\) induces a left-invariant distance (`Riemannian metric') \(d:\mathbb{M}_{2} \times\mathbb{M}_2 \to \mathbb{R}_{\geq0}\) by
\begin{equation} \label{eq:riemannian_distance}
    d_{\cG}(\bp,\bq)=
    \inf_{\g \in \Gamma_t(\bp,\bq)}\Par{L_{\cG}(\g) := \int_0^t \Vert\dot{\g}(s)\Vert_{\cG}\, {\rm d}s},
\end{equation}
where \(\Gamma_t(\bp, \bq)\) is the set piecewise $C^1$-curves \(\g\) in \(\bbM_2\) with \(\g(0)=\bp\) and \(\g(t)=\bq\). The right-hand side does not depend on \(t>0\), and we may set \(t=1\).

If no confusion can arise we omit the subscript \(\cG\) and write \(d, L, \Vert \cdot \Vert\) for short. The distance being left-invariant means that for all \(g\in SE(2)\), \(\bp_1,\bp_2 \in \bbM_2\) one has \(d(\bp,\bq)=d(g \bp,g \bq)\). We will often use the shorthand notation \(d(\bp):=d(\bp, \bp_0)\). 

We often consider the sub-Riemannian case arising when \(w_2 \to \infty\). Then we have ``infinite cost'' for sideways motion and the only ``permissible'' curves \(\gamma\) are the ones for which \(\dot \g(t) \in H\) where \(H := \textrm{span}\{\cA_1, \cA_3\} \subset T\bbM_{2}\). This gives rise to a new notion of distance, namely the sub-Riemannian distance \(d_{sr}\):
\begin{equation} \label{eq:sub_riemannian_distance}
    d_{sr}(\bp,\bq)=
    \inf_{\substack{\g \in \Gamma_t(\bp,\bq), \\ \dot \g \in H}} L_{\cG}(\g).
\end{equation}
One can show rigorously that when \(w_2 \to \infty\) the Riemannian distance \(d\) tends to the sub-Riemannian distance \(d_{sr}\), see for example \cite[Thm.2]{duits2018optimal}. 

\noindent
\textbf{Exponential and Logarithm on \(SE(2)\).}
The exponential map $\exp(c^1 \pd_x \vert_e + c^2 \pd_y \vert_e + c^3 \pd_\theta \vert_e) = (x,y,\theta) \in SE(2)$ is given by:
\begin{equation*}
\begin{split}
    x &= \Par{c^1 \cos \tfrac{c^3}{2} - c^2 \sin \tfrac{c^3}{2}}\sinc \tfrac{c^3}{2}, \\
    y &= \Par{c^1 \sin \tfrac{c^3}{2} + c^2 \cos \tfrac{c^3}{2}}\sinc \tfrac{c^3}{2}, \\
    \theta &= c^3 \omod 2\pi.
\end{split}
\end{equation*}
And the logarithm: 
$\log (x,y,\theta) = c^1 \pd_x\vert_e + c^2 \pd_y\vert_e + c^3 \pd_\theta\vert_e \in T_eSE(2)$:
\begin{equation} \label{eq:logarithm_coordinates}
\begin{split}
    c^1 &= \frac{x\cos \tfrac{\theta}{2} + y \sin \tfrac{\theta}{2}}{\sinc \tfrac{\theta}{2}}, \\
    c^2 &= \frac{-x \sin\tfrac{\theta}{2} + y \cos \tfrac{\theta}{2}}{\sinc\tfrac{\theta}{2}}, \\
    c^3 &= \theta.
\end{split}
\end{equation}
By virtue of \cref{eq:one_to_one} we will freely use the logarithm coordinates on \(\bbM_2\).

    \section{Erosion and Dilation} \label{sec:morphological}
    
We will be considering the following Hamilton-Jacobi equation on \(\bbM_2\):
\begin{equation} \label{eq:morphss}
\begin{cases}
    \pdv{W_\a}{t} &= \pm \frac{1}{\a} \Norm{\nabla W_{\a}}^\a = \pm \cH_{\a}(dW_\a) \\
    \At{W_\a}_{t=0} &= U, 
\end{cases}
\end{equation}
with the Hamiltonian \(\cH_\a : T^*\bbM_2 \to \bbR_{\geq 0}\):
\begin{equation*}
    \cH_{\a}(\hat{\bp}) 
    = \cH_{\a}^{1D}(\Vert\hat{\bp}\Vert) 
    = \frac{1}{\a}\Vert\hat{\bp}\Vert_*^{\a}, 
\end{equation*}
and where \(W_\a\) the viscosity solutions \cite{evans2010partial} obtained from the initial condition \(U \in C( \bbM_{2},\bbR)\). Here the \(+\)sign is a dilation scale space and the \(-\)sign is an erosion scale space \cite{schmidt2016morphological,boomgaard1994morphological}. If confusion cannot arise, we omit the superscript \(1D\). Erosion and dilation correspond to min- and max-pooling, respectively. The Lagrangian \(\cL_\a : T\bbM_2 \to \bbR_{\geq 0}\) corresponding with this Hamiltonian is obtained by taking the Fenchel transform of the Hamiltonian:
\begin{equation*}
    \cL_{\a}(\dot{\bp})
    = \cL^{1D}_{\a}(\Vert\dot{\bp}\Vert) 
    =\frac{1}{\beta} \Vert\dot{\bp}\Vert^\beta
\end{equation*}
with \(\beta\) such that \(\frac{1}{\a} + \frac{1}{\beta} = 1\). Again, if confusion cannot arise, we omit the subscript \(\a\) and/or superscript \(1D\). We deviate from our previous work by including the factor \(\frac{1}{\a}\) and working with a power of \(\a\) instead of \(2\a\). We do this because it simplifies the relation between the Hamiltonian and Lagrangian.

The following proposition collects standard results in terms of the solutions of Hamilton-Jacobi equations on manifolds \cite{diop2021extension,fathi2007weakkam,azagra2005nonsmooth}, thereby generalizing results on \(\bbR^2\) to \(\bbM_2\). 
\begin{proposition} 
    \label{res:kernels} 
    \textbf{\emph{\mbox{(Solution erosion \& dilation})}} \\[6pt]
    Let \(\a > 1\). The viscosity solution \(W_\a\) of the erosion PDE \eqref{eq:morphss} is given by
    \begin{align}
        W_\a(\bp,t) 
        &= \inf_{\substack{\bq \in \bbM_2, \\ \g \in \Gamma_t(\bp, \bq)}} U(\bq) + \int \limits_0^{t} \cL_{\a}(\dot{\g}(s))\, {\rm d}s \label{eq:weakKAM_solution}\\
        &= \inf_{\bq \in \bbM_2} U(\bq) + t \cL^{1D}_\a (d(\bp, \bq)/t) \label{eq:distance_solution}\\
        &=(k_t^{\a} \mathbin{\square} U)(\bp) \label{eq:convolution_solution}
    \end{align}
    where the morphological kernel \(k_t^{\a}: \bbM_{2} \to \bbR_{\geq 0}\) is defined as:
    \begin{equation} \label{eq:morphological_kernel}
        k_{t}^{\a}= t \cL^{1D}_\a (d/t) = \frac{t}{\beta} \Par{\frac{d(\bp_0, \cdot)}{t}}^\beta.
    \end{equation}
    Furthermore, the Riemannian distance \(d:=d(\bp_0,\cdot)\) is the viscosity solution of the eikonal PDE 
    \begin{equation} \label{eik}
        \Norm{\nabla d}^2 = \sum_{i=1}^3 (\cA_{i} d / w_i)^2=1
    \end{equation}
    with boundary condition $d(\bp_0)=0$. 
    Likewise the viscosity solution of the dilation PDE is
    \begin{equation} \label{eq:dilation_solution}
        W_{\a}(\bp,t)=-(k_t^{\a} \mathbin{\square} -U)(\bp)
    \end{equation}
\end{proposition}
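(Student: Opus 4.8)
The proof assembles standard facts from weak KAM theory and first-order Hamilton--Jacobi equations, specialised to $\bbM_2$.

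\emph{Fenchel duality and the Lax--Oleinik representation.} First I would record that $\cL_\a$ is the fibrewise Fenchel transform of $\cH_\a$. Since both are radial, $\cH_\a(\hat\bp)=\cH_\a^{1D}(\Norm{\hat\bp}_*)$ and $\cL_\a(\dot\bp)=\cL_\a^{1D}(\Norm{\dot\bp})$, and the dual norm of $\cG$ is again a norm, so it suffices to check the elementary $1$D Young identity $\frac1\beta r^\beta=\sup_{s\ge 0}\Par{rs-\frac1\a s^\a}$ when $\frac1\a+\frac1\beta=1$ (note $\beta\in(1,\infty)$ since $\a>1$). With this in hand, \eqref{eq:weakKAM_solution} is exactly the value function of the action associated with the Lagrangian $\cL_\a$, and the cited results \cite{diop2021extension,fathi2007weakkam,azagra2005nonsmooth} identify it as the unique viscosity solution of $\pdv{W_\a}{t}+\cH_\a(dW_\a)=0$ with $W_\a|_{t=0}=U$, which is the erosion PDE \eqref{eq:morphss}. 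The hypotheses to verify are that $(\bbM_2,\cG)$ is a complete Riemannian manifold — immediate since $\bbM_2\cong SE(2)$ through \eqref{eq:one_to_one} carries a left-invariant metric — and that $\cL_\a$ is convex and superlinear in the fibre, which is clear. The genuine subtlety, and the step I expect to be the main obstacle, is that $\cL_\a(\dot\bp)=\frac1\beta\Norm{\dot\bp}^\beta$ is \emph{not a Tonelli Lagrangian in the strict sense}: for $\beta\neq 2$ it is not $C^2$ with positive-definite fibre Hessian along the zero section. One must therefore either invoke the forms of Hopf--Lax / weak KAM theory that require only strict convexity and superlinearity (as in \cite{azagra2005nonsmooth,diop2021extension}), or prove the representation directly from a comparison principle using completeness and the left-invariance of the construction; I would flag this explicitly and defer to the cited literature.

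\emph{From curves to endpoints, and to convolution.} To pass from \eqref{eq:weakKAM_solution} to \eqref{eq:distance_solution} I would split the infimum as $\inf_{\bq}\bigl(U(\bq)+\inf_{\g\in\Gamma_t(\bp,\bq)}\int_0^t\cL_\a^{1D}(\Norm{\dot\g(s)})\,{\rm d}s\bigr)$ and, for fixed endpoints, apply Jensen's inequality with the uniform measure ${\rm d}s/t$ together with monotonicity of $\cL_\a^{1D}$ and $\int_0^t\Norm{\dot\g(s)}\,{\rm d}s\ge d(\bp,\bq)$ to obtain $\int_0^t\cL_\a^{1D}(\Norm{\dot\g(s)})\,{\rm d}s\ge t\,\cL_\a^{1D}(d(\bp,\bq)/t)$, with equality attained by a constant-speed minimising geodesic (which exists by Hopf--Rinow). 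For \eqref{eq:convolution_solution} I would use that $g\mapsto g\,\bp_0$ is a bijection $SE(2)\to\bbM_2$ (the stabiliser of $\bp_0$ is trivial): writing $\bq=g\,\bp_0$ and using left-invariance and symmetry of $d$, $t\,\cL_\a^{1D}(d(\bp,\bq)/t)=\frac{t}{\beta}\bigl(d(\bp_0,g^{-1}\bp)/t\bigr)^\beta=k_t^\a(g^{-1}\bp)$ by \eqref{eq:morphological_kernel}, so \eqref{eq:distance_solution} equals $\inf_{g\in SE(2)}\{k_t^\a(g^{-1}\bp)+U(g\,\bp_0)\}=(k_t^\a\mathbin{\square}U)(\bp)$.

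\emph{Eikonal equation and dilation.} For \eqref{eik} I would expand the differential of $d:=d(\bp_0,\cdot)$ in the left-invariant coframe: since $\langle\omega^i,\cA_j\rangle=\delta^i_j$ one has ${\rm d}d=\sum_i(\cA_i d)\,\omega^i$, and the dual norm of $\cG=\sum_i w_i^2\,\omega^i\otimes\omega^i$ is $\Norm{\sum_i\xi_i\omega^i}_*^2=\sum_i\xi_i^2/w_i^2$, whence $\Norm{\nabla d}^2=\sum_i(\cA_i d/w_i)^2$; that this equals $1$ in the viscosity sense with $d(\bp_0)=0$ is the standard characterisation of the distance-to-a-point on a complete Riemannian manifold as the unique suitably normalised viscosity solution of the eikonal equation, for which I would again cite \cite{azagra2005nonsmooth,diop2021extension}. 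Finally \eqref{eq:dilation_solution} follows by the substitution $W_\a\mapsto-W_\a$: since $\cH_\a$ is even in its argument, $-W_\a$ solves the erosion PDE with initial datum $-U$ (viscosity sub- and supersolutions exchange roles under negation, compatibly with the sign flip in \eqref{eq:morphss}), so by the case already proven $-W_\a=k_t^\a\mathbin{\square}(-U)$, i.e. $W_\a=-(k_t^\a\mathbin{\square}(-U))$.
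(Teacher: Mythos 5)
Your proposal is correct and follows essentially the same route as the paper's proof: cite the weak KAM / Lax--Oleinik results of Fathi and Diop et al.\ for \eqref{eq:weakKAM_solution}, reduce to \eqref{eq:distance_solution} via Jensen's inequality and monotonicity of $\cL_\a^{1D}$, pass to \eqref{eq:convolution_solution} by left-invariance and the identification $g\mapsto g\,\bp_0$, cite the nonsmooth-analysis literature for the eikonal characterisation, and obtain dilation by negating the erosion solution with initial datum $-U$. The only differences are cosmetic --- the paper achieves equality in Jensen's bound by restricting to constant-speed reparameterisations rather than invoking Hopf--Rinow, and your explicit flagging of the non-Tonelli nature of $\cL_\a$ for $\beta\neq 2$ is a welcome extra precision the paper leaves implicit in its ``weak conditions on the Hamiltonian''.
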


\begin{proof}
    It is shown by Fathi in \cite[Prop.5.3]{fathi2007weakkam} that \eqref{eq:weakKAM_solution} is a viscosity solution of the Hamilton-Jacobi equation \eqref{eq:morphss} on a complete connected Riemannian manifold without boundary, under some (weak) conditions on the Hamiltonian and with the initial condition \(U\) being Lipschitz. In \cite[Thm.2]{diop2021extension} a similar statement is given but only for compact connected Riemannian manifolds, again under some weak conditions on the Hamiltonian but without any on the initial condition. Next, we employ these existing results and provide a self-contained proof of \eqref{eq:distance_solution} and \eqref{eq:convolution_solution}.
    
    Because we are looking at a specific class of Lagrangians, the solutions can be equivalently written as \eqref{eq:distance_solution}. In \cite[Prop.2]{diop2021extension} this form can also be found. Namely, the Lagrangian \(\cL_\a^{1D}\) is convex for \(\a > 1\), so for any curve \(\g \in \Gamma_t := \Gamma_t(\bp, \bq)\) we have by direct application of Jensen's inequality (omitting the superscript \(1D\)):
    \begin{equation*}
        \cL_\a \Par{ \frac{1}{t} \int_0^t \Vert\dot \g(s)\Vert {\rm d}s} \leq \frac{1}{t} \int_0^t \cL_\a(\Vert\dot \g(s)\Vert)\ \rm{d}s,
    \end{equation*}
    with equality if \(\Vert\dot \g\Vert\) is constant. This means that:
    \begin{equation} \label{eq:jensens2}
        \inf_{\g \in \Gamma_t} t \cL_\a \Par{ \frac{L(\g)}{t} } \leq \inf_{\g \in \Gamma_t} \int_0^t \cL_\a(\Vert\dot \g(s)\Vert)\ {\rm d}s,
    \end{equation}
    where \(L(\g):=L_{\mathcal{G}}(\gamma)\), recall (\ref{eq:riemannian_distance}), is the length of the curve \(\g\). Consider the subset of curves with constant speed \(\tilde \Gamma_t = \{ \g \in \Gamma_t \mid \Vert\dot \g\Vert = L(\g)/t\} \subset \Gamma_t\). Optimizing over a subset can never decrease the infimum so we have:
    \begin{equation*}
        \inf_{\g \in \Gamma_t} \int_0^t \cL_\a(\Vert\dot \g(s)\Vert) {\rm d}s \leq \inf_{\g \in \tilde \Gamma_t} \int_0^t \cL_\a\Par{\frac{L(\g)}{t}} {\rm d}s
    \end{equation*}
    The r.h.s of this equation is equal to the l.h.s of equation \eqref{eq:jensens2} as the length of a curve is independent of its parameterization. Thereby we have equality in \eqref{eq:jensens2}.
    By monotonicity of \(\cL_\a\) on \(\bbR_{>0}\) we may then conclude that:
    \begin{equation*}
    \begin{split}
        \inf_{\g \in \Gamma_t} t \cL_{\a} \Par{ L(\g)/t } 
        &= t \cL_{\a} \Par{ \inf_{\g \in \Gamma_t} L(\g)/t } \\
        &= t \cL_{\a} (d(\bp, \bq)/t).
    \end{split}
    \end{equation*}
    
    That we can write the solution as \eqref{eq:convolution_solution} is a consequence of the left-invariant metric on the manifold. A similar derivation can be found in \cite[Thm.30]{smets2022pdebased}:
    \begin{equation*}
    \begin{split}
        W_\a(\bp,t) 
        &= \inf_{\bq \in \bbM_2} U(\bq) + t \cL_\a(d(\bp, \bq)/t) \\
        &= \inf_{g \in G} U(g \bp_0) + t \cL_\a(d(\bp, g \bp_0)/t) \\
        &= \inf_{g \in G} U(g \bp_0) + t \cL_\a(d(g^{-1} \bp, \bp_0)/t) \\
        &= \inf_{g \in G} U(g \bp_0) + k_t^\a(g^{-1} \bp) \\
        &= (k_t^\a \mathbin{\square} U)(\bp)
    \end{split}
    \end{equation*}

    It is shown in \cite[Thm.6.24]{azagra2005nonsmooth} for complete connected Riemannian manifolds that the distance map \( d(\bp) \) is a viscosity solution of the Eikonal equation \eqref{eik}.
    
    Finally, solutions of erosion and dilation PDEs correspond to each other. If \(W_\a\) is the viscosity solution of the erosion PDE with initial condition \(U\), then \(-W_\a\) is the viscosity solution of the dilation PDE, with initial condition \(-U\). This means that the viscosity solution of the dilation PDE is given by \eqref{eq:dilation_solution}.
\end{proof}

    
    \section{Distance Approximations} \label{sec:distances}

To calculate the morphological kernel \(k_t^\a\) \eqref{eq:morphological_kernel} we need the the exact Riemannian distance \(d\) \eqref{eq:riemannian_distance}, but calculating this is computationally demanding. To alleviate this problem we approximate the exact distance \(d(\bp_0, \cdot)\) with approximative distances, denoted with \(\rho: \bbM^2 \to \bbR_{\geq 0}\), which are computationally cheap. To this end we define the logarithmic distance approximation \(\rho_c\), as explained in \cite[Def.19]{smets2022pdebased} and \cite[Def.6.1.2]{lupi2021kernel}, by
\begin{equation} \label{eq:logarithmic_distance_approximation}
    \rho_c := \sqrt{ (w_1 c^1)^2 + (w_2 c^2 )^2 + (w_3 c^3 )^2}.
\end{equation}

Note that all approximative distances \(\rho\) can be extended to something that looks like a metric on \(\bbM_2\). For example we can define:
\begin{equation*}
    \rho(g_1 \bp_0,\ g_2 \bp_0) := \rho(g_1^{-1} g_2 \bp_0).
\end{equation*}
But this is almost always not a true metric in the sense that it does not satisfy the triangle inequality. So in this sense an approximative distance is not necessarily a true distance. However, we will keep referring to them as approximative distances as we only require them to look like the exact Riemannian distance \(d(\bp_0, \cdot)\).

As already stated in the introduction, Riemannian distance approximations such as \(\rho_c\) begin to fail in the high spatial anisotropy cases \(\zeta \gg 1\). For these situations we need sub-Riemannian distance approximations. In previous literature two such sub-Riemannian approximations are suggested. The first one is standard \cite[Sec.6]{terelst1998weighted}, the second one is a modified smooth version \cite[p.284]{duits2010leftinvariant}, also seen in \cite[eq.14]{bekkers2018nilpotent}:
\begin{align} 
    &\sqrt{ \sqrt{\nu w_1^2w_3^2}\Abs{ c^2}  + (w_1 c^1)^2 + (w_3 c^3)^2 } \label{eq:approximative_sub_riemannian_distance_1}\\
    &\sqrt[4]{\nu w_1^2w_3^2 \Abs{ c^2} ^2 + ((w_1 c^1)^2 + (w_3 c^3)^2)^2} \label{eq:approximative_sub_riemannian_distance_2}
\end{align}
In \cite{bekkers2018nilpotent} \(\nu \approx 44\) is empirically suggested. Note that the sub-Riemannian approximations rely on the assumption that \(w_2 \geq w_1\).

However, they both suffer from a major shortcoming in the
interaction between \(w_3\) and \(c^2\). When we let \(w_3 \to 0\) both approximations suggest that it becomes arbitrarily cheap to move in the \(c^2\) direction which is undesirable as this deviates from the exact distance \(d\): moving spatially will always have a cost associated with it determined by at least \(w_1\).

To make a proper sub-Riemannian distance estimate we will use the Zassenhaus formula, which is related to the Baker–Campbell–Hausdorff formula:
\begin{equation}
    e^{t(X + Y)} = e^{tX} e^{tY} e^{-\frac{t^2}{2} \Bra{X,Y}} e^{\cO(t^3)} \dots,
\end{equation}
where we have used the shorthand \(e^x := \exp(x)\). Filling in \(X = A_1\) and \(Y = A_3\) and neglecting the higher order terms gives:
\begin{equation}
    e^{t(A_1 + A_3)} \approx e^{tA_1} e^{tA_3} e^{\frac{t^2}{2} A_2},
\end{equation}
or equivalently:
\begin{equation}
    e^{\frac{t^2}{2} A_2} \approx e^{-tA_3} e^{-tA_1} e^{t(A_1 + A_3)}.
\end{equation}
This formula says that one can successively follow exponential curves in the ``legal'' directions \(\cA_1\) and \(\cA_3\) to effectively move in the ``illegal'' direction of \(\cA_2\). Taking the lengths of these curves and adding them up gives an approximative upper bound on the sub-Riemannian distance:
\begin{equation}
\begin{split}
    d_{sr}(e^{\frac{t^2}{2} A_2}) 
    &\lessapprox \Par{w_1 + w_3 +  \sqrt{w_1^2 + w_3^2}} \Abs{t} \\
    &\leq 2\Par{w_1 + w_3} \Abs{t}.
\end{split}
\end{equation}
Substituting \(t \to \sqrt{2\Abs{t}}\) gives:
\begin{equation}
    d_{sr}(e^{tA_2}) \lessapprox 2\sqrt{2}\Par{w_1 + w_3} \sqrt{\Abs{t}}.
\end{equation}
This inequality, together with the smoothing trick to go from \eqref{eq:approximative_sub_riemannian_distance_1} to \eqref{eq:approximative_sub_riemannian_distance_2}, inspires then the following sub-Riemannian distance approximation:
\begin{equation} \label{eq:new_sub_riemannian_distance}
    \scalemath{0.8}{ 
        \rho_{c, sr} := \sqrt[4]{ \Par{\nu (w_1 + w_3)}^4 \Abs{ c^2} ^2 + ((w_1 c^1)^2 + (w_3 c^3)^2)^2} 
    },
\end{equation}
for some \(0<\nu<2\sqrt{2}\) s.t. the approximation is tight. We empirically suggest \(\nu \approx 1.6\), based on a numerical analysis that is tangential to \cite[Fig.3]{bekkers2018nilpotent}. Notice that this approximation does not break down when we let \(w_3 \to 0\).

Furthermore, in view of contraction of  
$SE(2)$ to the Heisenberg group \(H_3\) \cite[Sec.5.2]{duits2010leftinvariant}, and the exact fundamental solution \cite[Eq.27]{duits2007scale} of the Laplacian on $H_3$  (where the norm $\rho_{c,sr}$  appears squared in the numerator with $1=w_1=w_3=\nu$) we expect $\nu \geq 1$. 

\Cref{tab:old_subriemannian_appropriate} shows that both the old sub-Riemannian approximation \eqref{eq:approximative_sub_riemannian_distance_2} and new approximation \eqref{eq:new_sub_riemannian_distance} are appropriate in cases such as \(w_3=1\). \Cref{tab:old_subriemannian_not_appropriate} shows that the old approximation breaks down when we take \(w_3 = 0.5\), and that the new approximation behaves more appropriate.

\renewcommand{\figwidth}{0.2\linewidth}
\begin{table*}
    \centering
    \begin{tabular}{c|c|c}
        \(d\) & \(\rho_{b,sr,old}\) & \(\rho_{b,sr}\) \\
        \hline \hline
        
        \includegraphics[width=\figwidth]{figures/clean_exact_w=1,8,1_svg-tex.pdf} 
        & \includegraphics[width=\figwidth]{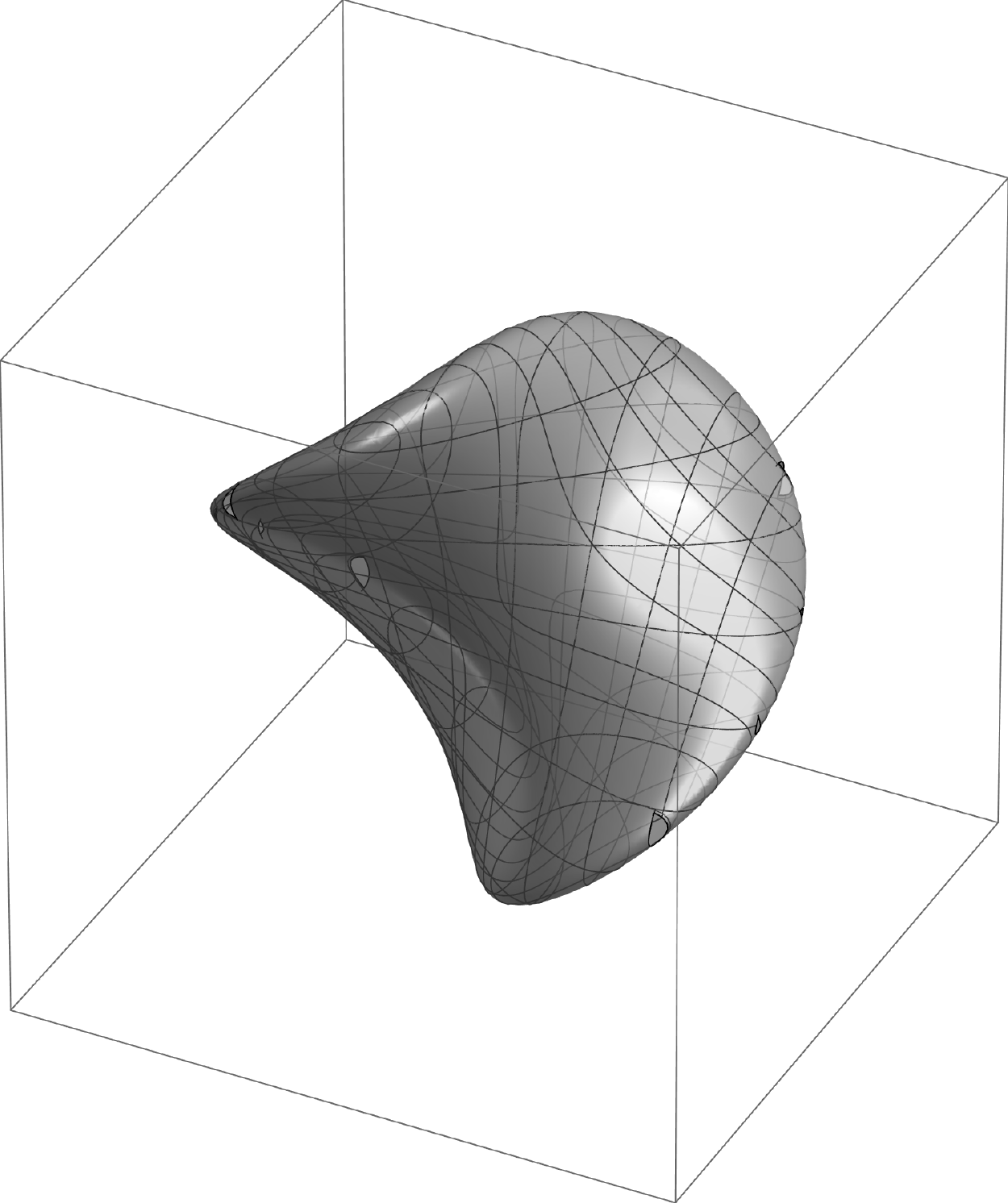}
        & \includegraphics[width=\figwidth]{figures/clean_rho_b_sr_w=1,8,1_svg-tex.pdf}\\ 
         
        \includegraphics[width=\figwidth]{figures/clean_flat_exact_w=1,8,1_svg-tex.pdf} 
        & \includegraphics[width=\figwidth]{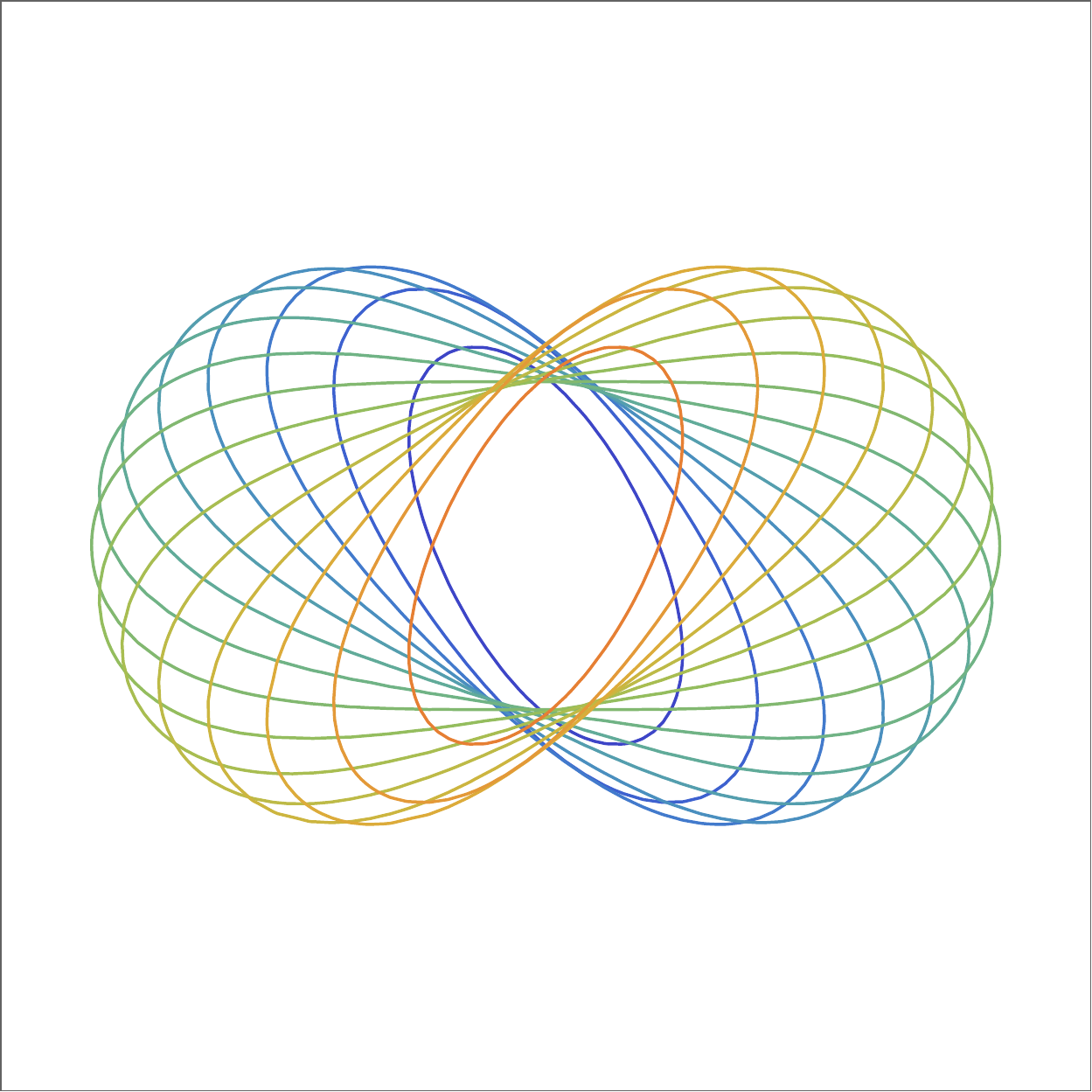} 
        & \includegraphics[width=\figwidth]{figures/clean_flat_rho_b_sr_w=1,8,1_svg-tex.pdf}
    \end{tabular}
    \caption{Same situation and metric parameters as \Cref{tab:balls_high_anisotropy}, i.e. \(w_1 = w_3 = 1\) and \(w_2 = 8\). We see the exact distance \(d\) alongside the old sub-Riemannian approximation \(\rho_{b,sr,old}\) \eqref{eq:approximative_sub_riemannian_distance_2} and new approximation \(\rho_{b,sr}\) \eqref{eq:new_sub_riemannian_distance}. For the old approximation we chose \(\nu=44\), as suggested in \protect\cite{bekkers2018nilpotent}, and for the new one \(\nu = 1.6\). We see that in this case both approximations are appropriate.}
    \label{tab:old_subriemannian_appropriate}
\end{table*}

\renewcommand{\figwidth}{0.2\linewidth}
\begin{table*}
    \centering
    \begin{tabular}{c|c|c}
        \(d\) & \(\rho_{b,sr,old}\) & \(\rho_{b,sr}\) \\
        \hline \hline
        
        \includegraphics[width=\figwidth]{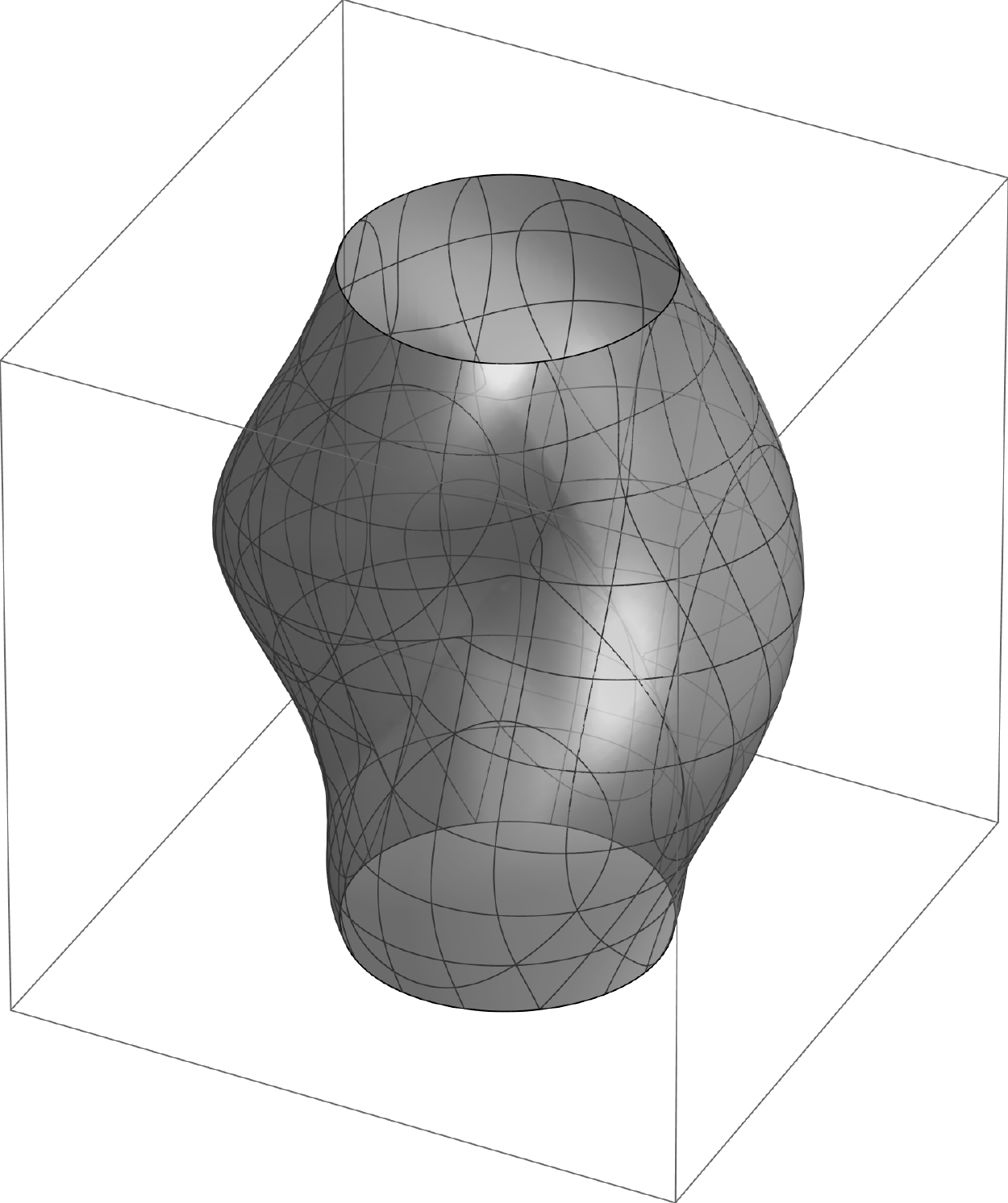} 
        & \includegraphics[width=\figwidth]{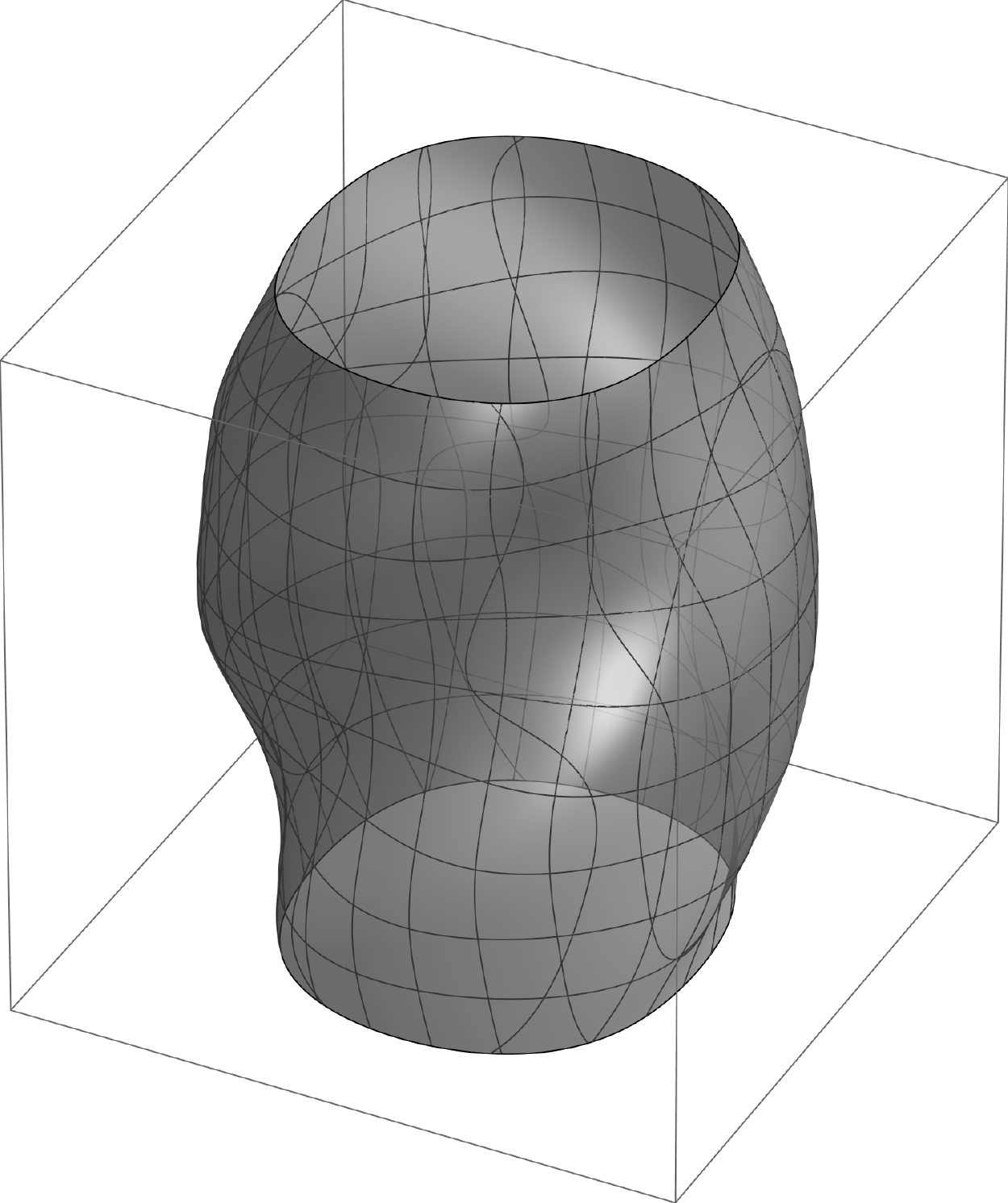}
        & \includegraphics[width=\figwidth]{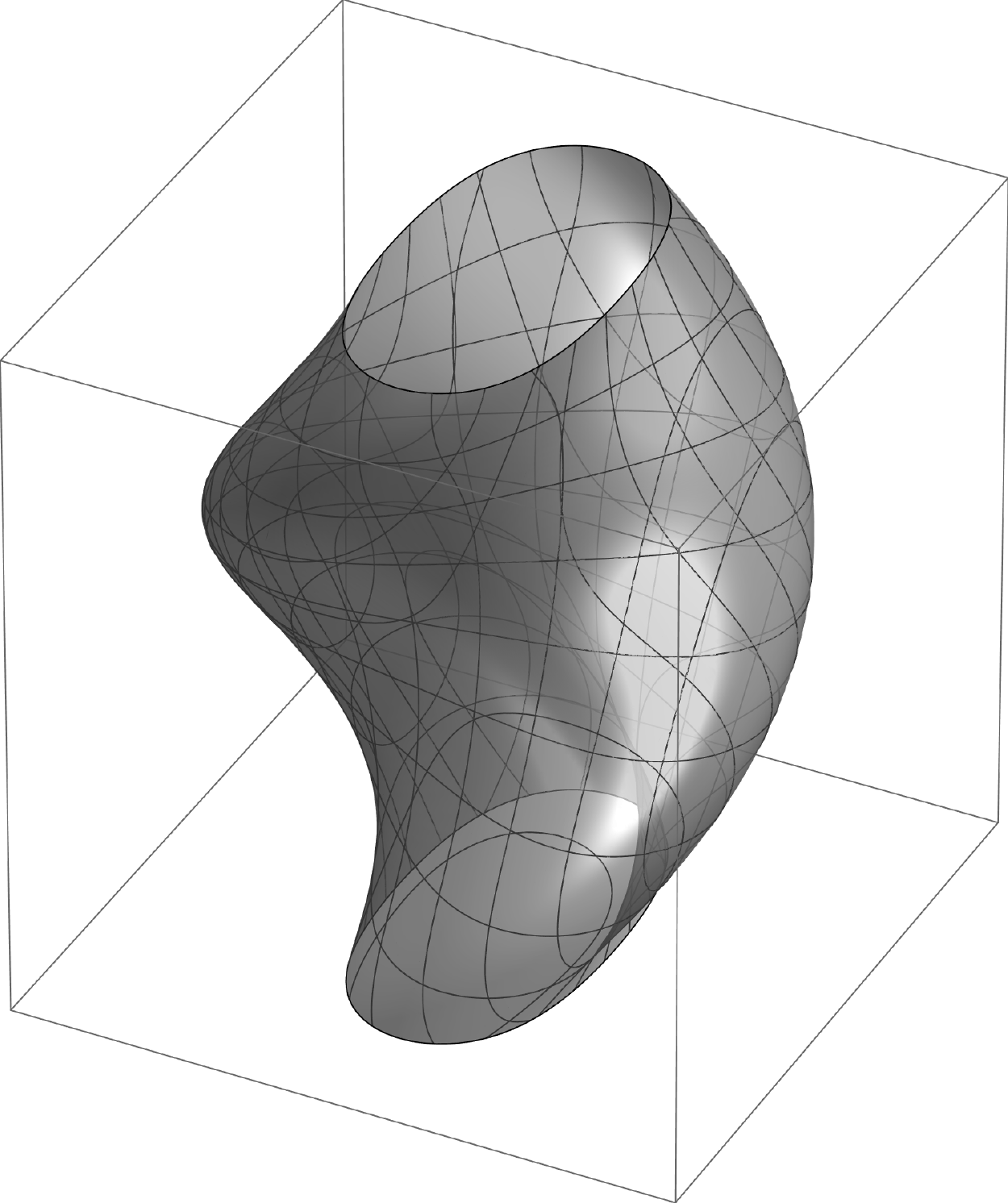}\\ 
         
        \includegraphics[width=\figwidth]{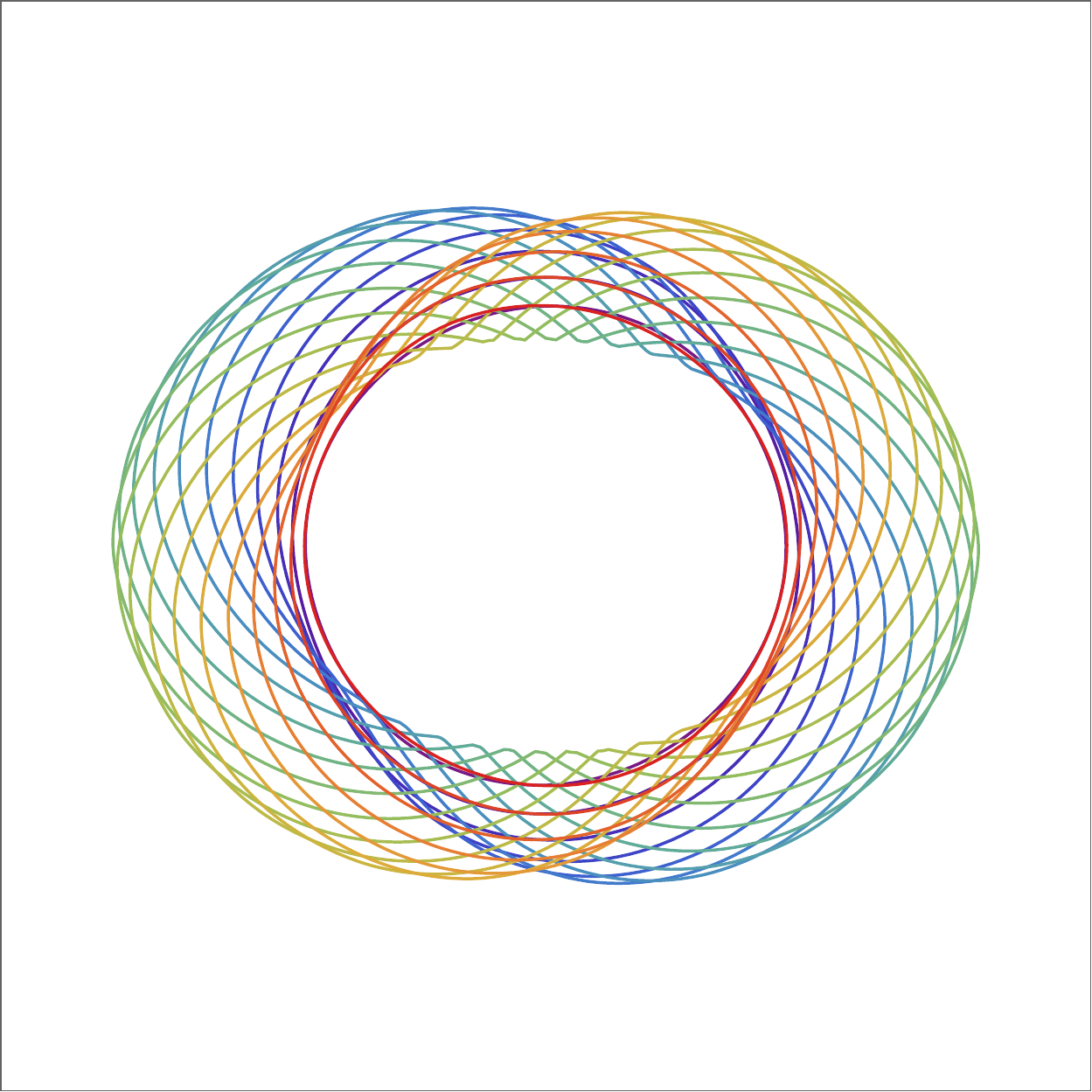} 
        & \includegraphics[width=\figwidth]{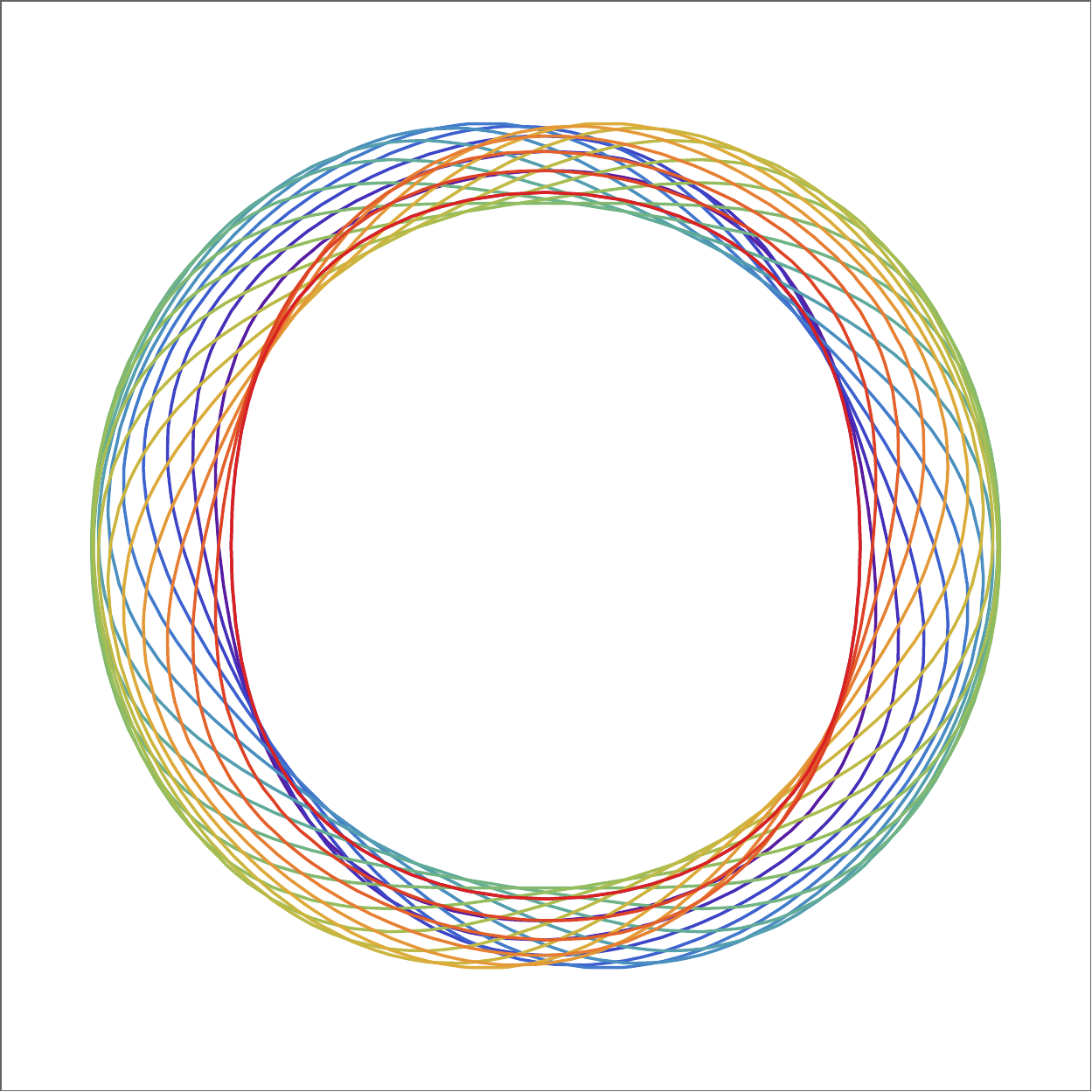} 
        & \includegraphics[width=\figwidth]{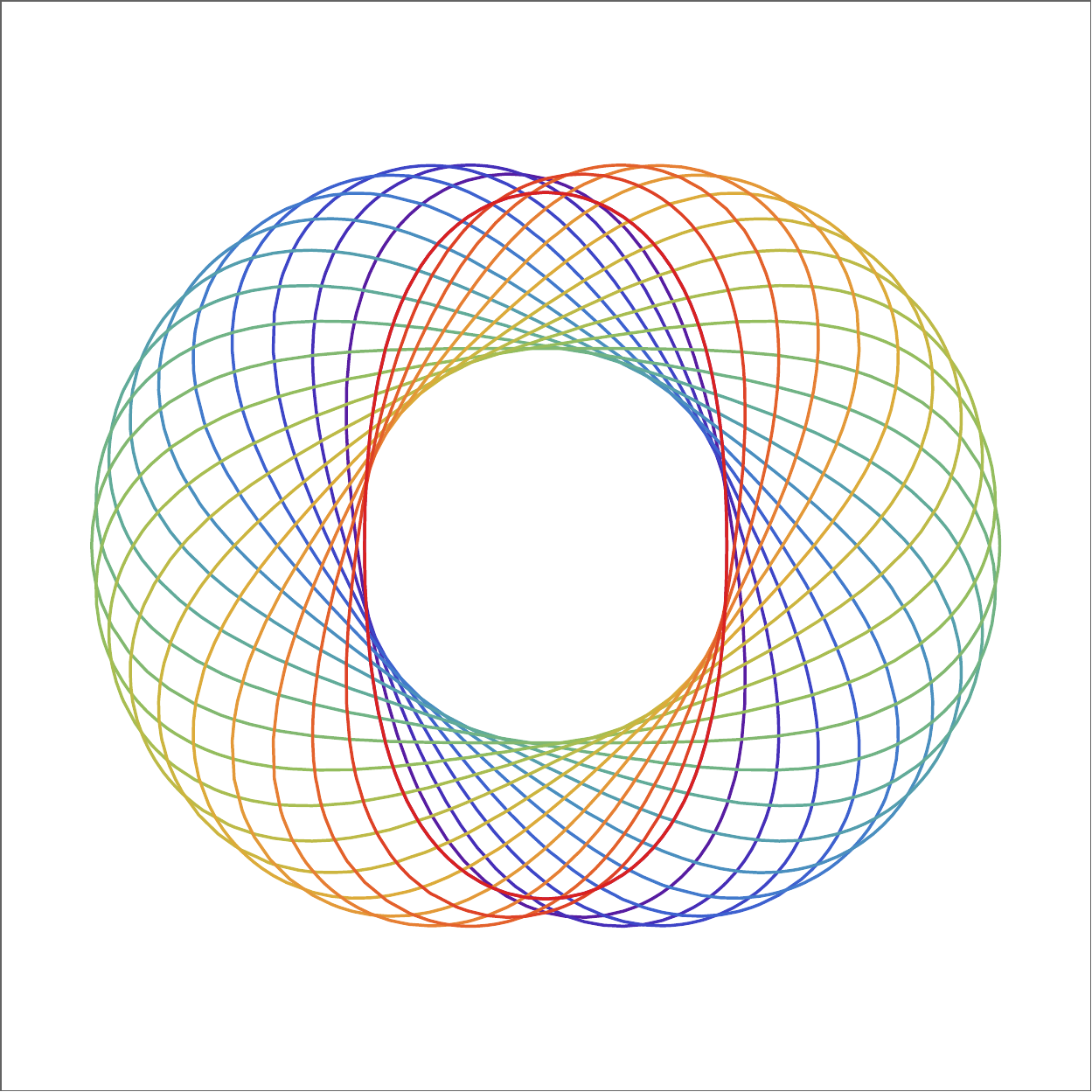}
    \end{tabular}
    \caption{Same as \Cref{tab:old_subriemannian_appropriate} but then with \(w_1 = 1, w_2 = 8, w_3 = 0.5\). We see that in this case that the old sub-Riemannian approximation \(\rho_{b,sr,old}\) \eqref{eq:approximative_sub_riemannian_distance_2} underestimates the true distance and becomes less appropriate. The new approximation \eqref{eq:new_sub_riemannian_distance} is also not perfect but qualitatively better. Decreasing \(w_3\) would exaggerate this effect even further.}
    \label{tab:old_subriemannian_not_appropriate}
\end{table*}

The Riemannian and sub-Riemannian approximations can be combined into the following newly proposed practical approximation:
\begin{equation} \label{eq:combined_distance_approximation}
    \rho_{c,com} := \max(l,\ \min(\rho_{c, sr}\ ,\ \rho_{c})),
\end{equation} 
where \(l : \bbM_2 \to \bbR\) is given by:
\begin{equation} \label{eq:lower_bound_first}
    l := \sqrt{ (w_1 x)^2 + (w_1 y)^2 + (w_3 \theta)^2 },
\end{equation}
for which will we show that it is a lower bound of the exact distance \(d\) in \Cref{res:global_bounds}. 

The most important property of the combined approximation is that is automatically switches between the Riemannian and sub-Riemannian approximations depending on the metric parameters. Namely, the Riemannian approximation is appropriate very close to the reference point \(\bp_0\), but tends to overestimate the true distance at a moderate distance from it. The sub-Riemannian approximation is appropriate at moderate distances from \(\bp_0\), but tends to overestimate very close to it, and underestimate far away. The combined approximation is such that we get rid of the weaknesses that the approximations have on their own.

On top of these approximative distances, we also define \(\rho_b\), \(\rho_{b,sr}\), and \(\rho_{b,com}\) by replacing the logarithmic coordinates \(c^i\) by their corresponding half-angle coordinates \(b^i\) defined by:
\begin{equation}\label{eq:half_angle_coordinates}
\begin{split}
    b^1 &= x \cos \tfrac{\theta}{2} + y \sin \tfrac{\theta}{2}, \\
    b^2 &= -x \sin \tfrac{\theta}{2} + y \cos \tfrac{\theta}{2}, \\
    b^3 &= \theta.
\end{split}
\end{equation}
So, for example, we define \(\rho_b\) as:
\begin{equation} \label{eq:def_rho_b}
    \rho_b := \sqrt{(w_1 b^1)^2 + (w_2 b^2)^2 + (w_3 b^3)^2}.    
\end{equation}
Why we use these coordinates will be explained in \Cref{sec:symmetry}. 

We can define approximative morphological kernels by replacing the exact distance in \eqref{eq:morphological_kernel} by any of the approximative distances in this section. To this end we, for example, define \(k_b\) by replacing the exact distance in the morphological kernel \(k\) by \(\rho_b\):
\begin{equation} \label{eq:half_angle_kernel}
    k_{b,t}^\a := \frac{t}{\beta} \Par{ \frac{\rho_b}{t} } ^\beta,
\end{equation} 
where we recall that
\(\frac{1}{\a} + \frac{1}{\beta} = 1\) and $\alpha>1$.
    
    \section{Main Theorem and Analysis} \label{sec:analysis}

When the effect of erosion and dilation is calculated with an approximative morphological kernel an error is made.
We are are therefor interested in analyzing the behaviour of this error.
We do this by comparing the approximative morphological kernels with the exact kernel \(k_t^\a\) \eqref{eq:morphological_kernel}.
The result of our analysis is summarized in the following theorem.
Because there is no time \(t\) dependency in all the inequalities of our main result we use short notation \(k^\a := k_t^\a\), \(k_b^\a := k_{b,t}^\a\).
\begin{theorem} \textbf{\emph{(Quality of approximative morphological kernels)}} \\[7pt] 
    \label{res:main_results}
    Let \(\zeta := \frac{w_2}{w_1}\) denote the spatial anisotropy, and let \(\beta\) be such that \(\frac{1}{\a} + \frac{1}{\beta} = 1\), for some $\alpha>1$ fixed.
    We assess the quality of our approximative kernels in three ways:
    \begin{itemize}
        \item The exact and all approximative kernels have the same symmetries, see \Cref{tab:symmetries_half_angle}.
        \item Globally it holds that:
        \begin{equation} \label{eq:kernel_global_bounds}
            \zeta^{-\beta} k^\a \leq k_b^\a \leq \zeta^{\beta} k^\a,
        \end{equation}
        from which we see that in the case \(\zeta = 1\) we have that \(k^\a_b\) is exactly equal to \(k^\a\).
        \item Locally around \footnote{for a precise statement see \Cref{res:asymptotics} and Remark~\ref{rem:tolerance_bound}.} \(\bp_0\) we have:
        \begin{equation} \label{eq:kernel_local_bound}
            k_b^\a \leq (1 + \varepsilon)^{\beta/2} k^\a.
        \end{equation}
        where
        \begin{equation}
            \e := \frac{\zeta^2 - 1}{2 w_3^2} \zeta^4 \rho_b^2  + \cO(\Abs{\theta}^3).
        \end{equation}
    \end{itemize}
\end{theorem}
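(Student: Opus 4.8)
My plan is to convert all three assertions into statements comparing the approximative distance $\rho_b$ with the exact distance $d:=d_\cG(\bp_0,\cdot)$. Since $\beta>1>0$ and $k^\a=\frac{t}{\beta}(d/t)^\beta$, $k_b^\a=\frac{t}{\beta}(\rho_b/t)^\beta$ with $s\mapsto s^\beta$ strictly increasing on $\bbR_{\ge0}$, for any constant $C\ge0$ one has $k_b^\a\le C^\beta k^\a\iff\rho_b\le C\,d$, and likewise $k_b^\a\le(1+\e)^{\beta/2}k^\a\iff\rho_b^2\le(1+\e)\,d^2$. So the first bullet reduces to showing $d$ and $\rho_b$ are invariant under the same group of reflections; the second to $\zeta^{-1}d\le\rho_b\le\zeta\,d$; the third to $\rho_b^2\le(1+\e)\,d^2$ near $\bp_0$.

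\textbf{Symmetries.} First I would list the candidate reflections of $\bbM_2$ and verify for each: that it fixes $\bp_0$ and pulls each summand $w_i^2\,\omega^i\otimes\omega^i$ of \eqref{eq:diagonal_metric} back to itself, hence is an isometry of $\cG$ and preserves $d(\bp_0,\cdot)$; and that in the half-angle coordinates \eqref{eq:half_angle_coordinates} it merely flips the signs of some of the $b^i$, hence preserves $\rho_b$ \eqref{eq:def_rho_b} (and $l$ \eqref{eq:lower_bound_first}, $\rho_{b,sr}$ \eqref{eq:new_sub_riemannian_distance}, and thus $\rho_{b,com}$). This is exactly the bookkeeping recorded in \Cref{tab:symmetries_half_angle}; it is also the reason half-angle coordinates are preferable to logarithmic ones.

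\textbf{Global bound.} The crux is that $\rho_b$ equals $d$ \emph{exactly} when $\zeta=1$. With $w_2=w_1$ one has $\cG^{\mathrm{iso}}=w_1^2(\omega^1\otimes\omega^1+\omega^2\otimes\omega^2)+w_3^2\,\omega^3\otimes\omega^3=w_1^2(\mathrm dx\otimes\mathrm dx+\mathrm dy\otimes\mathrm dy)+w_3^2\,\mathrm d\theta\otimes\mathrm d\theta$, the product metric on $\bbR^2\times S^1$, so $d^{\mathrm{iso}}(\bp)=\sqrt{w_1^2(x^2+y^2)+w_3^2\theta^2}$ for $|\theta|\le\pi$, which equals $\rho_b^{\mathrm{iso}}=l$ since $(b^1)^2+(b^2)^2=x^2+y^2$ and $b^3=\theta$. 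Then I would use the pointwise quadratic-form bounds $\cG^{\mathrm{iso}}\le\cG\le\zeta^2\cG^{\mathrm{iso}}$ (only the $\omega^2\otimes\omega^2$ coefficient changes, by a factor in $[1,\zeta^2]$), giving $d^{\mathrm{iso}}\le d\le\zeta\,d^{\mathrm{iso}}$, and the analogous bounds $\rho_b^{\mathrm{iso}}\le\rho_b\le\zeta\,\rho_b^{\mathrm{iso}}$ for the quadratic form defining $\rho_b$. Combined with $d^{\mathrm{iso}}=\rho_b^{\mathrm{iso}}$ this yields $\zeta^{-1}d\le\rho_b\le\zeta\,d$, i.e. \eqref{eq:kernel_global_bounds}, with all inequalities becoming equalities at $\zeta=1$. (This is \Cref{res:global_bounds}, which also gives the lower bound $l\le d$ used for $\rho_{c,com}$.)

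\textbf{Local bound.} For the last bullet I would Taylor-expand $d^2$ about $\bp_0$. Writing $F:=d^2$, the eikonal equation \eqref{eik} is equivalent to $\sum_{i=1}^{3}(\cA_iF)^2/w_i^2=4F$ with $F(\bp_0)=0$; solving order by order in the total degree of $(x,y,\theta)$ and applying Euler's identity to each homogeneous piece gives $F=(w_1^2x^2+w_2^2y^2+w_3^2\theta^2)-(w_2^2-w_1^2)xy\theta+F_4+\cdots$. Expanding $\rho_b^2$ with $b^1=x+\tfrac12 y\theta+\cdots$, $b^2=y-\tfrac12 x\theta+\cdots$, one finds that $\rho_b^2$ and $F$ agree through degree $3$, so $\rho_b^2-d^2$ is of degree $4$; comparing this degree-$4$ discrepancy with the degree-$4$ part of $\e\,d^2$, namely $\frac{\zeta^2-1}{2w_3^2}\zeta^4(w_1^2x^2+w_2^2y^2+w_3^2\theta^2)^2$, yields $\rho_b^2\le(1+\e)d^2$ with the stated $\e$ up to the $\cO(|\theta|^3)$ remainder. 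The hard part is precisely this step: computing $F_4$ — the first place the non-commutativity of the frame $\{\cA_i\}$ genuinely enters — and then promoting the pointwise asymptotic comparison to a clean closed-form inequality holding on an honest neighbourhood of $\bp_0$; this is why the precise formulation is deferred to \Cref{res:asymptotics} and Remark~\ref{rem:tolerance_bound}.
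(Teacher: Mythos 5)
Your reduction of all three bullets to a comparison of $\rho_b$ with $d$, and your proof of the global bound, coincide with the paper's own route (\Cref{res:global_bounds}, \Cref{res:global_bounds_rho_b}, \Cref{res:global_bound_kernel}): the sandwich $l\le d\le u_1$, $l\le\rho_b\le u_1$ with $u_1=\zeta l$ and $l$ the distance of the spatially isotropic comparison metric is exactly how \eqref{eq:kernel_global_bounds} is obtained, including the observation that everything collapses to an equality at $\zeta=1$. Two of your other steps have genuine problems. For the symmetries, it is false that every fundamental symmetry ``pulls each summand $w_i^2\,\omega^i\otimes\omega^i$ back to itself'': only the four symmetries generated by $\e^3(x,y,\theta)=(-x,-y,\theta)$ and $\e^4(x,y,\theta)=(-x,y,-\theta)$ are isometries of $\cG$. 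The other four involve group inversion ($\e^5$ is exactly $g\mapsto g^{-1}$, i.e.\ $c^i\mapsto-c^i$), and inversion is not an isometry of a left-invariant metric that is not bi-invariant; concretely, $\e^1_*\partial_\theta\big\vert_{(x,y,0)}=y\,\partial_x+x\,\partial_y+\partial_\theta$, whose $\cG$-norm differs from $\Vert\partial_\theta\Vert$ unless $x=y=0$. The paper therefore argues only for the generators $\e^3,\e^4$ via pushforwards, and handles $\e^5$ separately through $d(g\bp_0,\bp_0)=d(\bp_0,g^{-1}\bp_0)$, i.e.\ left-invariance combined with the symmetry of $d$ as a metric. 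Your sign-flip argument for $\rho_b$ itself is fine.

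For the local bound your route (Taylor-expanding $d^2$ from the eikonal equation and matching $F_4$ against the expansion of $\rho_b^2$) is genuinely different from the paper's, and it stops exactly at the step you flag as hard. The paper never computes $F_4$: \Cref{res:fundamental_calculus} gives $\rho_b(\bp)\le d(\bp)\max_{t}\Vert d\rho_b\vert_{\g(t)}\Vert$ by the fundamental theorem of calculus and Cauchy--Schwarz along a minimizing geodesic $\g$; \Cref{res:bound_on_differential_norm_rho_b} bounds $\Vert d\rho_b\Vert^2\le 1+\frac{\zeta^2-1}{2w_3^2}\rho_b^2+\cO(\theta^3)$ by direct computation in half-angle coordinates; and the factor $\zeta^4$ in the stated $\e$ comes from $\max_{t}\rho_b(\g(t))\le\zeta^2\rho_b(\bp)$, which follows from the global bounds together with monotonicity of $d$ along $\g$. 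A pointwise expansion at $\bp_0$ cannot produce that $\zeta^4$ — it is an artifact of controlling $\rho_b$ along the whole geodesic rather than at the endpoint — so even a completed version of your computation would yield a different inequality than \eqref{eq:kernel_local_bound}, and would still leave open the passage from a formal asymptotic identity to an inequality on a compact neighbourhood. The geodesic-integral mechanism of \Cref{res:fundamental_calculus} is the missing ingredient that converts a bound on $\Vert d\rho_b\Vert$ into the stated bound on $\rho_b/d$.
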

\begin{proof}
    The proof of the parts of the theorem will be discussed throughout the upcoming subsections.
    \begin{itemize}
        \item The symmetries are shown in \Cref{res:kernel_symmetries}.
        \item The global bound \eqref{eq:kernel_global_bounds} is shown in \Cref{res:global_bound_kernel}.
        \item The local bound \eqref{eq:kernel_local_bound} is shown in \Cref{res:local_bound_kernels}.
    \end{itemize}
\end{proof}

Clearly, as all approximative kernels are solely functions of the corresponding approximative distances, the analysis of the quality of an approximative kernel reduces to analysing the quality of the approximative distance that is used, and this is exactly what we will do.

In previous work on PDE-G-CNN's the bound \(d=d(\bp_0,\cdot) \leq \rho_c\) is proven \cite[Lem.20]{smets2022pdebased}. Furthermore, it is shown that around \(\bp_0\) one has:
\begin{equation} \label{eq:old_asymptotics}
    \rho_c^2 \leq d^2 + \cO(d^4),
\end{equation}
which has the corollary that there exist a constant \(C \geq 1\) such that
\begin{equation} \label{eq:old_bound}
    \rho_c \leq C d
\end{equation}
for any compact neighbourhood around \(\bp_0\). We improve on these results by;
\begin{itemize}
    \item Showing that the approximative distances have the same symmetries as the exact Riemannian distance; \Cref{res:distance_symmetries}.
    \item Finding simple global bounds on the exact distance \(d\) which can then be used to find global estimates of \(\rho_b\) by \(d\); \Cref{res:global_bounds}. This improves upon \eqref{eq:old_bound} by finding an expression for the constant \(C\).
    \item Estimating the leading term of the asymptotic expansion, and observing that our upper bound of the relative error between \(\rho_b\) and \(d\) explodes in the cases \(\zeta \to \infty\) and \(w_3 \to 0\); \Cref{res:asymptotics}. This improves upon \cref{eq:old_asymptotics}.
\end{itemize}
Note however that we are \textit{not} analysing \(\rho_c\): we will be analysing \(\rho_b\). This is mainly because the half-angle coordinates are easier to work with: they do not have the \(\sinc \tfrac{\theta}{2}\) factor the logarithmic coordinates have. Using that
\begin{equation} \label{eq:relation_c_and_b}
    b^1 = c^1 \sinc \tfrac{\theta}{2},\ b^2 = c^2 \sinc \tfrac{\theta}{2},\ b^3 = c^3,
\end{equation}
recall (\ref{eq:half_angle_coordinates}) and (\ref{eq:logarithm_coordinates}), we see that
\begin{equation*}
    \sinc \tfrac{\theta}{2}\ \rho_c \leq \rho_b \leq \rho_c,
\end{equation*}
and thus locally \(\rho_c\) and \(\rho_b\) do not differ much, and results on \(\rho_b\) can be easily transferred to (slightly weaker) results on \(\rho_c\).

\subsection{Symmetry Preservation} \label{sec:symmetry}
    
    Symmetries play a major role in the analysis of (sub-)Riemannian geodesics/distance in \(SE(2)\). They help to analyze symmetries in Hamiltonian flows \cite{moiseev2010maxwell} and corresponding symmetries in association field models \cite[Fig.11]{duits2013association}. There are together 8 of them and their relation with logarithmic coordinates \(c^i\) (\Cref{res:symmetries_b_coordinates}) shows they correspond to inversion of the Lie-algebra basis \(A_i \mapsto -A_i\). The symmetries for the sub-Riemannian setting are explicitly listed on \cite[Prop.4.3]{moiseev2010maxwell}. They can be algebraically generated by the (using the same labeling as \cite{moiseev2010maxwell}) following three symmetries:
    \begin{equation} \label{eq:reflectional_symmetries}
    \begin{array}{l}
        \scalemath{0.9}{\e^{2}(x,y,\theta) = (-x \cos \theta - y \sin \theta, -x \sin \theta + y \cos \theta, \theta),}\\
        \scalemath{0.9}{\e^{1}(x,y,\theta) = (x \cos \theta + y \sin \theta, x \sin \theta - y \cos \theta, \theta), \text{ and }} \\
        \scalemath{0.9}{\e^{6}(x,y,\theta) = (x \cos \theta + y \sin \theta, -x \sin \theta + y \cos \theta, -\theta).}
    \end{array}
    \end{equation}
    They generate the other 4 symmetries as follows:
    \begin{equation}\label{eq:symmetries_relations}
    \begin{array}{l}
        \e^{3}=\e^{2} \circ \e^1,\ 
        \e^{4}=\e^{2} \circ \e^6,\ 
        \e^{7}=\e^{1} \circ \e^6, \\
        \text{ and } \e^{5}= \e^2 \circ \e^{1} \circ \e^6.
    \end{array}
    \end{equation}
    and with \(\e^0 = \textrm{id}\). All symmetries are involutions: \(\e^i \circ \e^i = \textrm{id}\). Henceforth all eight symmetries will be called `fundamental symmetries'. How all fundamental symmetries relate to each other becomes clearer if we write them down in either logarithm or half-angle coordinates.
    \begin{lemma} 
        \label{res:symmetries_b_coordinates}
        \emph{\textbf{(8 fundamental symmetries)}} \\
        The 8 fundamental symmetries \(\e_i\), in either half-angle coordinates \(b^i\) or logarithmic coordinates \(c^i\), correspond to sign flips as laid out in \Cref{tab:symmetries_half_angle}.
    \end{lemma}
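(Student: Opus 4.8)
The plan is to verify directly, by computation, that each of the eight maps $\e^i$ acts on the coordinate triples $(b^1,b^2,b^3)$ and $(c^1,c^2,c^3)$ as a sign-flip pattern, and then to record those patterns in \Cref{tab:symmetries_half_angle}. First I would observe that, by the relation \eqref{eq:relation_c_and_b}, namely $b^1 = c^1\sinc\tfrac\theta2$, $b^2=c^2\sinc\tfrac\theta2$, $b^3=c^3$, the action on $c$-coordinates and the action on $b$-coordinates are equivalent up to the strictly positive scalar $\sinc\tfrac\theta2$: a given $\e^i$ sends $\theta \mapsto \pm\theta$, and since $\sinc$ is even this scalar is left unchanged. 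Hence it suffices to compute the effect of each $\e^i$ on $(b^1,b^2,b^3)$, and the $c$-column of the table follows for free.

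Next I would handle the three generators $\e^2,\e^1,\e^6$ from \eqref{eq:reflectional_symmetries} one at a time. For $\e^2$ one substitutes $(x,y,\theta)\mapsto(-x\cos\theta - y\sin\theta,\,-x\sin\theta + y\cos\theta,\,\theta)$ into the definitions \eqref{eq:half_angle_coordinates} of $b^1,b^2,b^3$ and simplifies using the angle-addition identities for $\cos\tfrac\theta2,\sin\tfrac\theta2$ together with $\cos\theta = \cos^2\tfrac\theta2 - \sin^2\tfrac\theta2$, $\sin\theta = 2\sin\tfrac\theta2\cos\tfrac\theta2$; the outcome should be $(b^1,b^2,b^3)\mapsto(-b^1,b^2,b^3)$. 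The same routine substitution for $\e^1$ and $\e^6$ gives $(b^1,b^2,b^3)\mapsto(b^1,-b^2,b^3)$ and $(b^1,b^2,b^3)\mapsto(b^1,b^2,-b^3)$ respectively — this last one being immediate since $\e^6$ is, in $b$-coordinates, essentially a rotation-conjugation that leaves $b^1,b^2$ fixed while negating $\theta$. Then for the remaining five symmetries I would simply compose the sign patterns according to \eqref{eq:symmetries_relations}: since each generator is a diagonal $\{\pm1\}$ action on $(b^1,b^2,b^3)$, composition is just multiplication of the corresponding sign vectors, so $\e^3 = \e^2\circ\e^1$ negates $b^1$ and $b^2$, $\e^4 = \e^2\circ\e^6$ negates $b^1$ and $b^3$, $\e^7 = \e^1\circ\e^6$ negates $b^2$ and $b^3$, and $\e^5$ negates all three; with $\e^0=\mathrm{id}$ negating none. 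This exhibits the claimed correspondence with the sign inversions $A_i\mapsto -A_i$ of the Lie-algebra basis and fills in every entry of \Cref{tab:symmetries_half_angle}.

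The only genuinely delicate point — the "hard part," though it is more bookkeeping than difficulty — is the algebraic simplification for $\e^2$ and $\e^1$, where one must be careful that the half-angle combinations $x\cos\tfrac\theta2 + y\sin\tfrac\theta2$ and $-x\sin\tfrac\theta2 + y\cos\tfrac\theta2$ transform cleanly under the quadratic-in-trigonometric-functions coordinate change; it is easy to drop a sign. A clean way to organize this is to write $b^1 + i b^2 = e^{-i\theta/2}(x + iy)$, so that a map acting on $(x,y)$ by multiplication by a unit complex number $e^{i\phi}$ and on $\theta$ by $\theta\mapsto\theta + \psi$ sends $b^1 + i b^2 \mapsto e^{-i\psi/2}e^{i\phi}(b^1 + i b^2)$ (and conjugation-type maps act analogously on $x-iy$); reading off $\e^1,\e^2,\e^6$ in this form makes each of the three sign patterns transparent and removes the risk of sign errors. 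One should also note the small-angle identification $\bbR/2\pi\bbZ = [-\pi,\pi)$ means $\theta\mapsto-\theta$ is understood modulo the identification, which causes no issue since all $b^i$ and $c^i$ are continuous and odd/even in $\theta$ as needed.
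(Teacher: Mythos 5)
Your proposal is correct and takes essentially the same route as the paper: direct computation of the sign flips for the generators $\e^2,\e^1,\e^6$ in half-angle coordinates, followed by the observation that relation \eqref{eq:relation_c_and_b} transfers the result to the $c^i$ (the paper carries out only the single computation that $\e^2$ flips $b^1$ and declares the rest analogous). Your explicit reduction to the three generators via \eqref{eq:symmetries_relations} and the complex-number bookkeeping $b^1+ib^2=e^{-i\theta/2}(x+iy)$ are sound and arguably tidier, but they are organizational refinements rather than a different argument.
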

    \begin{proof}
        We will only show that \(\e^2\) flips \(b^1\). All other calculations are done analogously. Pick a point \(\bp = (x,y,\theta)\) and let \(\bq = \e^2(\bp)\). We now calculate \(b^1(\bq)\):
        \begin{equation*} 
        \begin{split}
            b^1(\bq)
            ={}&x(\bq) \cos \tfrac{\theta(\bq)}{2} + y(\bq) \sin \tfrac{\theta(\bq)}{2}\\
            = &- (x \cos \theta + y \sin \theta) \cos \tfrac{\theta}{2} \\
              &+ (-x \sin \theta + y \cos \theta) \sin \tfrac{\theta}{2}\\
            = &-x (\cos \theta \cos \tfrac{\theta}{2} + \sin \theta \sin \tfrac{\theta}{2} ) \\
              &- y(\sin \cos \tfrac{\theta}{2} - \cos \theta \sin \tfrac{\theta}{2})\\
            = &- x \cos \tfrac{\theta}{2} - y \sin \tfrac{\theta}{2}\\
            = &-b^1(\bp),
        \end{split} 
        \end{equation*}
        where we have used the trigonometric difference identities of cosine and sine in the second-to-last equality. From the relation between logarithmic and half-angle coordinates \eqref{eq:relation_c_and_b} we have that the logarithmic coordinates \(c^i\) flip in the same manner under the symmetries. 
    \end{proof}
    
    \begin{table}
        \centering
        \begingroup
            \setlength{\tabcolsep}{7pt} 
            \renewcommand{\arraystretch}{1.2} 
            \begin{tabular}{c|cccccccc}
                 & \(\e_0\) & \(\e_1\) & \(\e_2\) & \(\e_3\) & \(\e_4\) & \(\e_5\) & \(\e_6\) & \(\e_7\)\\
                \hline
                \(b^1, c^1\) & \(+\) & \(+\) & \(-\) & \(-\) & \(-\) & \(-\) & \(+\) & \(+\) \\
                \(b^2, c^2\) & \(+\) & \(-\) & \(+\) & \(-\) & \(+\) & \(-\) & \(+\) & \(-\) \\
                \(b^3, c^3\) & \(+\) & \(+\) & \(+\) & \(+\) & \(-\) & \(-\) & \(-\) & \(-\)
            \end{tabular}
        \endgroup
        \caption{}
        \label{tab:symmetries_half_angle}
    \end{table}
    
    The fixed points of the symmetries \(\e^2\), \(\e^1\), and \(\e^6\) have an interesting geometric interpretation. The logarithmic and half-angle coordinates, being so closely related to the fundamental symmetries, also carry the same interpretation. \Cref{def:cocircularity} introduces this geometric idea and \Cref{res:cocircular_equivalence} makes its relation to the fixed points of the symmetries precise. In \Cref{fig:fixed_points_symmetries_plot} the fixed points are visualized, and in \Cref{fig:cocircular_coradial_parallel} a visualization of these geometric ideas can be seen.
    
    \renewcommand{\figwidth}{0.30\linewidth}
    \renewcommand{\tikzfigwidth}{2.5\linewidth}
    \begin{figure*}
        \centering%
        \begin{subfigure}{\figwidth}
            \centering
            \begin{tikzpicture}
    \begin{axis}[
        width=\tikzfigwidth,
        view={20}{40},
        xmin = -3,
        xmax = 3,
        ymin = -3,
        ymax = 3,
        zmin = -pi,
        zmax = pi,
        xlabel = {\(x\)},
        ylabel = {\(y\)},
        zlabel = {\(\theta\)},
        unit vector ratio = 1 1 1,
        xticklabels={},
        yticklabels={},
        zticklabels={}
    ]
        \addplot3 [
            surf,
            fill opacity=0.5,
            fill = red,
            faceted color = black,
            samples = 11,
            domain = -2:2,
            y domain = -pi:pi,
            z buffer = sort,
        ] (
        {x * cos(deg(y)/2 + 90)},
        {x * sin(deg(y)/2 + 90)},
        y
        );  
    \end{axis}
\end{tikzpicture}
            \caption{\(\e^2\)}
        \end{subfigure}%
        \begin{subfigure}{\figwidth}
            \centering
            \begin{tikzpicture}
    \begin{axis}[
        width=\tikzfigwidth,
        view={20}{40},
        xmin = -3,
        xmax = 3,
        ymin = -3,
        ymax = 3,
        zmin = -pi,
        zmax = pi,
        xlabel = {\(x\)},
        ylabel = {\(y\)},
        zlabel = {\(\theta\)},
        unit vector ratio = 1 1 1,
        xticklabels={},
        yticklabels={},
        zticklabels={}
    ]
        \addplot3 [
            surf,
            fill opacity=0.5,
            fill = green,
            faceted color = black,
            samples = 11,
            domain = -2:2,
            y domain = -pi:pi,
            z buffer = sort,
        ] (
        {x * cos(deg(y)/2)},
        {x * sin(deg(y)/2)},
        y
        );  
    \end{axis}
\end{tikzpicture}
            \caption{\(\e^1\)}
        \end{subfigure}%
        \begin{subfigure}{\figwidth}
            \centering
            \begin{tikzpicture}
    \begin{axis}[
        width=\tikzfigwidth,
        view={20}{40},
        xmin = -3,
        xmax = 3,
        ymin = -3,
        ymax = 3,
        zmin = -pi,
        zmax = pi,
        xlabel = {\(x\)},
        ylabel = {\(y\)},
        zlabel = {\(\theta\)},
        unit vector ratio = 1 1 1,
        xticklabels={},
        yticklabels={},
        zticklabels={}
    ]
        \addplot3 [
            surf,
            fill opacity=0.5,
            fill = blue,
            faceted color = black,
            samples = 10,
            domain = -2:2,
            z buffer = sort,
        ] {0};
    \end{axis}
\end{tikzpicture}
            \caption{\(\e^6\)}
        \end{subfigure}%
        \caption{The fixed points of the \(\e^2\), \(\e^1\), and \(\e^6\). For \(\e^2\) and \(\e^1\) only the points within the region \(x^2 + y^2 \leq 2^2\) are plotted. For \(\e^6\) only the points in the region \(\max(\Abs{ x},\Abs{ y}) \leq 2\). The fixed points of \(\e^2\), \(\e^1\), and \(\e^6\) correspond respectively to the points in \(\bbM_2\) that are coradial, cocircular, and parallel to the reference point \(\bp_0\).}
        \label{fig:fixed_points_symmetries_plot}
    \end{figure*}

    \begin{definition}\label{def:cocircularity}
        Two points $\bp_1=(\bx_1,\bn_1)$, $\bp_2=(\bx_{2},\bn_1)$ of \(\bbM_{2}\) are called \textbf{cocircular} if there exist a circle, of possibly infinite radius, passing through \(\bx_1\) and \(\bx_2\) such that the orientations \(\bn_1 \in S^1\) and \(\bn_{2} \in S^1\) are tangents to the circle, at respectively \(\bx_1\) and \(\bx_2\), in either both the clockwise or anti-clockwise direction. Similarly, the points are called \textbf{coradial} if the orientations are normal to the circle in either both the outward or inward direction. Finally, two points are called \textbf{parallel} if their orientations coincide.
    \end{definition}
    \renewcommand{\figwidth}{0.30\linewidth}
    \begin{figure*}
        \centering%
        \begin{subfigure}{\figwidth}
            \centering
            \begin{tikzpicture}
    \draw[thick] (0,0) arc (0:60:1);
    \draw[dashed] (0,0) arc (0:360:1);
    \draw[thick,red] [-to] (0,0) -- (1,0);
    \draw[thick,red] [-to] (60:1) ++ (-1,0) -- ++(60:1);
\end{tikzpicture}
            \caption{Coradial}
        \end{subfigure}%
        \begin{subfigure}{\figwidth}
            \centering
            \begin{tikzpicture}
    \draw[thick] (0,0) arc (-90:-30:1);
    \draw[dashed] (0,0) arc (-90:270:1);
    \draw[thick,green] [-to] (0,0) -- (1,0);
    \draw[thick,green] [-to] (30:1) -- ++(60:1);
\end{tikzpicture}
            \caption{Cocircular}
        \end{subfigure}%
        \begin{subfigure}{\figwidth}
            \centering
            \begin{tikzpicture}
    \draw[thick,blue] [-to] (0,0) -- ++(1,0);
    \draw[thick,blue] [-to] (1.5,1.5) -- ++(1,0);
\end{tikzpicture}
            \caption{Parallel}
        \end{subfigure}%
        \caption{An example of points in \(\bbM_2\) that are coradial, cocircular, and parallel.}
        \label{fig:cocircular_coradial_parallel}
    \end{figure*}
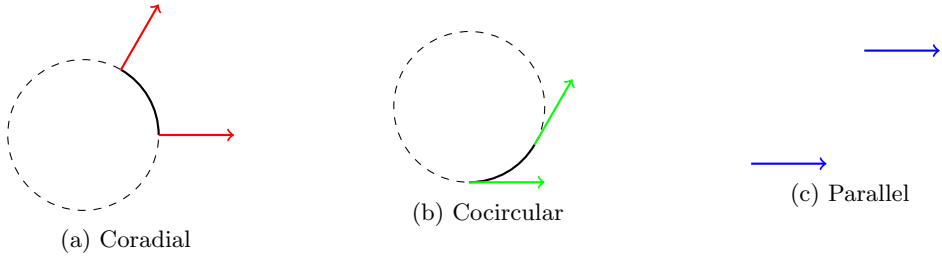
    
    Co-circularity has a well-known characterisation that is often used for line enhancement in image processing, such as tensor voting \cite{medioni2021tensor}.
    \begin{remark} \label{rem:doubleangle}
      Point $\bp=(r \cos \phi, r \sin \phi, \theta) \in \bbM_2$ is cocircular to the reference point \(\bp_0=(0,0,0)\) if and only if the double angle equality \(\theta \equiv 2 \phi \mod 2\pi\) holds.
    \end{remark}
    In fact all fixed points of the fundamental symmetries can be intuitively characterised:
    \begin{lemma} 
        \label{res:cocircular_equivalence}
        \emph{\textbf{(Fixed Points of Symmetries)}} \\
        Fix reference point \(\bp_0=(0,0,0) \in \mathbb{M}_2\). \\
        The point \(g \bp_0\in \mathbb{M}_2\) with $g \in SE(2)$ is respectively  \\
        -\ coradial to $\bp_0$ when
        \begin{equation}
            c^1(g) = 0 \iff \e_2(g) = g \iff g \in \exp(\Ang{A_2, A_3}),
        \end{equation}
        -\ cocircular to $\bp_0$ when
        \begin{equation}
            c^2(g) = 0 \iff \e_1(g) = g \iff g \in \exp(\Ang{A_1, A_3}),  \label{eq:cocircular_equivalence}
        \end{equation}
        -\ parallel to $\bp_0$ when
        \begin{equation}
            c^3(g) = 0 \iff \e_6(g) = g \iff g \in \exp(\Ang{A_1, A_2}).
        \end{equation}
    \end{lemma}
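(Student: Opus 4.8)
The plan is to prove each of the three displayed lines by closing a cycle of implications among its four conditions; in every line two of the three equivalences are purely ``algebraic'' — they depend only on the sign table of \Cref{res:symmetries_b_coordinates} and on the explicit exponential/logarithm formulas of \Cref{sec:preliminaries} — and only the third equivalence, the one relating $c^i(g)=0$ to a geometric property of $g\bp_0$, carries real content.

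First I would settle the algebraic equivalences. By \eqref{eq:logarithm_coordinates} the assignment $g\mapsto(c^1(g),c^2(g),c^3(g))$ is an injective global chart on $SE(2)\cong\bbM_2$ (with the small-angle identification), since $\sinc\tfrac{\theta}{2}>0$ on $[-\pi,\pi)$. \Cref{res:symmetries_b_coordinates} and \Cref{tab:symmetries_half_angle} say that in these coordinates $\e_2$ flips $c^1$ and fixes $c^2,c^3$; $\e_1$ flips $c^2$ only; $\e_6$ flips $c^3$ only. By injectivity of the chart this immediately gives $\e_2(g)=g$ iff $c^1(g)=0$, and likewise for the other two pairs. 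For the exponential description I read off the exponential map: the locus $c^1=0$, i.e.\ $x\cos\tfrac{\theta}{2}+y\sin\tfrac{\theta}{2}=0$, is exactly the set of points $\exp(c^2A_2+c^3A_3)$, hence equals $\exp(\Ang{A_2,A_3})$ (using $\exp(\log g)=g$ to cover the case of large $|c^3|$); permuting the indices gives $c^2(g)=0$ iff $g\in\exp(\Ang{A_1,A_3})$ and $c^3(g)=0$ iff $g\in\exp(\Ang{A_1,A_2})$.

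It then remains to match vanishing of each $c^i$ with the geometric notion of \Cref{def:cocircularity}. The parallel case is immediate: $g\bp_0$ has orientation $(\cos\theta,\sin\theta)$, equal to the orientation of $\bp_0$ precisely when $\theta=c^3(g)=0$. For cocircularity I use \Cref{rem:doubleangle}: writing $g\bp_0=(r\cos\phi,r\sin\phi,\theta)$, being cocircular to $\bp_0$ means $\theta\equiv2\phi\bmod2\pi$; substituting into \eqref{eq:logarithm_coordinates} yields $c^2(g)=r\sin(\phi-\tfrac{\theta}{2})/\sinc\tfrac{\theta}{2}$, which (for $r\neq0$) vanishes iff $\phi-\tfrac{\theta}{2}\equiv0\bmod\pi$, i.e.\ iff $\theta\equiv2\phi\bmod2\pi$. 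The coradial case is the same trigonometric computation with ``tangent to the circle'' replaced by ``normal to the circle'': one parametrises the circle through $\bp_0$ whose centre lies on the $x$-axis, finds $c^1(g)=r\cos(\phi-\tfrac{\theta}{2})/\sinc\tfrac{\theta}{2}$, and checks that coradiality amounts to $\theta\equiv2\phi+\pi\bmod2\pi$, which is again exactly $c^1(g)=0$; the ``infinite radius'' instances reduce to $\theta=0$ and are consistent with this.

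The one point I would treat with care — the main obstacle, such as it is — is the degenerate configurations permitted by \Cref{def:cocircularity}: the case where $g\bp_0$ and $\bp_0$ share a position (the points $(0,0,\theta)$, where $r=0$ and $\phi$ is undefined) and the case where the circle degenerates to a straight line. There one must fix precisely what ``both clockwise / anti-clockwise'' and ``both outward / inward'' mean — most naturally by reading those degenerate cocircular/coradial configurations as limits of nondegenerate ones — so that $c^i(g)=0$ and the geometric condition genuinely coincide. Once that convention is pinned down, the remainder is the routine trigonometry above together with the sign bookkeeping of \Cref{tab:symmetries_half_angle}.
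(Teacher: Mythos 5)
Your proposal is correct and follows essentially the same route as the paper's proof: the double-angle characterisation of cocircularity (\Cref{rem:doubleangle}) combined with the computation $c^2(g)=r\sin(\phi-\tfrac{\theta}{2})/\sinc\tfrac{\theta}{2}$, the sign-flip table of \Cref{res:symmetries_b_coordinates} for the fixed-point equivalence, and the $\log$/$\exp$ argument for membership in $\exp(\Ang{A_i,A_j})$. The only difference is one of presentation — the paper works out the cocircular case and declares the other two analogous, whereas you carry out all three and explicitly flag the degenerate configurations ($r=0$, infinite radius) that the paper leaves implicit.
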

    \begin{proof}
        We will only show \eqref{eq:cocircular_equivalence}, the others are done analogously. We start by writing \(g=(r \cos \phi, r \sin \phi, \theta)\) and calculating that \(g \odot \bp_0 = (r \cos \phi, r \sin \phi, (\cos \theta, \sin \theta))\). Then by \Cref{rem:doubleangle} we known that \(g \bp_0\) is cocircular to \(\bp_0\) if and only if \(2\phi = \theta \omod 2\pi\). We can show this is equivalent to \(c^2(g)=0\):
        \begin{equation*} 
        \begin{split}
            c^2(g) = 0 
            &\iff b^2(g) = 0 \\
            &\iff -x \sin \tfrac{\theta}{2} +y \cos \tfrac{\theta}{2}=0\\
            &\iff -\cos \phi \sin \tfrac{\theta}{2}+\sin \phi \cos \tfrac{\theta}{2}=0\\
            &\iff \sin(\phi-\tfrac{\theta}{2})=0 
            \iff 2\phi = \theta \omod 2\pi.
        \end{split}
        \end{equation*}
        In logarithmic coordinates \(\e_1\) is equivalent to:
        \begin{equation*}
            \e_1(c^1, c^2, c^3) = (c^1, -c^2, c^3) 
        \end{equation*}
        from which we may deduce that \(\e_1(g) = g\) is equivalent to \(c^2(g) = 0\). If \(c^2(g) = 0\) then \(\log g \in \Ang{A_1, A_3}\) and thus \(g \in \exp(\Ang{A_1, A_3})\). As for the other way around, it holds by simple computation that:
        \begin{equation*}
            c^2(\exp(c^1A_1 + c^3A_3)) = 0
        \end{equation*}
        which shows that \(g \in \exp(\Ang{A_1, A_3}) \Rightarrow c^2(g) = 0\).
    \end{proof}

    In the important work \cite{moiseev2010maxwell} on sub-Riemannian geometry on $SE(2)$ by Sachkov and Moiseev, it is shown that the exact sub-Riemannian distance \(d_{sr}\) is invariant under the fundamental symmetries \(\e^i\). However, these same symmetries hold true for the Riemannian distance \(d\). Moreover, because the approximative distances use the logarithmic coordinates \(c^i\) and half-angle coordinates \(b^i\) they also carry the same symmetries. The following lemma makes this precise.
    
    \begin{lemma} 
        \label{res:distance_symmetries} 
        \emph{\textbf{(Symmetries of the exact distance and all proposed approximations)}}\\
        All exact and approximative (sub)-Riemannian distances (w.r.t. the reference point \(\bp_0\))
        are invariant under all the fundamental symmetries \(\e_i\).
    \end{lemma}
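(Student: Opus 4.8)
The plan is to reduce the statement to three generating symmetries, and then check invariance separately for the approximative distances (immediate) and for the exact Riemannian and sub-Riemannian distances (the real content). By \Cref{res:symmetries_b_coordinates} the eight maps \(\e^i\) act on the half-angle coordinates — and, via \eqref{eq:relation_c_and_b}, on the logarithmic coordinates — purely by sign changes, so together they form a group isomorphic to \((\mathbb{Z}/2)^3\). Hence a function on \(\bbM_2\) is invariant under all eight \(\e^i\) as soon as it is invariant under any three of them whose sign patterns in \Cref{tab:symmetries_half_angle} are linearly independent over \(\mathbb{Z}/2\).

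For the approximative distances the invariance can be read off directly. Using \(x^2 + y^2 = (b^1)^2 + (b^2)^2\), \(\theta = b^3\), and \(c^i = b^i / \sinc \tfrac{\theta}{2}\) for \(i = 1,2\) with \(c^3 = b^3\), each of \(\rho_c\), \(\rho_b\), \(\rho_{c,sr}\), \(\rho_{b,sr}\) and the lower bound \(l\) of \eqref{eq:lower_bound_first} is a function of the three squares \((b^1)^2, (b^2)^2, (b^3)^2\) only, hence so are the \(\min/\max\)-combinations \(\rho_{c,com}\) and \(\rho_{b,com}\). Since \Cref{res:symmetries_b_coordinates} shows that each \(\e^i\) merely flips signs of the \(b^j\), every such function — in particular every approximative distance — is invariant under all eight \(\e^i\).

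For \(d(\bp_0, \cdot)\) and \(d_{sr}(\bp_0, \cdot)\) I would take the generators \(\e^7\), \(\e^4\), \(\e^5\), whose sign patterns \((+,-,-)\), \((-,+,-)\), \((-,-,-)\) are independent over \(\mathbb{Z}/2\). Composing \eqref{eq:reflectional_symmetries}--\eqref{eq:symmetries_relations} (or matching \Cref{tab:symmetries_half_angle}) gives the closed forms \(\e^7(x,y,\theta) = (x,-y,-\theta)\) and \(\e^4(x,y,\theta) = (-x,y,-\theta)\), while \(\e^5\) — the symmetry flipping all of \(c^1, c^2, c^3\) — is the group inversion \(g\,\bp_0 \mapsto g^{-1}\bp_0\), because \(\log g^{-1} = -\log g\). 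A short pullback computation on the coframe gives \((\e^7)^* \omega^1 = \omega^1\), \((\e^7)^* \omega^2 = -\omega^2\), \((\e^7)^* \omega^3 = -\omega^3\) and \((\e^4)^* \omega^1 = -\omega^1\), \((\e^4)^* \omega^2 = \omega^2\), \((\e^4)^* \omega^3 = -\omega^3\); since \(\cG\) in \eqref{eq:diagonal_metric} is a sum of squared coframe elements, both maps are isometries of \(\cG\) fixing \(\bp_0\), and since each \(\omega^i\) is sent to a multiple of itself the same holds for the dual vector fields \(\cA_i\), so the horizontal distribution \(H = \mathrm{span}\{\cA_1, \cA_3\}\) of \eqref{eq:sub_riemannian_distance} is preserved; hence \(\e^7\) and \(\e^4\) preserve both \(d(\bp_0, \cdot)\) and \(d_{sr}(\bp_0, \cdot)\). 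For \(\e^5\), left-invariance together with symmetry of the distance gives \(d(\bp_0, g^{-1}\bp_0) = d(g\,\bp_0, \bp_0) = d(\bp_0, g\,\bp_0)\), and the same argument applies verbatim to \(d_{sr}\). As \(\e^7, \e^4, \e^5\) generate the full group, \(d\) and \(d_{sr}\) are invariant under all eight \(\e^i\) (recovering for \(d_{sr}\) the symmetries of \cite{moiseev2010maxwell}).

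The point that needs care — and the main obstacle — is that not every generator can be handled through the metric: \(\e^6\) is not a \(\cG\)-isometry, since \((\e^6)^* \omega^1 = \mathrm{d}x + y\, \mathrm{d}\theta\), so the \(\mathrm{d}x \otimes \mathrm{d}x\) coefficient of \((\e^6)^* \cG\) equals \(w_1^2\) rather than \(w_1^2 \cos^2\theta + w_2^2 \sin^2\theta\), and these differ once \(\zeta \neq 1\). Thus at least one generator must instead be recognised as the group inversion and treated via left-invariance, as above. The remaining work — verifying the three closed-form identities and the three coframe-pullback signs — is routine.
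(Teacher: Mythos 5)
Your proof is correct and follows essentially the same route as the paper's: reduce to a generating triple consisting of two coordinate sign-flips that are isometries of \(\cG\) plus the Lie-group inversion \(\e^5\) handled via left-invariance and symmetry of \(d\), and observe that the approximative distances are even functions of the \(b^i\) (equivalently \(c^i\)). The only differences are cosmetic — the paper picks the generators \(\e^3,\e^4,\e^5\) and checks pushforwards \(\e^3_*\cA_i=\pm\cA_i\) where you pick \(\e^7,\e^4,\e^5\) and pull back the coframe — plus your (correct) extra remarks that the horizontal distribution is preserved in the sub-Riemannian case and that \(\e^6\) itself is not a \(\cG\)-isometry, which the paper does not spell out.
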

    \begin{proof}
        By \Cref{tab:symmetries_half_angle} one sees that \(\e^3, \e^4\), and \(\e^5\) also generate all symmetries. Therefore if we just show that all distances are invariant under these select three symmetries we also have shown that they are invariant under all symmetries. We will first show the exact distance, in either the Riemannian or sub-Riemannian case, is invariant w.r.t. these three symmetries, i.e. \(d(\bp) = d(\e^i(\bp))\) for $i \in \{3,4,5\}$. 
        
        By \eqref{eq:symmetries_relations} and \eqref{eq:reflectional_symmetries} one has \(\e^3(x,y,\theta)=(-x,-y,\theta)\) and \(\e^4(x,y,\theta) = (-x,y,-\theta)\). Now consider the push forward \(\e^3_*\). By direct computation (in \((x,y,\theta)\) coordinates) we have \(\e^3_* \At{\cA_i}_\bp = \pm \At{\cA_i}_{\e^3(\bp)}\). Because the metric tensor field \(\cG\) \eqref{eq:diagonal_metric} is diagonal w.r.t. to the \(\cA_i\) basis this means that \(\e^3\) is a isometry. Similarly, \(\e^4\) is an isometry. Being an isometry of the metric \(\cG\) we may directly deduce that \(\e^3\) and \(\e^4\) preserve distance. The \(\e^5\) symmetry flips all the signs of the \(c^i\) coordinates which amounts to Lie algebra inversion: \( -\log g = \log(\e^5(g)) \). Taking the exponential on both sides shows that \(g^{-1} = \e^5(g)\). By left-invariance of the metric we have \(d(g \bp_0, \bp_0)  = d(\bp_0, g^{-1} \bp_0)\), which holds in both the Riemannian and sub-Riemannian case, and thus \( d(g\bp_0) = d(\e^5(g\bp_0)) \). 
        
        That all approximative distances (both in the Riemannian and sub-Riemannian case) are also invariant under all the symmetries is not hard to see: every \(b^i\) and \(c^i\) term is either squared or the absolute value is taken. Flipping signs of these coordinates, recall \Cref{res:symmetries_b_coordinates}, has no effect on the approximative distance.
    \end{proof}
    
    \begin{corollary} 
        \label{res:kernel_symmetries}
        \emph{\textbf{(All kernels preserve symmetries)}} \\   
        The exact kernel and all approximative kernels have the same fundamental symmetries.
    \end{corollary}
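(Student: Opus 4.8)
The plan is to observe that every kernel in the paper is, by construction, one fixed scalar function applied to a distance, so that symmetry invariance transfers from the distance to the kernel for free. Concretely, introduce \(\phi_t^{\a}:\bbR_{\geq 0}\to\bbR_{\geq 0}\), \(\phi_t^{\a}(r):=\tfrac{t}{\beta}\,(r/t)^{\beta}\), which is a single well-defined function since \(\a>1\) guarantees \(0<\beta<\infty\). By \eqref{eq:morphological_kernel} the exact kernel is \(k^{\a}=\phi_t^{\a}\circ d(\bp_0,\cdot)\), by \eqref{eq:half_angle_kernel} the half-angle kernel is \(k_b^{\a}=\phi_t^{\a}\circ\rho_b\), and every other approximative kernel (those built from \(\rho_c\), \(\rho_{b,sr}\), \(\rho_{c,sr}\), \(\rho_{b,com}\), \(\rho_{c,com}\)) has exactly the same form, with the \emph{same} \(\phi_t^{\a}\) and only the distance factor changing.

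First I would invoke \Cref{res:distance_symmetries}, which already asserts that \(d(\bp_0,\cdot)\) and each approximative distance \(\rho\) are invariant under all eight fundamental symmetries, i.e.\ \(\rho\circ\e^{i}=\rho\) for \(i=0,\dots,7\). Then for any kernel of the form \(k=\phi_t^{\a}\circ\rho\) and any fundamental symmetry \(\e^{i}\) one computes
\[
 k\circ\e^{i}\;=\;\phi_t^{\a}\circ(\rho\circ\e^{i})\;=\;\phi_t^{\a}\circ\rho\;=\;k,
\]
so \(k\) inherits invariance under every \(\e^{i}\). Applying this to \(k^{\a}\) and to each approximative kernel shows they all carry precisely the sign-flip symmetries recorded in \Cref{tab:symmetries_half_angle}, which is the claim. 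Since dilation is solved by \(-(k_t^{\a}\mathbin{\square}-U)\) with the \emph{same} kernel \eqref{eq:dilation_solution}, this covers the dilation scale space as well.

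There is no real obstacle here: the content of the statement has been pushed entirely into \Cref{res:distance_symmetries}, and the only remaining point is the trivial one that post-composing a symmetric function with a fixed, symmetry-independent map preserves the symmetry. If one prefers to read ``the same symmetries'' as an equality of symmetry \emph{sets} rather than common invariance, one adds the observation that each \(\e^{i}\) is an involutive diffeomorphism of \(\bbM_2\) fixing \(\bp_0\), so the eight \(\e^{i}\) are the natural shared symmetry family for all objects built from the \(\bp_0\)-based distance, and the displayed computation shows none of them is broken upon passing to any of the kernels.
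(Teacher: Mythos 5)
Your proposal is correct and is essentially the paper's own argument: the paper likewise notes that each kernel is a direct function of its (approximative) distance and then cites \Cref{res:distance_symmetries} to conclude immediately. You have merely spelled out explicitly the composition \(k=\phi_t^{\a}\circ\rho\) and the one-line invariance computation that the paper leaves implicit.
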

    \begin{proof}
        The kernels are direct functions of the exact and approximative distances, recall for example \eqref{eq:morphological_kernel}, so from \Cref{res:distance_symmetries} we can immediately conclude that they also carry the 8 fundamental symmetries.
    \end{proof}
    
    In \Cref{fig:true_distance_plot_intro} the previous lemma can be seen. The two fundamental symmetries \(\e^2\) and \(\e^1\) correspond, respectively, to reflecting the isocontours (depicted in colors) along their short edge and long axis. The \(\e^6\) symmetry corresponds to mapping the positive \(\theta\) isocontours to their negative \(\theta\) counterparts. In \Cref{fig:symmetries_overlay_rho_b} one can see an isocontour of \(\rho_b\) together with the symmetry ``planes'' of \(\e_2\), \(\e_1\) and \(\e_6\).

    \begin{figure}
        \centering
        \includegraphics[width=0.8\linewidth]{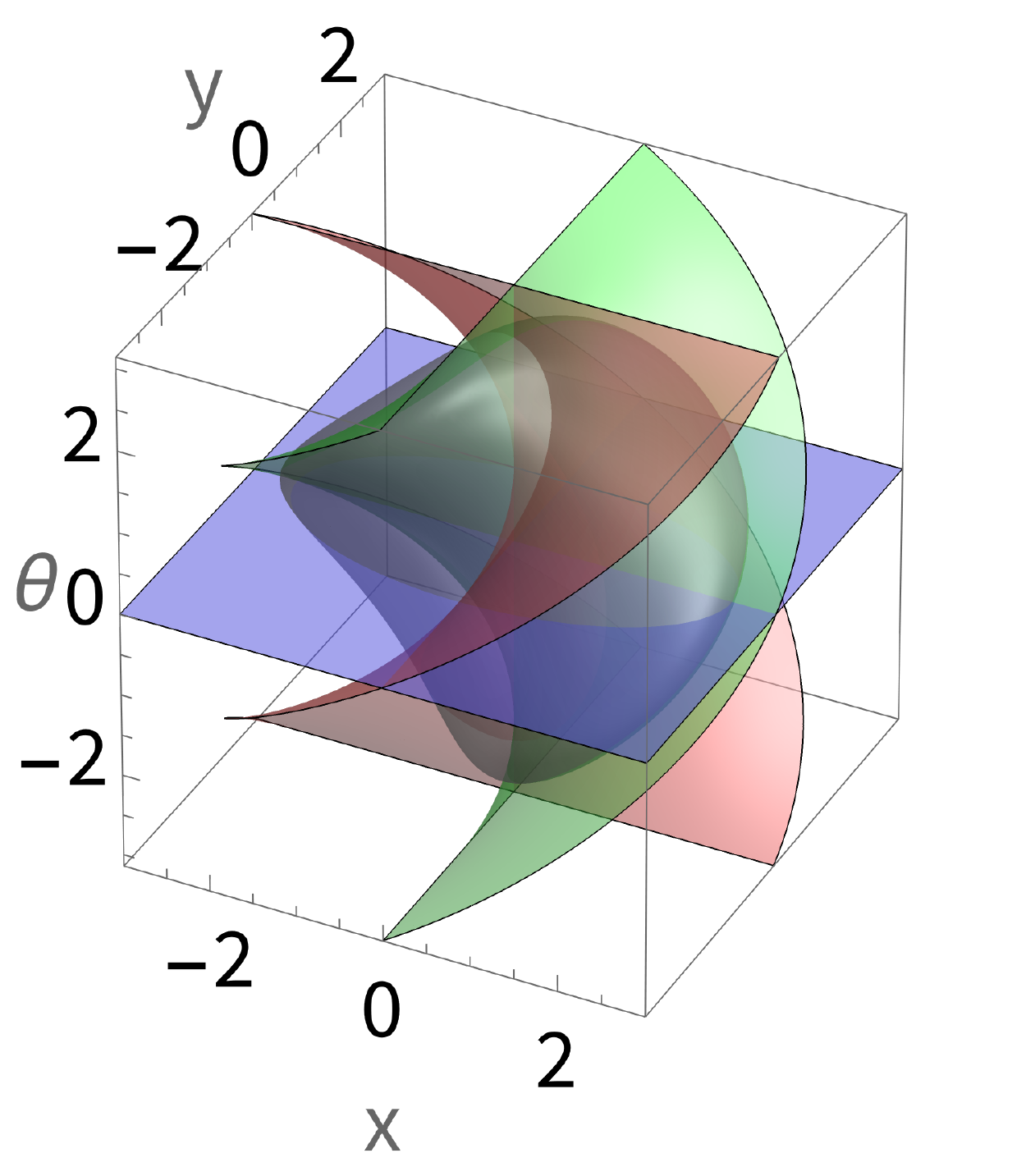}
        \caption{In grey the isocontour \(\rho_b=2.5\), together with the symmetry ``planes'' of \(\e_2\), \(\e_1\) and \(\e_6\), as also plotted in \Cref{fig:fixed_points_symmetries_plot}. The metric parameters are \((w_1,w_2,w_3)=(1,2,1)\). }
        \label{fig:symmetries_overlay_rho_b}
    \end{figure}

\subsection{Simple Global Bounds} \label{sec:global_bounds}

    Next we provide some basic global lower and upper bounds for the exact Riemannian distance \(d\) \eqref{eq:riemannian_distance}. Recall that the lower bound \(l\) plays an important role in the combined approximation \(\rho_{c,com}\) \eqref{eq:combined_distance_approximation} when far from the reference point \(\bp_0\).
    
    \begin{lemma} \label{res:global_bounds} 
    \emph{\textbf{(Global bounds on distance)}} \\
        The exact Riemannian distance \(d=d(\bp_0,\cdot)\) is greater than or equal to the following lower bound \(l: \bbM_2 \to \bbR\):
        \begin{equation*}
            l := \sqrt{ (w_1 x)^2 + (w_1 y)^2 + (w_3 \theta)^2 } \leq d
        \end{equation*}
        and less than or equal to the following upper bounds \(u_1, u_2 : \bbM_2 \to \bbR\):
        \begin{align*}
            d \leq u_1 &:= \sqrt{ (w_2 x)^2 + (w_2 y)^2 + (w_3 \theta)^2 }\\
            d \leq u_2 &:= \sqrt{ (w_1 x)^2 + (w_1 y)^2 } + w_3 \pi
        \end{align*} 
    \end{lemma}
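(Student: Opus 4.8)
The plan is to establish the three inequalities separately, in each case by exhibiting an explicit curve (for upper bounds) or by a pointwise comparison of integrands along an arbitrary curve (for the lower bound), exploiting that everything is left-invariant so we may take the endpoint to be $g\bp_0$ with $g = (x,y,\theta)$ and the basepoint to be $\bp_0$.

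\emph{Lower bound $l \le d$.} Let $\g(s)=(x(s),y(s),\theta(s))$ be any piecewise $C^1$ curve from $\bp_0$ to $(x,y,\theta)$. Writing $\dot\g$ in the left-invariant frame, $\dot\g = (\cA_1\text{-comp})\cA_1 + (\cA_2\text{-comp})\cA_2 + \dot\theta\,\cA_3$, the diagonal metric \eqref{eq:diagonal_metric} gives $\Vert\dot\g(s)\Vert^2 = w_1^2(\dot x\cos\theta+\dot y\sin\theta)^2 + w_2^2(-\dot x\sin\theta+\dot y\cos\theta)^2 + w_3^2\dot\theta^2 \ge w_1^2(\dot x^2+\dot y^2) + w_3^2\dot\theta^2$, since $w_2\ge w_1$ and the first two spatial terms form an orthogonal decomposition of $w_1^2(\dot x^2+\dot y^2)$ plus a nonnegative remainder. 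Hence $L_{\cG}(\g) \ge \int_0^1 \sqrt{w_1^2(\dot x^2+\dot y^2)+w_3^2\dot\theta^2}\,\mathrm ds$, which is exactly the length (in a \emph{flat} Euclidean metric with weights $w_1,w_1,w_3$ on the coordinates $x,y,\theta$) of the projected curve; by the triangle inequality in that flat space this is at least $\sqrt{(w_1 x)^2+(w_1 y)^2+(w_3\theta)^2} = l$. Taking the infimum over $\g$ gives $l\le d$.

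\emph{Upper bound $d\le u_1$.} Here I use the same flat-comparison idea in the opposite direction: since $w_2\ge w_1$ we have $\Vert\dot\g(s)\Vert^2 \le w_2^2(\dot x^2+\dot y^2) + w_3^2\dot\theta^2$ for \emph{every} curve. Choosing $\g$ to be the straight line segment $s\mapsto (sx,sy,s\theta)$ (legitimate since $\theta\in[-\pi,\pi)$, so this stays in the fundamental domain and is piecewise $C^1$), its $\cG$-length is at most $\int_0^1\sqrt{w_2^2(x^2+y^2)+w_3^2\theta^2}\,\mathrm ds = \sqrt{(w_2 x)^2+(w_2 y)^2+(w_3\theta)^2} = u_1$, and $d$ is bounded by the length of this particular curve.

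\emph{Upper bound $d\le u_2$.} This one needs a concatenated path that decouples the spatial and angular costs. I would first rotate in place at $\bp_0$: follow $\cA_3$ from $\theta=0$ to some intermediate angle, contributing length $w_3\cdot(\text{angle turned})\le w_3\pi$; then move purely along $\cA_1$ in a straight spatial line to the target position $(x,y)$ — if the intermediate angle is chosen to point along the direction of $(x,y)$, this leg has $\cG$-length $w_1\sqrt{x^2+y^2}$ exactly, since motion along $\cA_1$ only picks up the $w_1$ weight; finally rotate in place again to reach the target angle $\theta$, costing another at most $w_3\pi$. Naively this gives $w_1\sqrt{x^2+y^2}+2w_3\pi$, so to land the sharper $w_3\pi$ constant I would instead note that the two angular rotations can be combined: pick the single intermediate heading $\phi$ that simultaneously equals (up to the needed $\pi$-ambiguity, using that $\cA_1$ traversed backwards costs the same) a direction along $(x,y)$, and observe the total angular variation along the whole concatenation from $0$ to $\phi$ to $\theta$ can be kept within a total turn of $\pi$ by the freedom to travel the spatial segment "forwards" or "backwards" along $\cA_1$ and to pick $\phi$ in either of the two antipodal headings. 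The main obstacle — and the only genuinely delicate point — is this bookkeeping of the angular budget: making rigorous that the intermediate heading and the sign of the spatial traversal can always be chosen so that the accumulated angle stays within $[-\pi,\pi]$ (equivalently, total rotation $\le\pi$) for every target $(x,y,\theta)$ with $\theta\in[-\pi,\pi)$. I expect this to reduce to a short case analysis on the sign of the angle between the target heading $\theta$ and the spatial direction $\arctantwo(y,x)$, using the antipodal freedom to always route the "short way around."

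Taking infima / the explicit curves in each case and combining yields $l\le d\le\min(u_1,u_2)$, which is the claim.
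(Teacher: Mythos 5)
Your proposal is correct and follows essentially the same route as the paper: the lower bound \(l\) and upper bound \(u_1\) come from comparison with the flat spatially isotropic metrics of weights \((w_1,w_1,w_3)\) and \((w_2,w_2,w_3)\) (the paper phrases this as metric domination, you as pointwise integrand comparison plus an explicit straight-line curve, which is the same argument), and \(u_2\) comes from the same rotate--translate--rotate concatenation with the forwards/backwards choice. The one step you flag as delicate --- that one of the two traversal options keeps the total angular cost \(a+b\le\pi\) --- is left equally implicit in the paper's proof, and closes immediately by observing that the angular costs of the two options sum to exactly \(2\pi\), so their minimum is at most \(\pi\).
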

    \begin{proof}
    
        We will first show \(l \leq d\). Consider the following spatially isotropic metric:
        \begin{equation*}
            \tilde \cG = w_1^2\ \omega^1 \otimes \omega^1 + w_1^2\ \omega^2 \otimes \omega^2 + w_3^2 \ \omega^3 \otimes \omega^3.
        \end{equation*}
        We assumed w.l.o.g. that \(w_1 \leq w_2\) so we have for any vector \(v \in T\bbM_2\) that \( \Vert v\Vert_{\tilde \cG} \leq \Vert v\Vert_{\cG} \). From this we can directly deduce that for any curve \(\g\) on \(\bbM_2\) we have that \(L_{\tilde \cG}(\g) \leq L_{\cG}(\g)\). Now consider a length-minimizing curve \(\g\) w.r.t. \(\cG\) between the reference point \(\bp_0\) and some end point \(\bp\). We then have the chain of (in)equalities:
        \begin{equation*}
            d_{\tilde \cG}(\bp) \leq L_{\tilde \cG}(\g) \leq L_{\cG}(\g) = d_{\cG}(\bp)
        \end{equation*}
        Furthermore, because the metric \(\tilde \cG\) is spatially isotropic it can be equivalently be written as:
        \begin{equation*}
            \tilde \cG = w_1^2\ dx \otimes dx + w_1^2\ dy \otimes dy + w_3^2 \ d\theta \otimes d\theta,
        \end{equation*}
        which is a constant metric on the coordinate covector fields, and thus:
        \begin{equation*}
             d_{\tilde \cG}(\bp) = \sqrt{ (w_1 x)^2 + (w_1 y)^2 + (w_3 \theta)^2 } = l.
        \end{equation*}
        Putting everything together gives the desired result of \(l \leq d\). To show that \(d \leq u_1\) can be done analogously.
        
        As for showing \(d \leq u_2\) we will construct a curve \(\g\) of which the length \(L(\g)\) w.r.t. \(\cG\) can be bounded from above with \(u_2\). This in turn shows that \(d \leq u_2\) by definition of the distance. Pick a destination position and orientation \(\bp = (\bx, \bn)\). The constructed curve \(\g\) will be as follows. We start by aligning our starting orientation \(\bn_0 = (1,0) \in S^1\) towards the destination position \(\bx\). This desired orientation towards \(\bx\) is \(\hat \bx := \frac{\bx}{r}\) where \(r = \Vert \bx \Vert = \sqrt{x^2 + y^2}\). This action will cost \(w_3 a\) for some \(a \geq 0\). Once we are aligned with \(\hat \bx\) we move towards \(\bx\). Because we are aligned this action will cost \(w_1 r\). Now that we are at \(\bx\) we align our orientation with the destination orientation \(\bn\), which will cost \(w_3b\) for some \(b \geq 0\). Altogether we have \(L(\g) = w_1 r + w_3 (a+b)\). In its current form the constructed curve actually doesn't have that \(a+b\leq \pi\) as desired. To fix this we realise that we did not necessarily had to align with \(\hat \bx\). We could have aligned with \(-\hat \bx\) and move backwards towards \(\bx\), which will also cost \(w_1 r\). One can show that one of the two methods (either moving forwards or backwards towards \(\bx\)) indeed has that \(a+b\leq \pi\) and thus \(d \leq u_2\).
    \end{proof}
    
    These bounds are simple but effective: they help us prove a multitude of insightful corollaries.
    
    \begin{corollary}
        \label{res:global_bounds_rho_b}
        \emph{\textbf{(Global error distance)}} \\
        Simple manipulations, together with the fact that \(x^2 + y^2 = (b^1)^2 + (b^2)^2\), give the following inequalities between \(l, u_1\) and \(\rho_b\):
        \begin{align*}
            l \leq \rho_b \leq u_1,\ 
            \frac{1}{\zeta} u_1 \leq \rho_b \leq \zeta l. 
        \end{align*}
        The second equation can be extended to inequalities between \(\rho_b\) and \(d\):
        \begin{equation} \label{eq:rho_b_and_d}
            \frac{1}{\zeta} d \leq \rho_b \leq \zeta d 
        \end{equation}
    \end{corollary}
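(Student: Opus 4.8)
The plan is to reduce everything to an elementary term-by-term comparison of the three radicands after squaring. The key observation is that \(b^3 = \theta\) by \eqref{eq:half_angle_coordinates}, and, as noted in the statement, \((b^1)^2 + (b^2)^2 = x^2 + y^2\) — this is because \((x,y) \mapsto (b^1,b^2)\) is an orthogonal transformation (a planar rotation over \(-\theta/2\)). Consequently
\begin{align*}
    l^2 &= w_1^2\left((b^1)^2 + (b^2)^2\right) + w_3^2\theta^2, \\
    \rho_b^2 &= w_1^2 (b^1)^2 + w_2^2 (b^2)^2 + w_3^2\theta^2, \\
    u_1^2 &= w_2^2\left((b^1)^2 + (b^2)^2\right) + w_3^2\theta^2,
\end{align*}
so all three quantities carry the same \(w_3^2\theta^2\) term and differ only in the weights assigned to \((b^1)^2\) and \((b^2)^2\), with the convention \(w_1 \le w_2\), i.e. \(\zeta \ge 1\), fixing all the relevant directions.

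First I would establish \(l \le \rho_b \le u_1\): since \(w_1 \le w_2\), replacing the weight \(w_1^2\) on \((b^2)^2\) in \(l^2\) by \(w_2^2\) yields \(\rho_b^2\), and replacing \(w_1^2\) on \((b^1)^2\) in \(\rho_b^2\) by \(w_2^2\) yields \(u_1^2\); monotonicity of the square root finishes it. Next, for \(\tfrac1\zeta u_1 \le \rho_b \le \zeta l\), I would compare \(\zeta^{-2}u_1^2\) and \(\zeta^2 l^2\) against \(l^2\) and \(u_1^2\): since \(\zeta^{-2}\le 1\le \zeta^2\), one computes \(\zeta^{-2}u_1^2 = w_1^2\left((b^1)^2+(b^2)^2\right) + \zeta^{-2}w_3^2\theta^2 \le l^2\) and \(\zeta^2 l^2 = w_2^2\left((b^1)^2+(b^2)^2\right) + \zeta^{2}w_3^2\theta^2 \ge u_1^2\); chaining these with \(l\le\rho_b\) and \(\rho_b\le u_1\) from the previous step gives \(\tfrac1\zeta u_1 \le l \le \rho_b\) and \(\rho_b \le u_1 \le \zeta l\).

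Finally, to obtain \eqref{eq:rho_b_and_d} I would invoke \Cref{res:global_bounds}, which provides \(l \le d \le u_1\). Combining \(d \le u_1\) with \(\tfrac1\zeta u_1 \le \rho_b\) gives \(\tfrac1\zeta d \le \rho_b\), and combining \(l \le d\) with \(\rho_b \le \zeta l\) gives \(\rho_b \le \zeta d\).

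There is no genuine obstacle here: the argument is purely arithmetic once the identity \((b^1)^2 + (b^2)^2 = x^2 + y^2\) is in place. The only points deserving a sentence each are the justification of that identity (orthogonality of the \((x,y)\mapsto(b^1,b^2)\) map) and keeping the direction of every inequality consistent with the standing assumption \(w_2 \ge w_1\), \(\zeta \ge 1\).
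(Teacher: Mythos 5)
Your proposal is correct and follows exactly the route the paper intends: the corollary is stated as following from ``simple manipulations'' using \((b^1)^2+(b^2)^2=x^2+y^2\) (which you rightly justify by noting that \((x,y)\mapsto(b^1,b^2)\) is a rotation by \(-\theta/2\)) together with the bounds \(l\leq d\leq u_1\) from \Cref{res:global_bounds}, and your term-by-term comparison of the squared quantities plus the final chaining supplies precisely those manipulations. No gaps.
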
 
        
    \begin{remark} 
        \label{rem:rho_b_becomes_exact}
        If \(w_1 = w_2 \iff \zeta = 1\), i.e. the spatially isotropic case, then the lower and upper bound coincide, thus becoming exact. Because \(\rho_b\) is within the lower and upper bound it also becomes exact.
    \end{remark}
    
    \begin{corollary}
        \label{res:global_bound_kernel}
        \emph{\textbf{(Global error kernel)}} \\
        Globally the error is independent of time $t>0$ and is estimated by the spatial anisotropy $\zeta \geq 1$ \eqref{eq:spatial_anisotropy} as follows:
        \begin{equation*}
             \zeta^{-\beta} k^\a \leq k_b^\a \leq \zeta^{\beta} k^\a \ .
        \end{equation*}
        For $\zeta=1$ there is no error.
    \end{corollary}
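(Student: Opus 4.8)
The plan is to reduce the claim directly to the two-sided distance estimate $\zeta^{-1} d \leq \rho_b \leq \zeta d$ from \Cref{res:global_bounds_rho_b} (which itself rests on the elementary global bounds $l \leq d \leq u_1$ of \Cref{res:global_bounds} together with $x^2+y^2 = (b^1)^2+(b^2)^2$). Since $\alpha>1$ forces $\beta \in (1,\infty)$ through $\tfrac1\alpha+\tfrac1\beta=1$, the map $s \mapsto s^\beta$ is strictly increasing on $\bbR_{\geq 0}$, so applying it termwise to $\zeta^{-1} d \leq \rho_b \leq \zeta d$ preserves the chain and yields $\zeta^{-\beta} d^\beta \leq \rho_b^\beta \leq \zeta^{\beta} d^\beta$ pointwise on $\bbM_2$.

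Next I would recall that by definition $k^\a = \tfrac{t}{\beta}\Par{d/t}^\beta$ and $k_b^\a = \tfrac{t}{\beta}\Par{\rho_b/t}^\beta$ with $d = d(\bp_0,\cdot)$, so both kernels arise from $d$ and $\rho_b$ respectively by the \emph{same} strictly increasing, positively homogeneous transformation $s \mapsto \tfrac{t}{\beta}(s/t)^\beta = \tfrac{t^{1-\beta}}{\beta}\, s^\beta$. Multiplying the displayed chain by the positive constant $t^{1-\beta}/\beta$ therefore gives exactly $\zeta^{-\beta} k^\a \leq k_b^\a \leq \zeta^{\beta} k^\a$. The constant $t^{1-\beta}/\beta$ cancels in the comparison, which is precisely why the estimate carries no $t$-dependence; equivalently $k_b^\a / k^\a = (\rho_b/d)^\beta$ wherever $d \neq 0$, and this ratio is pinched between $\zeta^{-\beta}$ and $\zeta^{\beta}$.

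Finally I would dispose of the degenerate point: at $\bp_0$ one has $d(\bp_0)=0$ and, since $b^1=b^2=b^3=0$ there, also $\rho_b(\bp_0)=0$, hence $k^\a(\bp_0)=k_b^\a(\bp_0)=0$ and the inequality holds trivially — working from the raised distance chain rather than from the ratio avoids any division by zero. For $\zeta=1$ the bounds $l$ and $u_1$ coincide (\Cref{rem:rho_b_becomes_exact}), so $\rho_b = d$ and $k_b^\a = k^\a$ identically, which gives the ``no error'' statement. There is essentially no obstacle here: the only things to be careful about are recording that $\beta>1>0$ so that exponentiation is monotone, and invoking the already-established distance estimate instead of re-deriving it from \Cref{res:global_bounds}.
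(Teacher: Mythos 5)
Your proof is correct and takes essentially the same route as the paper's: both reduce the kernel comparison to the two-sided distance estimate $\zeta^{-1} d \leq \rho_b \leq \zeta d$ of \Cref{res:global_bounds_rho_b} and then apply the monotone map $s \mapsto \tfrac{t}{\beta}(s/t)^{\beta}$, with the factor $t^{1-\beta}/\beta$ cancelling so the bound is $t$-independent. Your extra care at the degenerate point $\bp_0$ and in the $\zeta=1$ case is a minor refinement of, not a departure from, the paper's argument.
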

    \begin{proof}
        We will only prove the second inequality, the first is done analogously.
        \begin{equation*}
        \begin{split}
            k_b^\a 
            &:= \frac{1}{\beta} (\rho_b/t)^\beta 
            \leq \frac{1}{\beta} \Par{\zeta d/t}^\beta \\
            &= \zeta^{\beta} \Par{\frac{1}{\beta} \Par{d/t}^\beta}
            = \zeta^{\beta} k^\a
        \end{split}
        \end{equation*}
    \end{proof}
    The previous result indicates that problems can arise if $\zeta \to \infty$, which indeed turns out to be the case: 
    \begin{corollary} 
        \label{res:rho_b_goes_certainly_bad}
        \emph{\textbf{(Observing the problem)}} \\
        If we restrict ourselves to \(x=\theta=0\) we have that \(u_1 = \rho_b = \rho_c = w_2\Abs{y}\). From this we can deduce that one can be certain that both \(\rho_b\) and \(\rho_c\) become bad approximations away from \(\bp_0\). Namely, when \(\zeta > 1 \iff w_2 > w_1\)  both approximations go above \(u_2\) if one looks far enough away from \(\bp_0\). How ``fast'' it goes bad is determined by all metric parameters. Namely, the intersection of the approximations \(\rho_b\) and \(\rho_c\), and \(u_2\) is at \(\Abs{ y} = \frac{w_3\pi}{w_2 - w_1}\), or equivalently at \(\rho = \frac{w_3\pi}{1 - \zeta^{-1}}\). This intersection is visible in \Cref{fig:exact_distance_and_bounds} in the higher anisotropy cases. From this expression of the intersection we see that in the cases \(w_3 \to 0\) and \(\zeta \to \infty\) the Riemannian distance approximations \(\rho_b\) and \(\rho_c\) quickly go bad. We will see exactly the same behaviour in \Cref{res:asymptotics} and Remark~\ref{rem:tolerance_bound}.
    \end{corollary}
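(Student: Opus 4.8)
The plan is to verify the claimed identity on the slice $\{x=\theta=0\}$ by direct substitution, and then to upgrade the elementary comparison $w_2|y| > w_1|y| + w_3\pi$ into a genuine \emph{over}estimation statement by invoking the upper bound $d \le u_2$ established in \Cref{res:global_bounds}.

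First I would restrict to $\bp=(0,y,0)$. On this slice the half-angle coordinates \eqref{eq:half_angle_coordinates} collapse to $(b^1,b^2,b^3)=(0,y,0)$, and since $\sinc\tfrac{\theta}{2}=1$ when $\theta=0$ the logarithmic coordinates \eqref{eq:logarithm_coordinates} collapse to $(c^1,c^2,c^3)=(0,y,0)$ as well. Substituting these into \eqref{eq:def_rho_b}, into \eqref{eq:logarithmic_distance_approximation}, and into the expression for $u_1$ from \Cref{res:global_bounds} gives $\rho_b=\rho_c=u_1=w_2|y|$ at once, which is the first identity. At the same time, on this slice the remaining two bounds of \Cref{res:global_bounds} read $l=w_1|y|$ and $u_2=w_1|y|+w_3\pi$, so the exact distance is pinned in the interval $[\,w_1|y|,\ \min(w_2|y|,\ w_1|y|+w_3\pi)\,]$.

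Next I would compare: the approximations $\rho_b=\rho_c=w_2|y|$ exceed $u_2$ exactly when $(w_2-w_1)|y| > w_3\pi$, i.e. when $|y| > \frac{w_3\pi}{w_2-w_1}$, which is a finite threshold precisely in the anisotropic case $\zeta>1$. Because $d\le u_2$ by \Cref{res:global_bounds}, beyond this threshold one has $\rho_b=\rho_c=w_2|y|>u_2\ge d$, and moreover the gap obeys $\rho_b-d\ge (w_2-w_1)|y|-w_3\pi\to+\infty$ as $|y|\to\infty$; hence for any $\zeta>1$ the Riemannian approximations are certain to overshoot the genuine distance far enough from $\bp_0$. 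Plugging the threshold value of $|y|$ back into $\rho_b=w_2|y|$ yields $\rho_b=\frac{w_2 w_3\pi}{w_2-w_1}=\frac{w_3\pi}{1-\zeta^{-1}}$, which is the "equivalently at $\rho=\dots$" reformulation.

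Finally I would read off the two limiting regimes directly from the threshold $\tfrac{w_3\pi}{w_2-w_1}$: with $w_1$ (and $w_2$) fixed, letting $w_3\to 0$ sends the threshold to $0$; and with $w_1,w_3$ fixed, letting $w_2\to\infty$ (so $\zeta\to\infty$) likewise sends it to $0$. In either regime the overestimation sets in arbitrarily close to $\bp_0$, matching the later asymptotic analysis referenced in the statement. I do not expect a real obstacle here: every step is a substitution or a one-line inequality, and the only genuine input is the nontrivial upper bound $d\le u_2$ of \Cref{res:global_bounds} — this is exactly what converts "$\rho_b$ is large" into "$\rho_b$ is much larger than the true distance". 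The one point requiring a word of care is the choice of slice: $x=\theta=0$ is precisely the direction along which $u_1$ (the only $w_2$-weighted bound on $d$) is attained, so the discrepancy between the approximations and the achievable length $u_2$ is maximal there, which is why this slice is the right witness to "observe the problem".
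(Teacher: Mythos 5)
Your proposal is correct and follows essentially the same route as the paper: the corollary is self-contained, resting on direct substitution into the coordinate formulas on the slice $x=\theta=0$ together with the bound $d\le u_2$ from \Cref{res:global_bounds}, which is exactly what you do. Your explicit remark that $d\le u_2$ is what converts ``$\rho_b$ is large'' into ``$\rho_b$ overestimates $d$'' makes the logic of the paper's inline reasoning a bit more transparent, but it is the same argument.
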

    
    
    \Cref{res:global_bounds} is visualized in \Cref{fig:exact_distance_and_bounds,fig:exact_distance_and_bounds_2}. In \Cref{fig:exact_distance_and_bounds} figure we consider the behavior of the exact distance and bounds along the \(y\)-axis, that is at \(x=\theta=0\). We have chosen to inspect the \(y\)-axis because it consist of points that are hard to reach from the reference point \(\bp_0\) when the spatial anisotropy is large, which makes it interesting. In contrast, along the \(x\)-axis \(l,d,\rho_b,\rho_c, u_1\) and \(w_1\Abs{x}\) all coincide, and is therefor uninteresting. To provide more insight we also depict the bounds along the \(y=x\) axis, see \Cref{fig:exact_distance_and_bounds_2}. Observe that in both figures, the exact distance \(d\) is indeed always above the lower bound \(l\) and below the upper bounds \(u_1\) and \(u_2\).
    
    \renewcommand{\figwidth}{0.49\linewidth}
    \renewcommand{\tikzfigwidth}{1.25\linewidth}
    \begin{figure}
        \centering%
        \begin{subfigure}{\figwidth}
            \centering
            \begin{tikzpicture}
    \begin{axis}[
        width = \tikzfigwidth,
        xlabel = {\(y\)},
        ylabel = {\(d\)},
        ymin = 0,
        ymax = 6,
        xmin = -3,
        xmax = 3,
        cycle list/Set1,
        legend pos = north west,
        unit vector ratio = 1 1
    ]
        \addplot+[thick, mark=none] table [x=y, y=d, col sep=comma] {data/x=theta=0/exact_data_x=theta=0_w=1,1,1.csv};
        \addlegendentry{\(d\)}
        
        \addplot+[thick, mark=none, domain=-3:3, samples=100]{ l(0,x,0,1,1,1) };
        \addlegendentry{\(l\)}
        
        \addplot+[thick, mark=none, domain=-3:3, samples=100]{ u1(0,x,0,1,1,1) };
        \addlegendentry{\(u_1\)}
        
        \addplot+[thick, mark=none, domain=-3:3, samples=100]{ u2(0,x,0,1,1,1) };
        \addlegendentry{\(u_2\)}
    \end{axis}
\end{tikzpicture}
            \caption{\(w_2 = 1\)}
        \end{subfigure}%
        \begin{subfigure}{\figwidth}
            \centering
            \begin{tikzpicture}
    \begin{axis}[
        width = \tikzfigwidth,
        xlabel = {\(y\)},
        ylabel = {\(d\)},
        ymin = 0,
        ymax = 6,
        xmin = -3,
        xmax = 3,
        cycle list/Set1,
        legend pos = south west,
        unit vector ratio = 1 1 1
    ]
        \addplot+[thick, mark=none] table [x=y, y=d, col sep=comma] {data/x=theta=0/exact_data_x=theta=0_w=1,2,1.csv};
        \addlegendentry{\(d\)}
        
        \addplot+[thick, mark=none, domain=-3:3, samples=100]{ l(0,x,0,1,2,1) };
        \addlegendentry{\(l\)}
        
        \addplot+[thick, mark=none, domain=-3:3, samples=100]{ u1(0,x,0,1,2,1) };
        \addlegendentry{\(u_1\)}
        
        \addplot+[thick, mark=none, domain=-3:3, samples=100]{ u2(0,x,0,1,2,1) };
        \addlegendentry{\(u_2\)}
        \legend{}; 
    \end{axis}
\end{tikzpicture}
            \caption{\(w_2 = 2\)}
        \end{subfigure}
        \begin{subfigure}{\figwidth}
            \centering
            \begin{tikzpicture}
    \begin{axis}[
        width = \tikzfigwidth,
        xlabel = {\(y\)},
        ylabel = {\(d\)},
        ymin = 0,
        ymax = 6,
        xmin = -3,
        xmax = 3,
        cycle list/Set1,
        legend pos = south west,
        unit vector ratio = 1 1 1
    ]
        \addplot+[thick, mark=none] table [x=y, y=d, col sep=comma] {data/x=theta=0/exact_data_x=theta=0_w=1,3,1.csv};
        \addlegendentry{\(d\)}
        
        \addplot+[thick, mark=none, domain=-3:3, samples=100]{ l(0,x,0,1,3,1) };
        \addlegendentry{\(l\)}
        
        \addplot+[thick, mark=none, domain=-3:3, samples=100]{ u1(0,x,0,1,3,1) };
        \addlegendentry{\(u_1\)}
        
        \addplot+[thick, mark=none, domain=-3:3, samples=100]{ u2(0,x,0,1,3,1) };
        \addlegendentry{\(u_2\)}
        \legend{}; 
    \end{axis}
\end{tikzpicture}
            \caption{\(w_2 = 3\)}
        \end{subfigure}%
        \begin{subfigure}{\figwidth}
            \centering
            \begin{tikzpicture}
    \begin{axis}[
        width = \tikzfigwidth,
        xlabel = {\(y\)},
        ylabel = {\(d\)},
        ymin = 0,
        ymax = 6,
        xmin = -3,
        xmax = 3,
        cycle list/Set1,
        legend pos = south west,
        unit vector ratio = 1 1 1
    ]
        \addplot+[thick, mark=none] table [x=y, y=d, col sep=comma] {data/x=theta=0/exact_data_x=theta=0_w=1,4,1.csv};
        \addlegendentry{\(d\)}
        
        \addplot+[thick, mark=none, domain=-3:3, samples=100]{ l(0,x,0,1,4,1) };
        \addlegendentry{\(l\)}
        
        \addplot+[thick, mark=none, domain=-3:3, samples=100]{ u1(0,x,0,1,4,1) };
        \addlegendentry{\(u_1\)}
        
        \addplot+[thick, mark=none, domain=-3:3, samples=100]{ u2(0,x,0,1,4,1) };
        \addlegendentry{\(u_2\)}
        \legend{}; 
    \end{axis}
\end{tikzpicture}
            \caption{\(w_2 = 4\)}
        \end{subfigure}%
        \caption{Exact distance and its lower and upper bounds (given in \Cref{res:global_bounds}) along the $y$-axis, i.e at \(x=\theta=0\), for increasing spatial anisotropy. We keep \(w_1=w_3=1\) and vary \(w_2\). The horizontal axis is \(y\) and the vertical axis the value of the distance/bound. Note how the exact distance \(d\) starts of linearly with a slope of \(w_2\), and ends linearly with a slope of \(w_1\).}
        \label{fig:exact_distance_and_bounds}
    \end{figure}
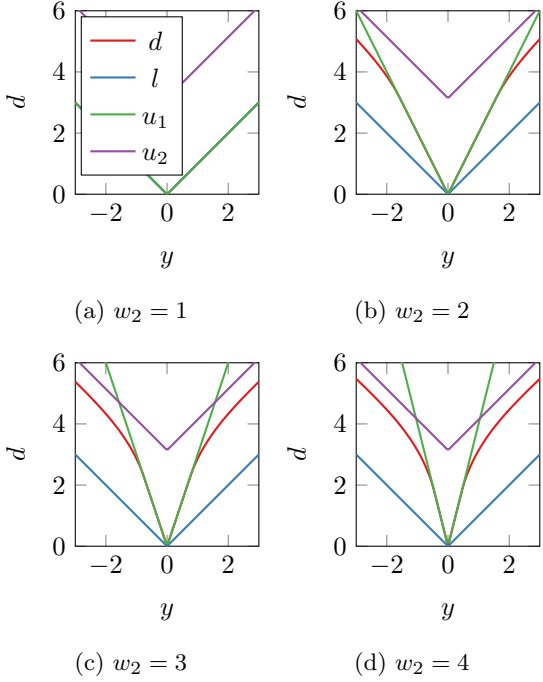
    
    \renewcommand{\figwidth}{0.49\linewidth}
    \renewcommand{\tikzfigwidth}{1.25\linewidth}
    \begin{figure}
        \centering%
        \begin{subfigure}{\figwidth}
            \centering
            \begin{tikzpicture}
    \begin{axis}[
        width = \tikzfigwidth,
        xlabel = {\(x=y\)},
        ylabel = {\(d\)},
        ymin = 0,
        ymax = 6,
        xmin = -3,
        xmax = 3,
        cycle list/Set1,
        legend pos = north west,
        unit vector ratio = 1 1 1
    ]
        \addplot+[thick, mark=none] table [x=y, y=d, col sep=comma] {data/x=y_theta=0/exact_data_x=y_theta=0_w=1,1,1.csv};
        \addlegendentry{\(d\)}
        \addplot+[thick, mark=none, domain=-3:3, samples=100]{ l(x,x,0,1,1,1) };
        \addlegendentry{\(l\)}
        \addplot+[thick, mark=none, domain=-3:3, samples=100]{ u1(x,x,0,1,1,1) };
        \addlegendentry{\(u_1\)}
        \addplot+[thick, mark=none, domain=-3:3, samples=100]{ u2(x,x,0,1,1,1) };
        \addlegendentry{\(u_2\)}
        \legend{}
    \end{axis}
\end{tikzpicture}
            \caption{\(w_2 = 1\)}
        \end{subfigure}%
        \begin{subfigure}{\figwidth}
            \centering
            \begin{tikzpicture}
    \begin{axis}[
        width = \tikzfigwidth,
        xlabel = {\(x=y\)},
        ylabel = {\(d\)},
        ymin = 0,
        ymax = 6,
        xmin = -3,
        xmax = 3,
        cycle list/Set1,
        legend pos = south west,
        unit vector ratio = 1 1 1
    ]
        \addplot+[thick, mark=none] table [x=y, y=d, col sep=comma] {data/x=y_theta=0/exact_data_x=y_theta=0_w=1,2,1.csv};
        \addlegendentry{\(d\)}
        \addplot+[thick, mark=none, domain=-3:3, samples=100]{ l(x,x,0,1,2,1) };
        \addlegendentry{\(l\)}
        \addplot+[thick, mark=none, domain=-3:3, samples=100]{ u1(x,x,0,1,2,1) };
        \addlegendentry{\(u_1\)}
        \addplot+[thick, mark=none, domain=-3:3, samples=100]{ u2(x,x,0,1,2,1) };
        \addlegendentry{\(u_2\)}
        \legend{}; 
    \end{axis}
\end{tikzpicture}
            \caption{\(w_2 = 2\)}
        \end{subfigure}
        \begin{subfigure}{\figwidth}
            \centering
            \begin{tikzpicture}
    \begin{axis}[
        width = \tikzfigwidth,
        xlabel = {\(x=y\)},
        ylabel = {\(d\)},
        ymin = 0,
        ymax = 6,
        xmin = -3,
        xmax = 3,
        cycle list/Set1,
        legend pos = south west,
        unit vector ratio = 1 1 1
    ]
        \addplot+[thick, mark=none] table [x=y, y=d, col sep=comma] {data/x=y_theta=0/exact_data_x=y_theta=0_w=1,3,1.csv};
        \addlegendentry{\(d\)}
        \addplot+[thick, mark=none, domain=-3:3, samples=100]{ l(x,x,0,1,3,1) };
        \addlegendentry{\(l\)}
        \addplot+[thick, mark=none, domain=-3:3, samples=100]{ u1(x,x,0,1,3,1) };
        \addlegendentry{\(u_1\)}
        \addplot+[thick, mark=none, domain=-3:3, samples=100]{ u2(x,x,0,1,3,1) };
        \addlegendentry{\(u_2\)}
        \legend{}; 
    \end{axis}
\end{tikzpicture}
            \caption{\(w_2 = 3\)}
        \end{subfigure}%
        \begin{subfigure}{\figwidth}
            \centering
            \begin{tikzpicture}
    \begin{axis}[
        width = \tikzfigwidth,
        xlabel = {\(x=y\)},
        ylabel = {\(d\)},
        ymin = 0,
        ymax = 6,
        xmin = -3,
        xmax = 3,
        cycle list/Set1,
        legend pos = south west,
        unit vector ratio = 1 1 1
    ]
        \addplot+[thick, mark=none] table [x=y, y=d, col sep=comma] {data/x=y_theta=0/exact_data_x=y_theta=0_w=1,4,1.csv};
        \addlegendentry{\(d\)}
        \addplot+[thick, mark=none, domain=-3:3, samples=100]{ l(x,x,0,1,4,1) };
        \addlegendentry{\(l\)}
        \addplot+[thick, mark=none, domain=-3:3, samples=100]{ u1(x,x,0,1,4,1) };
        \addlegendentry{\(u_1\)}
        \addplot+[thick, mark=none, domain=-3:3, samples=100]{ u2(x,x,0,1,4,1) };
        \addlegendentry{\(u_2\)}
        \legend{}; 
    \end{axis}
\end{tikzpicture}
            \caption{\(w_2 = 4\)}
        \end{subfigure}%
        \caption{Same setting as \Cref{fig:exact_distance_and_bounds} but at \(x=y, \theta=0\). The horizontal axis moves along the line \(x=y\).}
        \label{fig:exact_distance_and_bounds_2}
    \end{figure}
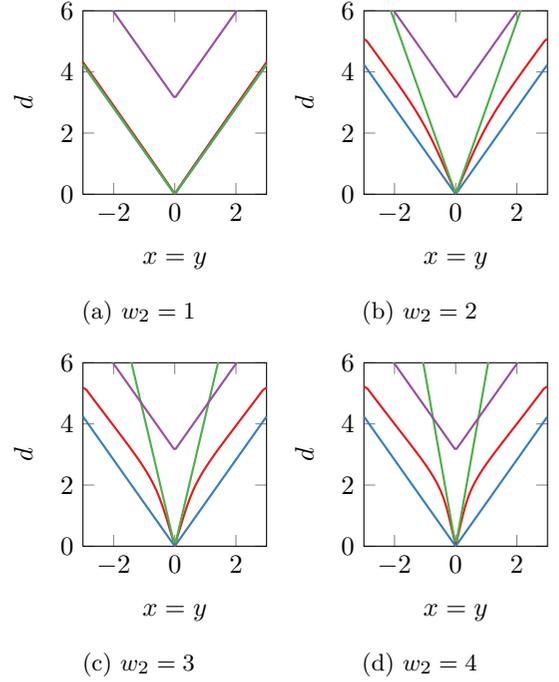

\subsection{Asymptotic Error Expansion} \label{sec:asymptotic_analysis}

     In this section we provide an asymptotic expansion of the error between the exact distance \(d\) and the half-angle distance approximation \(\rho_b\) (\Cref{res:asymptotics}). This error is then leveraged to an error between the exact morphological kernel \(k\) and the half-angle kernel \(k_b\) (\Cref{res:local_bound_kernels}). We also give a formula that determines a region for which the half-angle approximation \(\rho_b\) is appropriate given an a priori tolerance bound (\Cref{rem:tolerance_bound}).
    
    \begin{lemma} \label{res:fundamental_calculus}
        Let \(\g:[0,1] \to \bbM_2\) be a minimizing geodesic from \(\bp_0\) to \(\bp\). We have that:
        \begin{equation*}
            \rho_b(\bp) \leq d(\bp) \max_{t \in [0,1]} \Vert d\rho_b\vert_{\g(t)} \Vert.
        \end{equation*}
    \end{lemma}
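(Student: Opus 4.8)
The plan is to apply the fundamental theorem of calculus to the scalar function $t \mapsto \rho_b(\g(t))$ and then estimate its derivative by the (dual) norm of $d\rho_b$ times the metric speed of $\g$, exploiting that for a minimizing geodesic the total metric speed integrates exactly to $d(\bp)$.

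First I would note that $\rho_b$ is locally Lipschitz: on the chart $\theta \in (-\pi,\pi)$ it is the square root of a smooth, non-negative quadratic expression in the smooth half-angle coordinates $b^i$, hence Lipschitz on compacta. Since $\g$ is smooth, $t \mapsto \rho_b(\g(t))$ is then absolutely continuous on $[0,1]$, and because $\rho_b(\g(0)) = \rho_b(\bp_0) = 0$ the fundamental theorem of calculus gives
\[
    \rho_b(\bp) = \rho_b(\g(1)) - \rho_b(\g(0)) = \int_0^1 \tfrac{d}{dt}\rho_b(\g(t))\, {\rm d}t .
\]
At almost every $t$ the map $\rho_b$ is differentiable at $\g(t)$ and the chain rule yields $\tfrac{d}{dt}\rho_b(\g(t)) = \langle d\rho_b\vert_{\g(t)}, \dot \g(t)\rangle$. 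By the very definition of the dual norm $\Vert\cdot\Vert_*$, $\langle d\rho_b\vert_{\g(t)}, \dot\g(t)\rangle \leq \Vert d\rho_b\vert_{\g(t)}\Vert_* \, \Vert \dot\g(t)\Vert$, so
\[
    \rho_b(\bp) \leq \int_0^1 \Vert d\rho_b\vert_{\g(t)}\Vert_*\, \Vert\dot\g(t)\Vert\, {\rm d}t \leq \Big(\max_{t\in[0,1]} \Vert d\rho_b\vert_{\g(t)}\Vert_*\Big) \int_0^1 \Vert\dot\g(t)\Vert\, {\rm d}t .
\]
It then remains to recognise $\int_0^1 \Vert\dot\g(t)\Vert\, {\rm d}t = L_{\cG}(\g) = d(\bp_0,\bp) = d(\bp)$, which holds precisely because $\g$ is a minimizing geodesic; this is exactly the asserted inequality.

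The step requiring care is the regularity of $\rho_b$. It fails to be differentiable at $\bp_0$ — where it behaves like a norm — and across $\theta = \pm\pi$, where the small-angle identification makes $b^3 = \theta$ discontinuous. The first defect is harmless: a minimizing geodesic from $\bp_0$ to $\bp \neq \bp_0$ meets $\bp_0$ only at $t = 0$, a set of measure zero, so the almost-everywhere chain rule above is unaffected. The second is handled by restricting attention to $\bp$ in a neighbourhood of $\bp_0$ whose minimizing geodesics remain away from $\theta = \pm\pi$ — exactly the regime in which this lemma feeds into the subsequent asymptotic expansion (\Cref{res:asymptotics}) — or alternatively by a limiting argument; within such a region $\Vert d\rho_b\Vert_*$ is bounded, so the maximum on the right-hand side is finite. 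Everything else is the elementary FTC / Cauchy–Schwarz chain above.
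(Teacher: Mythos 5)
Your proof is correct and follows essentially the same route as the paper: apply the fundamental theorem of calculus to $t\mapsto\rho_b(\g(t))$, bound the derivative by $\Vert d\rho_b\Vert_*\,\Vert\dot\g\Vert$ via the dual-norm (Cauchy--Schwarz) inequality, pull out the maximum, and use that the length of a minimizing geodesic equals $d(\bp)$. Your extra discussion of the regularity of $\rho_b$ at $\bp_0$ and across $\theta=\pm\pi$ is a welcome refinement that the paper's own proof passes over silently.
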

    \begin{proof}
        The fundamental theorem of calculus tells us that:
        \begin{equation*}
            \int_0^1 (\rho_b \circ \g)'(t)\ dt = \rho_b(\g(1)) - \rho_b(\g(0)) = \rho_b(\bp),
        \end{equation*}
        but one can also bound this expression as follows:
        \begin{align*}
            \int_0^1 (\rho_b \circ \g)'(t)\ dt  
            &= \int_0^1 \Ang{d\rho_b\vert_{\g(t)}, \dot \g(t)}\ dt \\
            & \leq \int_0^1 \Norm{d\rho_b\vert_{\g(t)}} \Norm{\dot \g(t)}\ dt\\
            &\leq  \Par{\max_{t \in [0,1]} \Vert d\rho_b\vert_{\g(t)} \Vert} \int_0^1 \Norm{\dot \g(t)}\ dt \\
            &=  d(\bp) \max_{t \in [0,1]} \Vert d\rho_b\vert_{\g(t)} \Vert.
        \end{align*} 
        Putting the two together gives the desired result.
    \end{proof}
    
    \begin{lemma} \label{res:bound_on_differential_norm_rho_b}
        One can bound \(\Vert d\rho_b\Vert\) around \(\bp_0\) by:
        \begin{equation*}
            \Vert d \rho_b\Vert^2 \leq 1 + \frac{\zeta^2 - 1}{2w_3^2} \rho_b^2  + \cO(\theta^3).
        \end{equation*}
    \end{lemma}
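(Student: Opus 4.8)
The plan is to compute $\Norm{d\rho_b}^2$ pointwise from the dual-norm identity attached to the diagonal metric \eqref{eq:diagonal_metric}, $\Norm{df}^2=\sum_{i=1}^3(\cA_i f/w_i)^2$ (the same identity behind the eikonal equation \eqref{eik}), and then to Taylor-expand the result around $\bp_0$ in the half-angle coordinates $(b^1,b^2,\theta)$. First I would record the action of the left-invariant frame on these coordinates: using the sum and difference formulas for sine and cosine one gets $\cA_1 b^1=\cos\tfrac\theta2$, $\cA_2 b^1=-\sin\tfrac\theta2$, $\cA_3 b^1=\tfrac12 b^2$, and $\cA_1 b^2=\sin\tfrac\theta2$, $\cA_2 b^2=\cos\tfrac\theta2$, $\cA_3 b^2=-\tfrac12 b^1$, while $\cA_1 b^3=\cA_2 b^3=0$ and $\cA_3 b^3=1$. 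Since $\rho_b^2=\sum_j (w_j b^j)^2$ by \eqref{eq:def_rho_b}, we have $\cA_i\rho_b=\rho_b^{-1}\sum_j w_j^2\,b^j\,\cA_i b^j$, hence
\[
    \rho_b^2\,\Norm{d\rho_b}^2=\sum_{i=1}^3\frac{1}{w_i^2}\Par{\sum_{j=1}^3 w_j^2\,b^j\,\cA_i b^j}^2 ,
\]
which after substitution becomes a finite sum of monomials in $b^1,b^2,\cos\tfrac\theta2,\sin\tfrac\theta2,\theta$.

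Next I would reorganize this sum by homogeneity degree in $(b^1,b^2,\theta)$, counting $\sin\tfrac\theta2$ as degree one and $\cos\tfrac\theta2$ as degree zero. The degree-two part is exactly $w_1^2(b^1)^2+w_2^2(b^2)^2+w_3^2\theta^2=\rho_b^2$, so $\Norm{d\rho_b}^2=1+\rho_b^{-2}R$ with $R$ gathering the higher-order terms. The degree-three part is the single cross term $\bigl(2(w_2^2-w_1^2)\cos\tfrac\theta2\sin\tfrac\theta2-(w_2^2-w_1^2)\theta\bigr)b^1 b^2$, which, because $2\cos\tfrac\theta2\sin\tfrac\theta2=\sin\theta$, equals $(w_2^2-w_1^2)(\sin\theta-\theta)\,b^1 b^2$; thus the genuine cubic contribution cancels and what is left carries a factor $\sin\theta-\theta=\cO(\theta^3)$. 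The degree-four part works out to
\[
    T_4=\frac{\theta^2}{4}(w_2^2-w_1^2)\Par{\frac{w_2^2}{w_1^2}(b^2)^2-\frac{w_1^2}{w_2^2}(b^1)^2}+\frac{(w_2^2-w_1^2)^2}{4w_3^2}(b^1)^2(b^2)^2 .
\]

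Then I would estimate $R$. Dropping the non-positive $(b^1)^2$ term in $T_4$ and writing $w_2^2-w_1^2=w_1^2(\zeta^2-1)$, I would compare the two surviving terms against the cross terms $2(w_2 b^2)^2(w_3\theta)^2$ and $2(w_1 b^1)^2(w_2 b^2)^2$ appearing in $\rho_b^4=\bigl((w_1b^1)^2+(w_2b^2)^2+(w_3\theta)^2\bigr)^2$: the $\theta^2(b^2)^2$-term of $T_4$ equals exactly $\tfrac14\cdot\tfrac{\zeta^2-1}{2w_3^2}\cdot 2(w_2b^2)^2(w_3\theta)^2$, and the $(b^1)^2(b^2)^2$-term is at most $\tfrac14\cdot\tfrac{\zeta^2-1}{2w_3^2}\cdot 2(w_1b^1)^2(w_2b^2)^2$ since $w_2^2-w_1^2\le w_2^2$, so $T_4\le \tfrac{\zeta^2-1}{2w_3^2}\rho_b^4$ and therefore $T_4/\rho_b^2\le \tfrac{\zeta^2-1}{2w_3^2}\rho_b^2$. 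Every remaining monomial in $R$ has degree at least five, since it carries a spare factor $\sin\theta-\theta$, $\tfrac14\theta^2-\sin^2\tfrac\theta2$, $\theta-2\cos\tfrac\theta2\sin\tfrac\theta2$ and the like, each of order $\theta^3$ or higher; combined with the elementary bounds $(b^i)^2\le \rho_b^2/w_i^2$ and $\Abs{b^1 b^2}\le \rho_b^2/(2w_1w_2)$ this gives $R/\rho_b^2=\cO(\theta^3)$ uniformly in the spatial variables. Summing the contributions yields $\Norm{d\rho_b}^2\le 1+\tfrac{\zeta^2-1}{2w_3^2}\rho_b^2+\cO(\theta^3)$.

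The step I expect to be most delicate is the degree-by-degree bookkeeping: one must verify that the cubic terms cancel identically — so that the leading correction is the quadratic $\tfrac{\zeta^2-1}{2w_3^2}\rho_b^2$ rather than something linear in $\theta$ — and must keep the several quartic monomials organized well enough to absorb them into $\rho_b^4$. Once that cancellation and the grouping are in place, the remaining inequalities are routine arithmetic–geometric-mean estimates with a comfortable margin.
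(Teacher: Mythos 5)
Your proposal is correct and follows essentially the same route as the paper's own proof in \Cref{app:proof_bound_on_differential_norm_rho_b}: expand $\Norm{d\rho_b}^2$ in the left-invariant frame via the half-angle coordinates, Taylor-expand in $\theta$, observe that the leading correction is the quartic term (your $T_4$ coincides exactly with the paper's explicit $\varepsilon\cdot\rho_b^2$), and bound it by $\tfrac{\zeta^2-1}{2w_3^2}\rho_b^4$ using elementary inequalities between the $b^i$ and $\rho_b$. The only differences are presentational: you push the chain rule through $\cA_i b^j$ and use the cross terms of $\rho_b^4$ (which in fact yields the slightly sharper constant $\tfrac{\zeta^2-1}{8w_3^2}$), whereas the paper changes frame directly and applies $w_i\Abs{b^i}\le\rho_b$ term by term.
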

    \begin{proof}
        The proof is deferred to \Cref{app:proof_bound_on_differential_norm_rho_b}
    \end{proof}
    
    By combining the simple \Cref{res:fundamental_calculus,res:bound_on_differential_norm_rho_b} one can find an expression for the asymptotic error between the exact distance \(d\) and the half-angle approximation \(\rho_b\).
    
    \begin{lemma} \label{res:asymptotics}
        Around any compact neighbourhood of \(\bp_0\) we have that
        \begin{equation} \label{eq:relative_error_1}
            \rho_b^2 \leq ( 1 + \e) d^2 , \text{ where } \e := \frac{\zeta^2 - 1}{2w_3^2} \zeta^4 \rho_b^2  + C \Abs{\theta}^3.
        \end{equation}
        for some $C \geq 0$.
    \end{lemma}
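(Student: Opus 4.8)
The plan is to feed the minimizing geodesic into \Cref{res:fundamental_calculus} and then re-express everything evaluated along that geodesic in terms of data at the endpoint $\bp$. Since $(\bbM_2,\cG)$ is a Lie group with a left-invariant metric it is complete, so by Hopf--Rinow there is a constant-speed minimizing geodesic $\g:[0,1]\to\bbM_2$ from $\bp_0$ to $\bp$; then $d(\bp_0,\g(t)) = t\,d(\bp)$ and in particular $\g$ never leaves the closed metric ball of radius $d(\bp)$ about $\bp_0$, so once $\bp$ lies in a sufficiently small compact neighbourhood of $\bp_0$ the estimate of \Cref{res:bound_on_differential_norm_rho_b} is valid all along $\g$. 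Squaring the conclusion of \Cref{res:fundamental_calculus} gives $\rho_b(\bp)^2 \le d(\bp)^2\,\max_{t\in[0,1]}\Norm{d\rho_b\vert_{\g(t)}}^2$, so it is enough to bound $\max_{t}\bigl(\Norm{d\rho_b\vert_{\g(t)}}^2-1\bigr)$ by the claimed $\e$.

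Inserting \Cref{res:bound_on_differential_norm_rho_b} at the point $\g(t)$, the leading contribution is $\frac{\zeta^2-1}{2w_3^2}\rho_b(\g(t))^2$, and here the factor $\zeta^4$ enters through the chain
\begin{equation*}
    \rho_b(\g(t)) \;\le\; \zeta\, d(\g(t)) \;\le\; \zeta\, d(\bp) \;\le\; \zeta^2\,\rho_b(\bp),
\end{equation*}
whose outer steps are the global comparison \eqref{eq:rho_b_and_d} of \Cref{res:global_bounds_rho_b} and whose middle step is $d(\g(t)) = t\,d(\bp)\le d(\bp)$, valid because $\g$ is minimizing. Squaring yields $\max_t\rho_b(\g(t))^2 \le \zeta^4\rho_b(\bp)^2$, which produces the stated leading term $\frac{\zeta^2-1}{2w_3^2}\zeta^4\rho_b(\bp)^2$.

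The main obstacle is to show that the remaining $\max_t\cO(\Abs{\theta(\g(t))}^3)$ is genuinely of order $\Abs{\theta(\bp)}^3$ and does not contaminate the $\rho_b^2$-coefficient: a crude bound such as $\Abs{\theta(\g(t))}\le\rho_b(\g(t))/w_3\le\zeta^2\rho_b(\bp)/w_3$ is too lossy, since $\theta(\bp)$ can vanish while $\rho_b(\bp)$ does not. I would resolve this by passing to Riemannian normal coordinates centred at $\bp_0$, in which $\g(t)=tv$ with $\Abs{v}=d(\bp)$; then $\psi(t):=\theta(\g(t))=\theta(tv)$ is smooth with $\psi(0)=0$ and $\psi(1)=\theta(\bp)$, and Taylor expanding $\theta$ at the origin gives $\psi(t)=t\,\theta(\bp)+\cO(\Abs{v}^2)$ uniformly in $t\in[0,1]$, hence $\max_t\Abs{\theta(\g(t))}^3 \le C\Abs{\theta(\bp)}^3 + \cO(d(\bp)^6)$. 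Finally I would note that the case $\zeta=1$ is already Remark~\ref{rem:rho_b_becomes_exact}, so one may assume $\zeta>1$ and, on a small enough neighbourhood, absorb the residual $\cO(d(\bp)^6)$ term, obtaining $\rho_b^2\le(1+\e)d^2$ with $\e=\frac{\zeta^2-1}{2w_3^2}\zeta^4\rho_b^2+C\Abs{\theta}^3$ as claimed.
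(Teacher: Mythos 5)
Your proof is correct and follows essentially the same route as the paper's: feed the minimizing geodesic into \Cref{res:fundamental_calculus}, bound $\max_{t}\rho_b(\g(t))\leq\zeta^2\rho_b(\bp)$ via the global comparison \eqref{eq:rho_b_and_d} together with monotonicity of $d$ along the minimizer, and insert \Cref{res:bound_on_differential_norm_rho_b}. Your normal-coordinate treatment of the $\cO(\Abs{\theta}^3)$ remainder (showing $\max_t\Abs{\theta(\g(t))}^3\leq C\Abs{\theta(\bp)}^3+\cO(d(\bp)^6)$ and absorbing the residue) is an extra degree of care on a point the paper's proof leaves implicit, not a different approach.
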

    
    \begin{proof}
        Let $\bp \in U$ be given, and let $\g: [0,1] \rightarrow \bbM_2$ be the geodesic from $\bp_0$ to $\bp$. For the distance we know that 
        \begin{equation*} 
            d(\g(s)) \leq d(\g(t)), \text{ for } s \leq t.
        \end{equation*}
        Making use of \eqref{eq:rho_b_and_d} we know that \(\frac{1}{\zeta} \rho_b \leq d \leq \zeta \rho_b\) so we can combine this with the previous equation to find:
        \begin{equation*}
            \rho_b(\g(s)) \leq \zeta^2 \rho_b(\g(t)), \text{ for } s \leq t.
        \end{equation*}
        from which we get that
        \begin{equation*}
            \max_{t \in [0,1]} \rho_b(\g(t)) \leq \zeta^2 \rho_b(\bp).
        \end{equation*}
        Combining this fact with the above two lemmas allows us to conclude \eqref{eq:relative_error_1}. 
    \end{proof}
    
    \begin{remark} 
        \label{rem:tolerance_bound}
        \textbf{(Region for approximation $\rho_b \approx d$)} \\
        Putting an \textit{a priori} tolerance bound \(\e_{tol}\) on the error \(\e\) (and neglecting the \(\cO(\theta^3)\) term) gives rise to a region $\Omega_0$ on which the local approximation \(\rho_b\) is appropriate:
        \begin{equation*}
            \Omega_0=\{ \bp \in \bbM_2 \mid \rho_b(\bp) < \frac{2 w_3^2}{(\zeta^2-1)\zeta^4} \e_{tol}\}.
        \end{equation*}
       Thereby we can not guarantee a large region of acceptable relative error when \(w_3 \to 0\) or \(\zeta \to \infty\). We solve this problem 
       by using $\rho_{b, com}$ given (\ref{eq:combined_distance_approximation}) instead of $\rho_b$.
    \end{remark}
    
    \begin{corollary} 
        \label{res:local_bound_kernels}
        \emph{\textbf{(Local error morphological kernel)}}\\
        Locally around \(\bp_0\) we have:
        \begin{equation*}
            k^\a_b < (1 + \e)^{\beta/2} k^\a.
        \end{equation*}
    \end{corollary}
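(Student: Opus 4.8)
The plan is to reduce the kernel estimate directly to the distance estimate of \Cref{res:asymptotics}, exploiting that both $k^\alpha$ and $k^\alpha_b$ are obtained by applying the same increasing one-dimensional profile $r \mapsto \frac{t}{\beta}(r/t)^\beta$ to their respective distances $d$ and $\rho_b$. From the definitions \eqref{eq:morphological_kernel} and \eqref{eq:half_angle_kernel} we have, pointwise on $\bbM_2$, that $k_b^\alpha/k^\alpha = (\rho_b/d)^\beta$ wherever $d>0$; at $\bp_0$ both kernels vanish, so there is nothing to prove there. Hence it suffices to bound the ratio $\rho_b/d$ on a neighbourhood of $\bp_0$.

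First I would invoke \Cref{res:asymptotics}: on a compact neighbourhood $U$ of $\bp_0$ one has $\rho_b^2 \le (1+\e)d^2$ with $\e = \frac{\zeta^2-1}{2w_3^2}\zeta^4 \rho_b^2 + C|\theta|^3 \ge 0$, the nonnegativity coming from $\zeta \ge 1$ and $C \ge 0$. Since both sides are nonnegative and $1+\e \ge 1 > 0$, taking square roots gives $\rho_b \le (1+\e)^{1/2} d$ on $U$. Because $\alpha > 1$ forces $\frac1\beta = 1 - \frac1\alpha \in (0,1)$, i.e. $\beta > 1 > 0$, the map $r \mapsto r^\beta$ is strictly increasing on $\bbR_{\ge 0}$; raising the last inequality to the power $\beta$ therefore yields $\rho_b^\beta \le (1+\e)^{\beta/2} d^\beta$. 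Multiplying through by $\frac{t^{1-\beta}}{\beta} > 0$ gives
\begin{equation*}
    k_b^\alpha = \frac{t}{\beta}\Par{\frac{\rho_b}{t}}^\beta \le (1+\e)^{\beta/2}\,\frac{t}{\beta}\Par{\frac{d}{t}}^\beta = (1+\e)^{\beta/2} k^\alpha,
\end{equation*}
which is the claim; the inequality is strict on $U\setminus\{\bp_0\}$ when $\zeta > 1$ (and an equality when $\zeta = 1$, cf. \Cref{rem:rho_b_becomes_exact}).

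There is essentially no obstacle: the corollary is a one-line consequence of \Cref{res:asymptotics} and the monotonicity of $r \mapsto r^\beta$. The only points requiring care are recording that $\e \ge 0$ so the square root is real, that $\beta > 0$ so the power preserves inequalities, and that the result is only \emph{local} precisely because \Cref{res:asymptotics} is confined to a compact neighbourhood of $\bp_0$, which in turn stems from the asymptotic control of $\Vert d\rho_b \Vert$ in \Cref{res:bound_on_differential_norm_rho_b}.
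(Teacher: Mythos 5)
Your proof is correct and follows essentially the same route as the paper: both derive the bound directly from \Cref{res:asymptotics} by applying the increasing map $r \mapsto \frac{t}{\beta}(r/t)^{\beta}$ to $\rho_b \leq (1+\e)^{1/2} d$. Your additional remarks on the sign of $\e$, the positivity of $\beta$, and the strict-versus-nonstrict inequality are sensible elaborations of details the paper leaves implicit.
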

    \begin{proof} 
        By Lemma~\ref{res:asymptotics} one has 
        \begin{equation*}
            k^\a_b 
            := \frac{1}{\beta} (\rho_b/t)^\beta
            \leq \frac{1}{\beta} ((1 + \e)d^2/t^2)^{\beta/2}
            = (1 + \e)^{\beta/2} k^\a.
        \end{equation*}
    \end{proof}
    
    
    
    \section{Experiments} \label{sec:experiments}
\subsection{Error of Half Angle Approximation} \label{sec:quantitative_analysis_rho_b}
    We can quantitatively analyse the error between any distance approximation \(\rho\) and the exact Riemannian distance \(d\) as follows. We do this by first choosing a region \(\Omega \subseteq \bbM_2\) in which we will analyse the approximation. Just as in \Cref{tab:balls,tab:balls_high_anisotropy} we decided to inspect \(\Omega := [-3,3]\times[-3,3]\times[-\pi,\pi) \subseteq \bbM_2\). As for our exact measure of error \(\e\) we have decided on the \textit{mean relative error} defined as:
    \begin{equation}
        \e := \frac{1}{\mu(\Omega)} \int_{\Omega} \frac{\Abs{\rho_b(\bp) - d(\bp)}}{d(\bp)} d\mu(\bp)
    \end{equation}
    where \(\mu\) is the induced Riemannian measure determined by the Riemannian metric \(\cG\). We then discretized our domain \(\Omega\) into a grid of \(101 \times 101 \times 101\) equally spaced points \(\bp_i\ \in \Omega\) indexed by some index set \(i \in I\) and numerically solved for the exact distance \(d\) on this grid. This numerical scheme is of course not exact and we will refer to these values as \(\tilde d_i \approx d(\bp_i)\). We also calculate the value of the distance approximation \(\rho\) on the grid points \(\rho_i := \rho(\bp_i)\). Once we have these values we can approximate the true mean relative error \(\e\) by calculating the numerical error \(\tilde \e\) defined by:
    \begin{equation}
        \e \approx \tilde \e := \frac{1}{\Abs{I}} \sum_{i \in I} \frac{\Abs{\rho_i - \tilde d_i}}{\tilde d_i}
    \end{equation}
    
    In \Cref{tab:error_rho_b_and_d} the numerical mean relative error \(\tilde \e\) between the half-angle approximation \(\rho_b\) and the numerical Riemannian distance \(\tilde d\) can be seen for different spatial anisotropies \(\zeta\). We keep \(w_1=w_3=1\) constant and vary \(w_2\). We see that, as shown visually in \Cref{tab:balls,tab:balls_high_anisotropy}, that \(\rho_b\) gets worse and worse when we increase the spatial anisotropy \(\zeta\).
    
    There is an discrepancy in the table worth mentioning. We know from \Cref{rem:rho_b_becomes_exact} that when \(\zeta = 1\) then \(\rho_b = d\) and thus \(\e = 0\). But surprisingly we do not have \(\tilde \e = 0\) in the \(\zeta = 1\) case in \Cref{tab:error_rho_b_and_d}. This can be simply explained by the fact that the numerical solution \(\tilde d\) is not exactly equal to the true distance \(d\). We expect that \(\tilde \e\) will go to 0 in the \(\zeta = 1\) case if we discretize our region \(\Omega\) more and more finely.
    
    We can compare these numerical results to our theoretical results. Namely, we can deduce from \Cref{eq:rho_b_and_d} that:
    \begin{equation}
         \frac{\Abs{\rho_b - d}}{d} \leq \zeta - 1,
    \end{equation}
    which means
    \begin{equation} \label{eq:bound_on_mean_relative_error}
        \e \leq \zeta - 1.
    \end{equation}
    And so we expect this to also approximately hold for the numerical mean relative error \(\tilde \e\). Indeed, in \Cref{tab:error_rho_b_and_d} we can see that \( \tilde \e \lessapprox \zeta - 1\). 
    
    Interestingly, we see that \(\tilde \e\) is relatively small compared to our theoretical bound \eqref{eq:bound_on_mean_relative_error} even in the high anisotropy cases. However, this is only a consequence of relative smallness of \(\Omega\). If we make \(\Omega\) bigger and bigger we can be certain that \(\e\) converges to \(\zeta - 1\). This follows from an argument similar to the reasoning in \Cref{res:rho_b_goes_certainly_bad}.
    
    \begin{table}
        \centering
        \begin{tabular}{c|cccccccc}
            \(\zeta\) & \(1\) & \(1.5\) & \(2\) & \(3\) & \(4\) & \(6\) & \(8\)\\
            \(\tilde \e\) & 0.027 & 0.051 & 0.14 & 0.41 & 0.71 & 1.4 & 2.1 
        \end{tabular}
        \caption{Numerical mean relative error \(\tilde \e\) between \(\rho_b\) and \(d\) for multiple spatial anisotropies \(\zeta\).}
        \label{tab:error_rho_b_and_d}
    \end{table}

\subsection{DCA1}

    The DCA1 dataset is a publicly available database ``consisting of 130 X-ray coronary angiograms, and their corresponding ground-truth image outlined by an expert cardiologist'' \cite{sanchez2019segmentation}. One such angiogram and ground-truth can be seen in Figures \ref{fig:dca_sample_input} and \ref{fig:dca_sample_gt}.
    

    We have split the DCA1 dataset \cite{sanchez2019segmentation} into a training and test set consisting of 125 and 10 images respectively.
    
    To establish a baseline we ran a 3, 6, and 12 layer CNN, G-CNN and PDE-G-CNN on DCA1. The exact architectures are identical/analogous to the ones used in \cite[Fig.15]{smets2022pdebased}. For the baseline the logarithmic distance approximation \(\rho_c\) was used within the PDE-G-CNNs. This is the same approximation that was used in \cite{smets2022pdebased}. Every network was trained 10 times for 80 epochs. After every epoch the average Dice coefficient on the test set is stored. After every full training the maximum of the average Dice coefficients over all 80 epochs is calculated. The result is 10 maximum average Dice coefficients for every architecture. The result of this baseline can be seen in \Cref{fig:baseline_dca1_scatter}. The amount of parameters of the networks can be found in \Cref{tab:parameters_baseline_dca1}. We see that PDE-G-CNNs consistently perform equally well as, and sometimes outperform, G-CNNs and CNNs, all the while having the least amount of parameters of all architectures.

    \begin{table}
        \centering
        \def\arraystretch{1.25}
        \setlength{\tabcolsep}{0.5em} 
        \begin{tabular}{l|rrr}
            Parameters & 3 layers & 6 layers & 12 layers\\ \hline
            CNN        & 2814     & 25662    & 73614 \\
            G-CNN      & 2058     & 24632    & 72728 \\
            PDE-G-CNN  & 1264     & 2560     & 2698
        \end{tabular}
        \caption{The total amount of parameters in the networks that are used in \Cref{fig:baseline_dca1_scatter}. }
        \label{tab:parameters_baseline_dca1}
    \end{table}

    \renewcommand{\tikzfigwidth}{0.8\linewidth}
    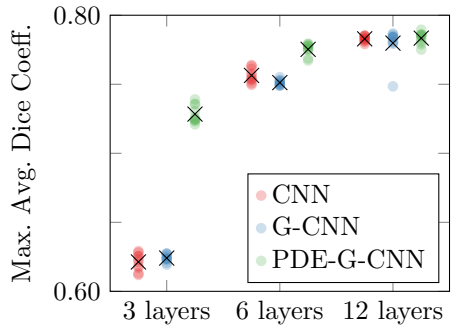
\begin{figure}
        \centering
        \begin{tikzpicture}
    \begin{axis}[
        width = \tikzfigwidth,
        xtick = {2, 6, 10},
        xticklabels = { 3 layers, 6 layers, 12 layers},
        ymin = 0.60,
        ymax = 0.80,
        ytick =       {0.60, 0.65, 0.70, 0.75, 0.80},
        yticklabels = {0.60,     ,     ,     , 0.80},
        ylabel = {Max. Avg. Dice Coeff.},
        legend pos = south east,
        legend cell align={left},
    ]

        \addplot[
            only marks, 
            fill opacity = 0.25, 
            draw opacity = 0,
            fill = Set1-A
        ] coordinates {
            (1, 0.6256)
            (1, 0.6253)
            (1, 0.6230)
            (1, 0.6119)
            (1, 0.6130)
            (1, 0.6172)
            (1, 0.6183)
            (1, 0.6281)
            (1, 0.6291)
            (1, 0.6215)

            (5, 0.7545)
            (5, 0.7578)
            (5, 0.7639)
            (5, 0.7631)
            (5, 0.7612)
            (5, 0.7564)
            (5, 0.7508)
            (5, 0.7527)
            (5, 0.7496)
            (5, 0.7536)
            
            (9, 0.7843)
            (9, 0.7817)
            (9, 0.7853)
            (9, 0.7791)
            (9, 0.7830)
            (9, 0.7850)
            (9, 0.7830)
            (9, 0.7805)
            (9, 0.7818)
            (9, 0.7851)
        };
        \addlegendentry{CNN}
        
        \addplot[
            only marks, 
            fill opacity = 0.25, 
            draw opacity = 0,
            fill = Set1-B
        ] coordinates {
            (2, 0.6254)
            (2, 0.6275)
            (2, 0.6207)
            (2, 0.6273)
            (2, 0.6193)
            (2, 0.6268)
            (2, 0.6235)
            (2, 0.6237)
            (2, 0.6227)
            (2, 0.6238)
            
            (6, 0.7489)
            (6, 0.7532)
            (6, 0.7504)
            (6, 0.7514)
            (6, 0.7523)
            (6, 0.7497)
            (6, 0.7524)
            (6, 0.7492)
            (6, 0.7495)
            (6, 0.7552)
            
            (10, 0.7840)
            (10, 0.7868)
            (10, 0.7859)
            (10, 0.7485)
            (10, 0.7802)
            (10, 0.7809)
            (10, 0.7793)
            (10, 0.7840)
            (10, 0.7841)
            (10, 0.7846)
        };
        \addlegendentry{G-CNN}

        \addplot[
            only marks, 
            fill opacity = 0.25, 
            draw opacity = 0,
            fill = Set1-C
        ] coordinates {
            (3, 0.7354)
            (3, 0.7306)
            (3, 0.7246)
            (3, 0.7210)
            (3, 0.7231)
            (3, 0.7243)
            (3, 0.7358)
            (3, 0.7242)
            (3, 0.7259)
            (3, 0.7390)
            
            (7, 0.7777)
            (7, 0.7794)
            (7, 0.7781)
            (7, 0.7790)
            (7, 0.7757)
            (7, 0.7672)
            (7, 0.7775)
            (7, 0.7687)
            (7, 0.7755)
            (7, 0.7745)
            
            (11, 0.7823)
            (11, 0.7845)
            (11, 0.7895)
            (11, 0.7856)
            (11, 0.7854)
            (11, 0.7751)
            (11, 0.7789)
            (11, 0.7804)
            (11, 0.7858)
            (11, 0.7862)
        };
        \addlegendentry{PDE-G-CNN}
        
        \addplot[
            only marks, 
            color = black,
            mark = x, 
            dashed,
            mark options={solid},
            mark size=4
        ] coordinates {
            (1,0.6213) 
            (2,0.6241) 
            (3,0.7284) 
            
            (5,0.7564) 
            (6,0.7512) 
            (7,0.7753) 
            
            (9,0.7829) 
            (10,0.7798) 
            (11,0.7834) 
        };
    \end{axis}
\end{tikzpicture}
        \caption{A scatterplot showing how a 3, 6, and 12 layer CNN, G-CNN, and PDE-G-CNN compare on the DCA1 dataset. The crosses indicate the mean. We see the PDE-G-CNNs provide equal or better results with respectively $2$, $10$ and $35$ times less parameters, see \Cref{tab:parameters_baseline_dca1}.}
        \label{fig:baseline_dca1_scatter}
    \end{figure}
    
    To compare the effect of using different approximative distances we decided to train the 6 layer PDE-G-CNN (with 2560 parameters) 10 times for 80 epochs using each distance approximation. The results can be found in \Cref{fig:dca1_distance_approximations_scatter,fig:example_outputs_dca1_distance_approximations}. We see that on DCA1 all distance approximations have a comparable performance. We notice a small dent in effectiveness when using \(\rho_{b,sr}\), and a small increase when using \(\rho_{b,com}\). 
    
    \renewcommand{\tikzfigwidth}{0.8\linewidth}
    \begin{figure}
        \centering
        \begin{tikzpicture}
    \begin{axis}[
        width = \tikzfigwidth,
        xtick = {1,2,3,4},
        xticklabels = {\(\rho_c\), \(\rho_b\), \(\rho_{b,sr}\), \(\rho_{b,com}\)},
        ymin = 0.75,
        ymax = 0.80, 
        ytick =       {0.75, 0.76, 0.77, 0.78, 0.79, 0.80},
        yticklabels = {0.75,     ,     ,     ,     , 0.80},
        ylabel = {Max. Avg. Dice Coeff.},
        xticklabel style = {rotate = 90, anchor = east},
    ]

        \addplot[
            scatter, 
            only marks, 
            point meta=explicit symbolic,
            scatter/classes={
                a={fill=Set1-A, fill opacity=0.25, draw opacity=0},
                b={fill=Set1-B, fill opacity=0.25, draw opacity=0},
                c={fill=Set1-C, fill opacity=0.25, draw opacity=0},
                d={fill=Set1-D, fill opacity=0.25, draw opacity=0}
            },
        ] table [meta=label] {
            x y      label
            1 0.7777 a
            1 0.7794 a
            1 0.7781 a
            1 0.7790 a
            1 0.7757 a
            1 0.7672 a
            1 0.7775 a
            1 0.7687 a
            1 0.7755 a
            1 0.7745 a
            
            2 0.7756 b
            2 0.7776 b
            2 0.7800 b
            2 0.7759 b
            2 0.7753 b
            2 0.7768 b
            2 0.7766 b
            2 0.7791 b
            2 0.7754 b
            2 0.7759 b
            
            3 0.7688 c
            3 0.7757 c
            3 0.7712 c
            3 0.7750 c
            3 0.7742 c
            3 0.7758 c
            3 0.7674 c
            3 0.7713 c
            3 0.7709 c
            3 0.7792 c
            
            4 0.7788 d 
            4 0.7809 d
            4 0.7783 d
            4 0.7785 d
            4 0.7748 d
            4 0.7764 d
            4 0.7798 d
            4 0.7746 d
            4 0.7776 d
            4 0.7777 d
        };
        
        \addplot[
            only marks,
            color = black,
            mark = x, 
            dashed,
            mark options={solid},
            mark size=4
        ] coordinates {
            (1,0.7753) 
            (2,0.7768) 
            (3,0.7730) 
            (4,0.7777) 
        };
    \end{axis}
\end{tikzpicture}
        \caption{A scatterplot showing how the use of different distance approximations affect the performance of the 6 layer PDE-G-CNN on the DCA1 dataset. The crosses indicate the mean.}
        \label{fig:dca1_distance_approximations_scatter}
    \end{figure}
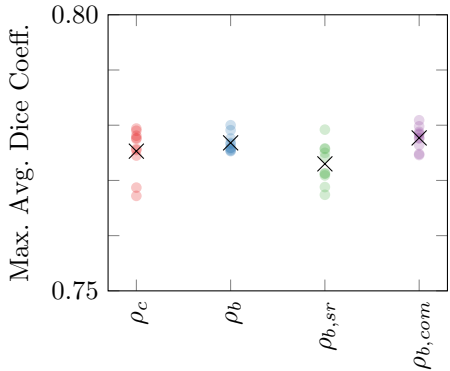

    \renewcommand{\figwidth}{0.32\linewidth}
    \begin{figure}
        \centering%
        \begin{subfigure}{\figwidth}
            \centering
            \includegraphics[width=\linewidth]{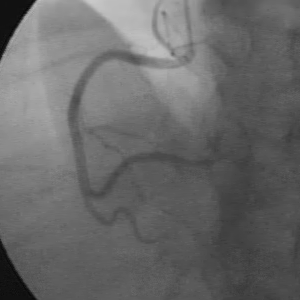}
            \caption{Input}
            \label{fig:dca_sample_input}
        \end{subfigure}
        \begin{subfigure}{\figwidth}
            \centering
            \includegraphics[width=\linewidth]{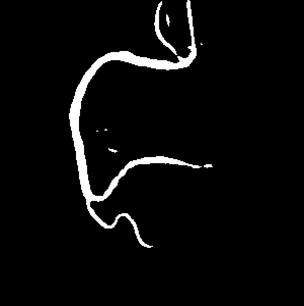}
            \caption{\(\rho_c\)}
        \end{subfigure}
        \begin{subfigure}{\figwidth}
            \centering
            \includegraphics[width=\linewidth]{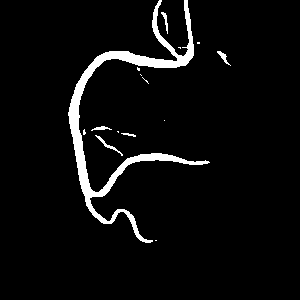}
            \caption{\(\rho_b\)}
        \end{subfigure}\\
        \begin{subfigure}{\figwidth}
            \centering
            \includegraphics[width=\linewidth]{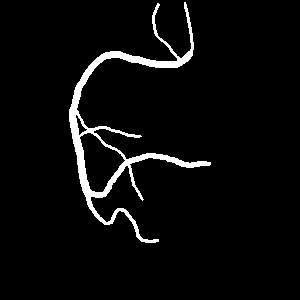}
            \caption{Truth}
            \label{fig:dca_sample_gt}
        \end{subfigure}
        \begin{subfigure}{\figwidth}
            \centering
            \includegraphics[width=\linewidth]{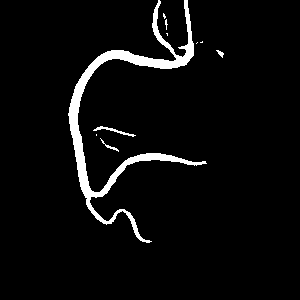}
            \caption{\(\rho_{b,sr}\)}
        \end{subfigure}
        \begin{subfigure}{\figwidth}
            \centering
            \includegraphics[width=\linewidth]{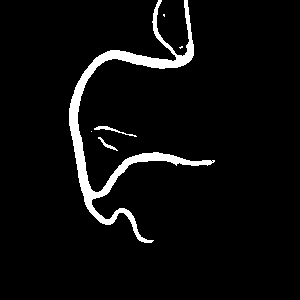}
            \caption{\(\rho_{b,com}\)}
        \end{subfigure}
        \caption{In \ref{fig:dca_sample_input} and \ref{fig:dca_sample_gt} we see one sample from the DCA1 dataset: a coronary angiogram together with the ground-truth segmentation. The other four pictures show the output of the 6 layer PDE-G-CNN, one for each distance approximation. 
        The networks that were used in this figure have an accuracy approximately equal to the mean accuracy in \Cref{fig:dca1_distance_approximations_scatter}. 
        }
        \label{fig:example_outputs_dca1_distance_approximations}
    \end{figure}


\subsection{Lines}

    For the line completion problem we created a dataset of 512 training images and 128 test images\footnote{The lines dataset is available from the authors on request.}. \Cref{fig:lines_sample_input,fig:lines_sample_gt} show one sample of the Lines dataset. 
    
    To establish a baseline we ran a 6 layer CNN, G-CNN and PDE-G-CNN. For this baseline we again used \(\rho_{c}\) within the the PDE-G-CNN, but changed the amount of channels to 30, and the kernel sizes to \([9,9,9]\), making the total amount of parameters 6018. By increasing the kernel size we anticipate that the difference in effectiveness of using the different distance approximations, if there is any, becomes more pronounced.
    Every network was trained 15 times for 60 epochs. The result of this baseline can be seen in \Cref{fig:baseline_lines_scatter}. The amount of parameters of the networks can be found in \Cref{tab:parameters_baseline_lines}. We again see that the PDE-G-CNN outperforms the G-CNN, which in turn outperforms the CNN, while having the least amount of parameters.
    
    \begin{table}
        \centering
        \def\arraystretch{1.25}
        \setlength{\tabcolsep}{0.5em} 
        \begin{tabular}{l|ccc}
                       & CNN   & G-CNN & PDE-G-CNN\\\hline
            Parameters & 25662 & 24632 & 6018
        \end{tabular}
        \caption{The total amount of parameters in the networks that are used in \Cref{fig:baseline_lines_scatter}. }
        \label{tab:parameters_baseline_lines}
    \end{table}
    
    \renewcommand{\tikzfigwidth}{0.8\linewidth}
    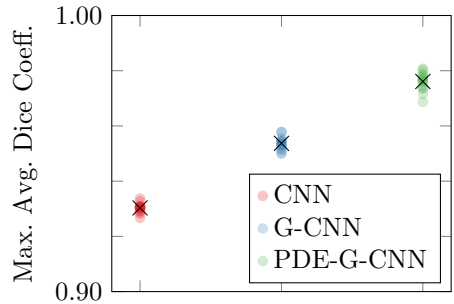
\begin{figure}
        \centering
        \begin{tikzpicture}
    \begin{axis}[
        width = \tikzfigwidth,
        xtick = {1,2,3},
        xticklabels = { , , },
        xticklabel style = {rotate = 90, anchor = east},
        ymin = 0.90,
        ymax = 1.00,
        ytick =       {0.90, 0.92, 0.94, 0.96, 0.98, 1.00},
        yticklabels = {0.90,     ,     ,     ,     , 1.00},
        ylabel = {Max. Avg. Dice Coeff.},
        legend pos = south east,
        legend cell align={left},
    ]

        \addplot[
            only marks, 
            fill opacity = 0.25, 
            draw opacity = 0,
            fill = Set1-A
        ] coordinates {
            (1, 0.9307)
            (1, 0.9292)
            (1, 0.9304)
            (1, 0.9283)
            (1, 0.9301)
            (1, 0.9327)
            (1, 0.9304)
            (1, 0.9289)
            (1, 0.9309)
            (1, 0.9337)
            (1, 0.9267)
            (1, 0.9308)
            (1, 0.9299)
            (1, 0.9305)
            (1, 0.9309)
        };
        \addlegendentry{CNN}
        
        \addplot[
            only marks, 
            fill opacity = 0.25, 
            draw opacity = 0,
            fill = Set1-B
        ] coordinates {
            (2, 0.9540)
            (2, 0.9544)
            (2, 0.9533)
            (2, 0.9534)
            (2, 0.9528)
            (2, 0.9544)
            (2, 0.9514)
            (2, 0.9555)
            (2, 0.9548)
            (2, 0.9531)
            (2, 0.9518)
            (2, 0.9578)
            (2, 0.9504)
            (2, 0.9499)
            (2, 0.9578)
        };
        \addlegendentry{G-CNN}
        
        \addplot[
            only marks, 
            fill opacity = 0.25, 
            draw opacity = 0,
            fill = Set1-C
        ] coordinates {
            (3, 0.9759)
            (3, 0.9807)
            (3, 0.9745)
            (3, 0.9783)
            (3, 0.9687)
            (3, 0.9803)
            (3, 0.9762)
            (3, 0.9765)
            (3, 0.9769)
            (3, 0.9790)
            (3, 0.9779)
            (3, 0.9737)
            (3, 0.9736)
            (3, 0.9716)
            (3, 0.9772)
        };
        \addlegendentry{PDE-G-CNN}
        
        \addplot[
            only marks,
            color = black,
            mark = x, 
            dashed,
            mark options={solid},
            mark size=4
        ] coordinates {
            (1,0.9303) 
            (2,0.9537) 
            (3,0.9761) 
        };
    \end{axis}
\end{tikzpicture}
        \caption{A scatterplot showing how a 6 layer CNN, G-CNN (both with $\approx 25k$ parameters), and a PDE-G-CNN (with only 6k parameters) compare on the Lines dataset. The crosses indicate the mean. For the precise amount of parameters see \Cref{tab:parameters_baseline_lines}.}
        \label{fig:baseline_lines_scatter}
    \end{figure}

    We again test the effect of using different approximative distances by training the 6 layer PDE-G-CNN 15 times for 60 epochs for every approximation. The results can be found in \Cref{fig:lines_distance_approximations_scatter}. We see that on the Lines dataset all distance approximations again have a comparable performance. We again notice an increase in effectiveness when using \(\rho_{b,com}\), just as on the DCA1 dataset. Interestingly, using \(\rho_{b,sr}\) does not seem to hurt the performance on the Lines dataset, which is in contrast with DCA1. This is in line with what one would expect in view of the existing sub-Riemannian line-perception models in neurogeometry.

    \renewcommand{\tikzfigwidth}{0.8\linewidth}
    \begin{figure}
        \centering
        \begin{tikzpicture}
    \begin{axis}[
        width = \tikzfigwidth,
        xtick = {1,2,3,4},
        xticklabels = {\(\rho_c\), \(\rho_b\), \(\rho_{b,sr}\), \(\rho_{b,com}\)},
        ymin = 0.95,
        ymax = 1.00,
        ytick =       {0.95, 0.96, 0.97, 0.98, 0.99, 1.00},
        yticklabels = {0.95,     ,     ,     ,     , 1.00},
        ylabel = {Max. Avg. Dice Coeff.},
        xticklabel style = {rotate = 90, anchor = east},
    ]

        \addplot[
            scatter, 
            only marks, 
            point meta=explicit symbolic,
            scatter/classes={
                a={fill=Set1-A, fill opacity=0.25, draw opacity=0},
                b={fill=Set1-B, fill opacity=0.25, draw opacity=0},
                c={fill=Set1-C, fill opacity=0.25, draw opacity=0},
                d={fill=Set1-D, fill opacity=0.25, draw opacity=0}
            },
        ] table [meta=label] {
            x y      label
            1 0.9587 a
            1 0.9701 a
            1 0.9594 a
            1 0.9741 a
            1 0.9689 a
            1 0.9678 a
            1 0.9708 a
            1 0.9611 a
            1 0.9726 a
            1 0.9692 a
            1 0.9733 a
            1 0.9592 a
            1 0.9728 a
            1 0.9775 a
            1 0.9762 a
            
            2  0.9772 b
            2  0.9771 b
            2  0.9793 b
            2  0.9786 b
            2  0.9707 b
            2  0.9754 b
            2  0.9787 b
            2  0.9795 b
            2  0.9777 b
            2  0.9731 b
            2  0.9779 b
            2  0.9717 b
            2  0.9761 b
            2  0.9770 b
            2  0.9674 b
            
            3 0.9664 c
            3 0.9786 c
            3 0.9778 c
            3 0.9750 c
            3 0.9782 c
            3 0.9762 c
            3 0.9705 c
            3 0.9776 c
            3 0.9738 c
            3 0.9707 c
            3 0.9765 c
            3 0.9773 c
            3 0.9760 c
            3 0.9755 c
            3 0.9748 c
            
            4 0.9759 d
            4 0.9807 d
            4 0.9745 d
            4 0.9783 d
            4 0.9687 d
            4 0.9803 d
            4 0.9762 d
            4 0.9765 d
            4 0.9769 d
            4 0.9790 d
            4 0.9779 d
            4 0.9737 d
            4 0.9736 d
            4 0.9716 d
            4 0.9772 d
        };
        
        \addplot[
            only marks,
            color = black,
            mark = x, 
            dashed,
            mark options={solid},
            mark size=4
        ] coordinates {
            (1,0.9688) 
            (2,0.9758) 
            (3,0.9750) 
            (4,0.9761) 
        };
    \end{axis}
\end{tikzpicture}
        \caption{A scatterplot showing how the use of different distance approximations affect the performance of the 6 layer PDE-G-CNN on the Lines dataset. The crosses indicate the mean.}
        \label{fig:lines_distance_approximations_scatter}
    \end{figure}
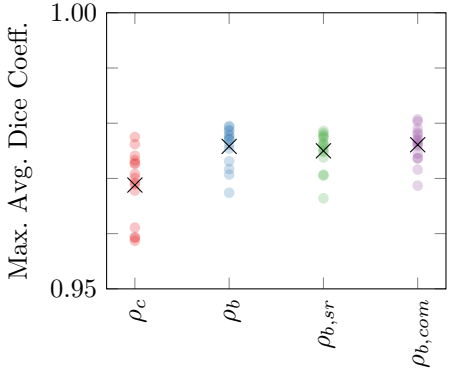

    \renewcommand{\figwidth}{0.32\linewidth}
    \begin{figure}
        \centering%
        \begin{subfigure}{\figwidth}
            \centering
            \includegraphics[width=\linewidth]{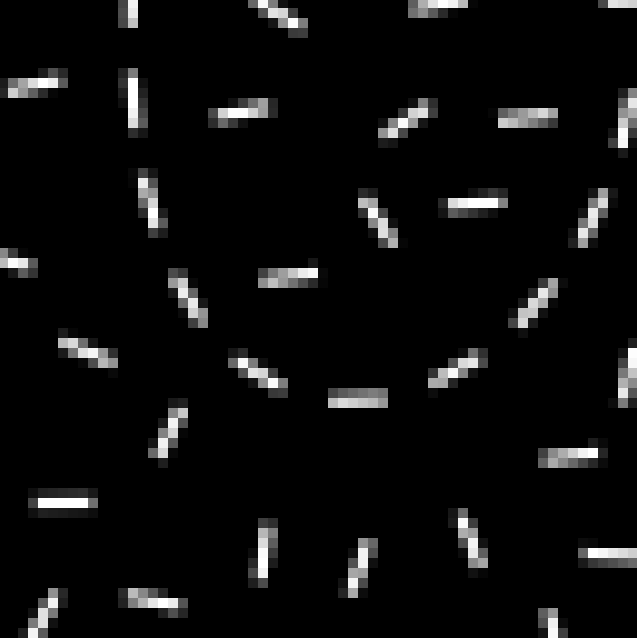}
            \caption{Input}
            \label{fig:lines_sample_input}
        \end{subfigure}
        \begin{subfigure}{\figwidth}
            \centering
            \includegraphics[width=\linewidth]{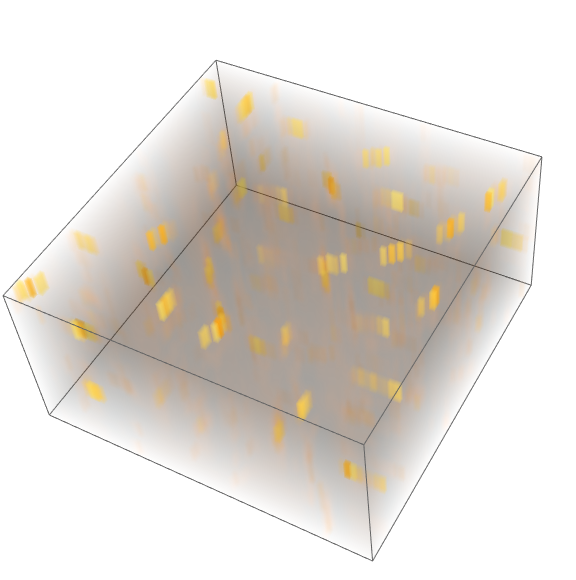}
            \caption{}
            \label{fig:lines_after_lifting}
        \end{subfigure}
        \begin{subfigure}{\figwidth}
            \centering
            \includegraphics[width=\linewidth]{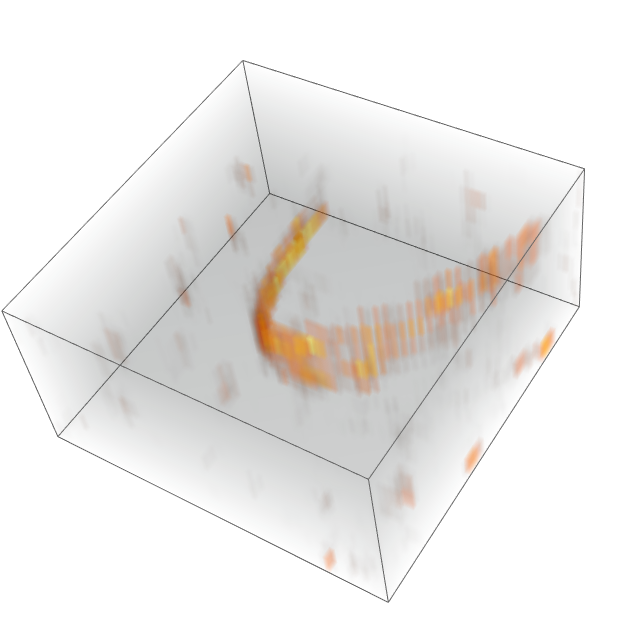}
            \caption{}
            \label{fig:lines_before_final_projection}
        \end{subfigure}\\
        \begin{subfigure}{\figwidth}
            \centering
            \includegraphics[width=\linewidth]{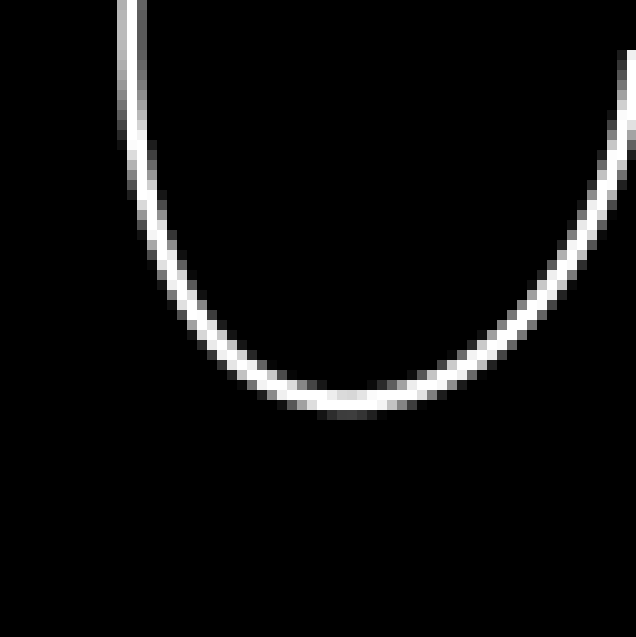}
            \caption{Truth}
            \label{fig:lines_sample_gt}
        \end{subfigure}
        \begin{subfigure}{\figwidth}
            \centering
            \includegraphics[width=\linewidth]{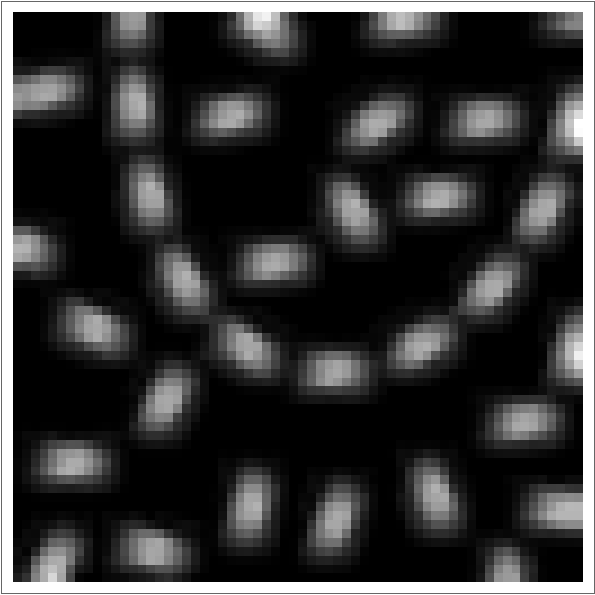}
            \caption{}
            \label{fig:lines_after_lifting_projected}
        \end{subfigure}
        \begin{subfigure}{\figwidth}
            \centering
            \includegraphics[width=\linewidth]{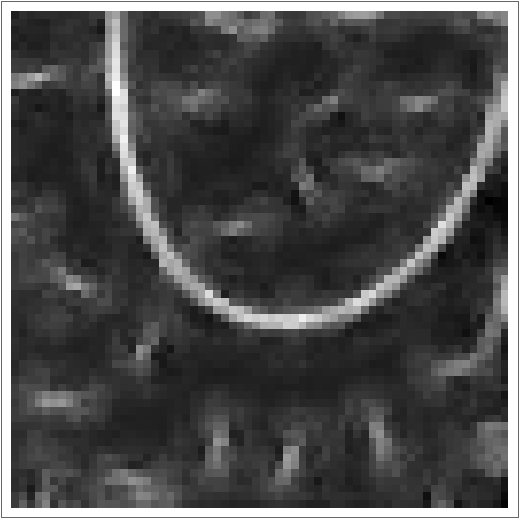}
            \caption{}
            \label{fig:lines_before_final_projection_projected}
        \end{subfigure}
        \caption{In \ref{fig:lines_sample_input} and \ref{fig:lines_sample_gt} we see one sample from the Lines dataset. The other four pictures are visualizations of feature maps of the 6 layer PDE-G-CNN. In \ref{fig:lines_after_lifting} and \ref{fig:lines_after_lifting_projected} we see a feature map of the lifting layer together with its max-projection over \(\theta\). In \ref{fig:lines_before_final_projection} and \ref{fig:lines_before_final_projection_projected} we see a feature map of the last PDE layer, just before the final projection layer.}
        \label{fig:lines_input_gt_volumes_and_projections}
    \end{figure}
    
    \section{Conclusion} \label{sec:conclusion}

In this article we have carefully analyzed how well the nonlinear erosion and dilation parts of PDE-G-CNNs are actually solved on the homogeneous space of 2D positions and orientations \(\bbM_2\). According to \Cref{res:kernels} the Hamilton-Jacobi equations are solved by morphological kernels that are functions of only the exact (sub)-Riemannian distance function. As a result, every approximation of the exact distance yields a corresponding approximative morphological kernel.

In \Cref{res:main_results} we use this to improve upon local and global approximations of the relative errors of the erosion and dilations kernels used in the papers \cite{smets2022pdebased,duits2021equivariant} where PDE-G-CNN are first proposed (and shown to outperform G-CNNs). Our new sharper estimates for distance on \(\bbM_2\) have bounds that explicitly depend on the metric tensor field coefficients. This allowed us to theoretically underpin the earlier worries expressed in \cite[Fig.10]{smets2022pdebased} that if spatial anisotropy becomes high the previous morphological kernel approximations \cite{smets2022pdebased} become less and less accurate.

Indeed, as we show qualitatively in \Cref{tab:balls_high_anisotropy} and quantitatively in \Cref{sec:quantitative_analysis_rho_b}, if the spatial anisotropy \(\zeta\) is high one must resort to sub-Riemannian approximations. Furthermore, we propose a single distance approximation \(\rho_{b,com}\) that works both for low and high spatial anisotropy. 

Apart from how well the kernels approximate the PDEs, there is the issue of how well each of the distance approximations perform in applications within the PDE-G-CNNs. In practice the analytic approximative kernels using \(\rho_b\), \(\rho_c\), \(\rho_{b,com}\) perform similarly. This is not surprising as our theoretical result \Cref{res:distance_symmetries,res:kernel_symmetries} reveals that all morphological kernel approximations carry the correct 8 fundamental symmetries of the PDE. Nevertheless, \Cref{fig:dca1_distance_approximations_scatter,fig:lines_distance_approximations_scatter} do reveal advantages of using the new kernel approximations (in particular \(\rho_{b,com}\)) over the previous kernel \(\rho_c\) in \cite{smets2022pdebased}. 

The experiments also show that the strictly sub-Riemannian distance approximation \(\rho_{b,sr}\) only performs well on applications where sub-Riemannian geometry really applies. For instance, as can be seen in \Cref{fig:dca1_distance_approximations_scatter,fig:lines_distance_approximations_scatter}, on the DCA1 dataset \(\rho_{b,sr}\) performs relatively poor, whereas on the Lines dataset,  \(\rho_{b,sr}\) performs well. This is what one would expect in view of sub-Riemannian models and findings in cortical line-perception \cite{citti2006cortical,petitot2003neurogeometry,baspinar2018geometric,baspinar2021cortical,franceschiello2019geometrical,petitot2017elements} in neurogeometry.

Besides better accuracy and better performance of the approximative kernels, there is the issue of geometric interpretability. In G-CNNs and CNNs geometric interpretability is absent, as they include ad-hoc nonlinearities like ReLUs. PDE-G-CNNs instead employ morphological convolutions with kernels that reflect association fields, as visualized in \Cref{fig:geodesics_association_field}. In \Cref{fig:feature_maps} we see that as network depth increases association fields visually merge in the feature maps of PDE-G-CNNs towards adaptive line detectors, whereas such merging/grouping of association fields is not visible in normal CNNs.

In all cases, the PDE-G-CNNs still outperform G-CNNs and CNNs on the DCA1 dataset and Lines dataset: they have a higher (or equal) performance, while having a huge reduction in network complexity, even when using 3 layers. Regardless, the choice of kernel \(\rho_c\), \(\rho_b\), \(\rho_{b,sr}\), \(\rho_{b,com}\) the advantage of PDE-G-CNNs towards \mbox{G-CNNs} and CNNs is significant, as can be clearly observed in \Cref{fig:baseline_dca1_scatter,fig:baseline_lines_scatter} and \Cref{tab:parameters_baseline_dca1,tab:parameters_baseline_lines}. This is in line with previous observations on other datasets \cite{smets2022pdebased}. 

Altogether, PDE-G-CNNs have better geometric reduction, performance, and geometric interpretation, than basic classical feed-forward (G)-CNN networks on various segmentation problems. 

Extensive investigations on training data reduction, memory reduction (via U-Net versions of PDE-G-CNNs), and a topological description of the merging of association fields are beyond the scope of this article, and are left for future work.

    \section*{Acknowledgements}
    
        We thank Dr. Javier Oliván Bescós for pointing us to the publicly available DCA1 dataset \cite{sanchez2019segmentation}. 
    
    \section*{Declarations}
        
        \subsection*{Availability of Data and Code}
            The code of the experiments, and PDE-G-CNNs in general, can be found in the publicly available LieTorch package: \url{https://gitlab.com/bsmetsjr/lietorch}.
            
            The publicly available DCA1 dataset \cite{sanchez2019segmentation} can be found at \url{http://personal.cimat.mx:8181/~ivan.cruz/DB_Angiograms.html}.
            
            The lines dataset is available from the authors on request.
        
        \subsection*{Funding}
            We gratefully acknowledge the Dutch Foundation of Science NWO for its financial support by Talent Programme VICI 2020 Exact Sciences (Duits, Geometric learning for Image Analysis, VI.C. 202-031).

    \begin{appendices}
    
        \crefalias{section}{appendix} 

\section{Proof of \texorpdfstring{\Cref{res:bound_on_differential_norm_rho_b}}{bound on norm derivative rhob}} \label{app:proof_bound_on_differential_norm_rho_b}

\begin{proof}
    We start by writing out the explicit form of $\Vert d \rho_b\Vert^2$ in the left-invariant frame:
    \begin{equation*}
        \Vert d \rho_b\Vert^2 = w_1^{-2} (\cA_1 \rho_b)^2 + w_2^{-2} (\cA_2 \rho_b)^2 + w_3^{-2} (\cA_3 \rho_b)^2.
    \end{equation*}
    By replacing the left-invariant derivatives with half-angle coordinates derivatives we can equivalently write this as: 
    {\small
    \begin{equation*}
    \begin{split}
        & w_2^{-2} \Par{ 
              \Abs{\pdv{\rho}{b^1}}^2
            + \Abs{\pdv{\rho}{b^2}}^2 
        } \\
        +&(w_1^{-2} - w_2^{-2}) 
        \Abs{
              \cos\Par{\frac{b^3}{2}}\pdv{\rho}{b^1}
            + \sin\Par{\frac{b^{3}}{2}} \pdv{\rho}{b^2}
        }^2 \\
        +& w_3^{-2} \Abs{ 
              \frac{1}{2}\pdv{\rho}{\psi}
            + \pdv{\rho}{b^3} 
        }^2,
    \end{split}
    \end{equation*}
    }
    where \(\psi = \arctantwo(b^2, b^1)\), \(\pd_\psi = b^2 \pd_{b^1} - b^1 \pd_{b^2} \), and we omitted the subscript \(b\) from \(\rho\) for conciseness. We are going to Taylor expand the sin and cosine in the second term up to the second order term. This becomes
    \begin{multline*}
        \Abs{\cos\Par{\frac{b^3}{2}} \pdv{\rho}{b^1}
        + 
        \sin\Par{\frac{b^{3}}{2}} \pdv{\rho}{b^2} }^2
        =  \Abs{\pdv{\rho}{b^1}}^2 \\
        + \theta \Par{ \pdv{\rho}{b^1} \pdv{\rho}{b^2}  } 
        + \frac{\theta^2}{4} \Par{ \Abs{\pdv{\rho}{b^2}}^2 - \Abs{\pdv{\rho}{b^1}}^2  }
        + O(\theta^3).
    \end{multline*}
    This allows us to write \( \Norm{d \rho_b}^2\) as 
    \begin{equation*}
        w_1^{-2} \Abs{\pdv{\rho}{b^1}}^2 + w_2^{-2} \Abs{\pdv{\rho}{b^2}}^2 + w_3^{-2} \Abs{ \pdv{\rho}{b^3}}^2 + \e.
    \end{equation*}
    Making use of the fact that the first part in this expression equals 1, we can thus write \( \Norm{d \rho_b}^2 = 1 + \e\). The exact form of \(\e\) is as follows 
    \begin{multline*}
        \varepsilon = \frac{w_1^2 - w_2^2}{4 w_1^2 \, w_2^2 \, w_3^2 \, \rho_b^2} \biggl( w_1^4 w_3^2 (b^1 b^3)^2
        - w_2^4 w_3^2 (b^2 b^3)^2 \\
        + w_1^2w_2^2(w_1^2 - w_2^2) (b^1 b^2)^2 \biggr) + O(\theta^3).
    \end{multline*}
    Using that \(w_i \vert b^i \vert \leq \rho_b\) we can bound the expression from above by
    {\small
    \begin{equation*}
        \e \leq \rho_b^2 \frac{\Abs{ w_1^2 - w_2^2}}{4 w_1^2 \, w_2^2 \, w_3^2 \,} \left( w_1^2 + w_2^2 + \Abs{ w_1^2 - w_2^2} \right) + O(\theta^3).
    \end{equation*}
    }
    Finally the lemma follows by algebraic manipulations and the fact that \(w_1 \leq w_2\).
    
\end{proof}

\section{Geometric Interpretation of 
PDE-G-CNN layers} \label{app:geometric}

In a PDE-G-CNN layer \cite{smets2022pdebased,duits2021equivariant} one first performs convection and then a morphological convolution (dilation/erosion). This has the interesting effect that we can interpret this equivalently as performing a morphological convolution with a shifted morphological kernel. To make this precise we first define what convection exactly is:
\begin{definition}[Convection]
    Let \(v \in T_{\bp_0} (\bbM_2)\) be a tangent vector at the reference point \(\bp_0\), and let \(c : \bbM_2 \to T(\bbM_2)\) be the corresponding left-invariant vector field obtained by pushing \(v\) forward with the left-action \(L_g(\bp) := g \bp\), i.e. $c(g \bp_0)=(L_g)_* v$. Convection is defined as:
    \begin{equation*}
    \begin{cases}
        \pdv{W}{t} &= -cW\\
        \At{W}_{t=0} &= U,
    \end{cases}
    \end{equation*}
    where both \(W\) and \(U\) are scalar differentiable functions on \(\bbM_2\).
\end{definition}
The solution of this left-invariant transport (`convection') is quite simple and we state it in the following proposition without proof:
\begin{proposition}
    The solution to the convection equation is 
    \begin{equation*}
        W(g \bp_0, t) = U(g \exp(-vt) \bp_0 ).
    \end{equation*}
    where we identified \(v\) as a tangent vector in \(T_eSE(2)\).
\end{proposition}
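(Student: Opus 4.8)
The plan is to verify directly that the proposed formula solves the initial value problem and then to appeal to uniqueness for the linear transport equation \(\partial_t W = -cW\), \(W|_{t=0}=U\). The only structural fact needed beyond elementary Lie group calculus is that the orbit map \(\pi : SE(2)\to\bbM_2\), \(\pi(h):=h\,\bp_0\), is a diffeomorphism; this is exactly \cref{eq:one_to_one}. Since \(\pi\) intertwines the left actions, \(\pi(gh)=g\,\pi(h)\), the left-invariant vector field \(c\) on \(\bbM_2\) with \(c(\bp_0)=v\) corresponds under \(\pi\) to the left-invariant vector field \(X\) on \(SE(2)\) with \(X_e=v\) — the ``identification of \(v\) with a tangent vector in \(T_eSE(2)\)'' is precisely \((\pi_*)^{-1}\). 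Writing \(\tilde U:=U\circ\pi\), it therefore suffices to show that \(\tilde W(h,t):=\tilde U\!\left(h\exp(-vt)\right)\) satisfies \(\partial_t\tilde W=-X\tilde W\) and \(\tilde W(\cdot,0)=\tilde U\). The initial condition holds because \(\exp(0)=e\).

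For both derivatives I use the standard description \((Xf)(k)=\left.\frac{d}{ds}\right|_{s=0}f\!\left(k\exp(sv)\right)\) of a left-invariant vector field. Fix \(h\) and set \(k:=h\exp(-vt)\). Using \(\exp(-v(t+s))=\exp(-vt)\exp(-sv)\), the time derivative is
\begin{equation*}
    \partial_t\tilde W(h,t)=\left.\frac{d}{ds}\right|_{s=0}\tilde U\!\left(k\exp(-sv)\right)=-\,(X\tilde U)(k).
\end{equation*}
For the spatial term,
\begin{equation*}
    (X\tilde W)(h,t)=\left.\frac{d}{ds}\right|_{s=0}\tilde U\!\left(h\exp(sv)\exp(-vt)\right).
\end{equation*}
Now the one nontrivial step: elements of the one-parameter subgroup generated by \(v\) commute, so \(h\exp(sv)\exp(-vt)=h\exp(-vt)\exp(sv)=k\exp(sv)\), whence \((X\tilde W)(h,t)=(X\tilde U)(k)=-\,\partial_t\tilde W(h,t)\). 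Thus \(\tilde W\), and hence \(W(g\bp_0,t)=\tilde W(g,t)=U(g\exp(-vt)\bp_0)\), solves the convection equation. Uniqueness is standard: the equation is linear transport along the complete left-invariant field \(c\), whose characteristic through \(g\bp_0\) is \(t\mapsto g\exp(-vt)\bp_0\), so the \(C^1\) solution is pinned down by its initial data.

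The only place I expect real care to be needed is the commutation \(h\exp(sv)\exp(-vt)=h\exp(-vt)\exp(sv)\): it is what allows the left-invariant derivative \(X\) to pass through the right translation by \(\exp(-vt)\) that defines \(\tilde W(\cdot,t)\). Handled naively, that right translation would contribute a spurious factor \(\mathrm{Ad}_{\exp(vt)}\), which in turn fixes \(v\) for the same underlying reason, namely \(\mathrm{ad}_v v=0\). Everything else is routine: transporting the problem from \(\bbM_2\) to \(SE(2)\) via the diffeomorphism \(\pi\), and the elementary differentiation rules for one-parameter subgroups and left-invariant fields.
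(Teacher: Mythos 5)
Your proof is correct. The paper itself states this proposition explicitly \emph{without} proof (deferring to \cite[Sec.5.1]{smets2022pdebased}), so there is no in-paper argument to compare against; your direct verification is the standard one, namely that the characteristics of the left-invariant transport are the exponential curves \(t \mapsto g\exp(tv)\bp_0\). The two points that actually need care are both handled properly: the transfer of the problem from \(\bbM_2\) to \(SE(2)\) via the orbit diffeomorphism \(\pi(h)=h\bp_0\) (under which \(c\) and the left-invariant field \(X\) with \(X_e=v\) are \(\pi\)-related), and the commutation \(\exp(sv)\exp(-vt)=\exp(-vt)\exp(sv)\) within the one-parameter subgroup, which is exactly what lets \(X\) pass through the right translation by \(\exp(-vt)\) without picking up an \(\mathrm{Ad}_{\exp(vt)}\) factor (harmless here anyway since \(\mathrm{ad}_v v=0\), as you note). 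The concluding appeal to uniqueness for linear transport along a complete left-invariant field is standard and closes the argument.
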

For the proof, and more details on how convection is implemented in practice within the PDE-G-CNN framework, we refer to \cite[Sec.5.1]{smets2022pdebased}. The general idea is that the characteristics of left-invariant flow are Lie group exponential curves acting on the reference point $\bp_0 \in \bbM_2$ in the homogeneous space. 

We can now show that first performing convection and then a morphological convolution is the same as doing a morphological convolution with a shifted kernel:
\begin{proposition} \label{res:shifted_kernel}
    Let \(k : \bbM_2 \to \bbR\) be any morphological kernel. We have:
    \begin{equation*}
    \begin{split}
        (k \mathbin{\square} W)(\bp) 
        &= \inf \limits_{g \in G} \left\{k(g^{-1} \bp) + W(g \bp_0, t)\right\}\\
        &= (\hat{k} \mathbin{\square} U)(\bp),
    \end{split}
    \end{equation*}
    with shifted kernel \(\hat{k}(\bp, t) := k(\exp(-t v) \bp)\). In particular for time dependent erosion PDE kernels
    \begin{equation}\label{eq:shifted_kernel_erosion}
        \hat{k}_t(\bp)= \frac{t}{\beta}\left(\frac{d_{\mathcal{G}}(\bp,e^{tv}\bp_0)}{t}\right)^{\beta}
    \end{equation}
\end{proposition}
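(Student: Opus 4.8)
The plan is to unwind the definition of the morphological group convolution \eqref{eq:morphological_convolution} and substitute the closed-form solution of convection. By \eqref{eq:morphological_convolution} with $f_1 = k$ and $f_2 = W(\cdot,t)$ the first equality in the statement is immediate, and the preceding proposition gives $W(g\bp_0,t) = U(g\exp(-vt)\bp_0)$ (identifying $v \in T_{\bp_0}\bbM_2$ with an element of $T_e SE(2)$, so that $\exp(-vt)$ is a genuine element of $SE(2)$ acting on $\bbM_2$). Hence
\begin{equation*}
    (k \mathbin{\square} W)(\bp) = \inf_{g \in G} \bigl\{\, k(g^{-1}\bp) + U(g\exp(-vt)\bp_0) \,\bigr\}.
\end{equation*}

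The key step is a change of the group variable. I would set $h := g\exp(-vt)$; since right translation by the fixed element $\exp(-vt)$ is a bijection of $SE(2)$, the infimum over $g\in G$ coincides with the infimum over $h\in G$. Writing $g^{-1} = \exp(-vt)\,h^{-1}$ and using associativity of the left action $\odot$ gives
\begin{equation*}
    (k \mathbin{\square} W)(\bp) = \inf_{h \in G} \bigl\{\, k\bigl(\exp(-vt)\,(h^{-1}\bp)\bigr) + U(h\bp_0) \,\bigr\} = \inf_{h \in G} \bigl\{\, \hat{k}(h^{-1}\bp,t) + U(h\bp_0) \,\bigr\},
\end{equation*}
where the last equality is just the definition $\hat{k}(\bq,t) = k(\exp(-vt)\bq)$. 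By \eqref{eq:morphological_convolution} this is exactly $(\hat{k}(\cdot,t) \mathbin{\square} U)(\bp)$, proving the general statement.

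For the explicit formula \eqref{eq:shifted_kernel_erosion} I would specialize to $k = k_t^\a$ from \eqref{eq:morphological_kernel}, so that $\hat{k}_t(\bp) = \frac{t}{\beta}\bigl(d_\cG(\bp_0, \exp(-vt)\bp)/t\bigr)^\beta$, and then invoke left-invariance of the Riemannian distance together with its symmetry:
\begin{equation*}
    d_\cG(\bp_0, \exp(-vt)\bp) = d_\cG\bigl(e^{tv}\bp_0,\, e^{tv}\exp(-vt)\bp\bigr) = d_\cG(e^{tv}\bp_0, \bp) = d_\cG(\bp, e^{tv}\bp_0).
\end{equation*}

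I do not anticipate a real obstacle: the argument is bookkeeping built on \eqref{eq:morphological_convolution}, the convection solution, and left-invariance of $d_\cG$. The only points requiring care are (i) getting the direction and sign of the shift $\exp(-vt)$ right through the substitution $h = g\exp(-vt)$, and (ii) applying left-invariance of $d_\cG$ by the correct group element $e^{tv}$ so that the inner argument of the kernel collapses to $d_\cG(\bp, e^{tv}\bp_0)$.
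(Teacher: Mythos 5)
Your proposal is correct and matches the paper's proof essentially line for line: the paper performs the same substitution $g \mapsto g\exp(-vt)$ (written implicitly as a re-indexing of the infimum) and then derives \eqref{eq:shifted_kernel_erosion} from left-invariance of $d_\cG$ and $(e^{tv})^{-1}=e^{-tv}$, exactly as you do. Your explicit bookkeeping of the change of variable $h = g\exp(-vt)$ is a slightly more careful presentation of the identical argument.
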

\begin{proof}
    Indeed, by direct computations one has:
    \begin{equation*}
    \begin{split}
        (k \mathbin{\square} W)(\bp) 
        &= \inf \limits_{g \in G} \left\{k(g^{-1} \bp) + W(g \bp_0, t)\right\}\\
        &= \inf \limits_{g \in G} \left\{k(g^{-1} \bp) + U(g \exp(-vt) \bp_0 )\right\}\\
        &= \inf \limits_{g \in G} \left\{k(\exp(-vt) g^{-1} \bp) + U(g \bp_0 )\right\}\\
        &= \inf \limits_{g \in G} \left\{\hat{k}(g^{-1} \bp, t) + U(g \bp_0 )\right\}\\
        &= (\hat{k} \mathbin{\square} U)(\bp).
    \end{split}
    \end{equation*}
    When applying this to the erosion kernels \eqref{eq:morphological_kernel_intro} the result \eqref{eq:shifted_kernel_erosion} follows by left-invariance of the Riemannian metric: $d_\cG(e^{-tv}\bp,\bp_0)=d_\cG(\bp,e^{tv}\bp_0)$ and the identity $(e^{tv})^{-1}=e^{-tv}$.    
\end{proof}

\begin{figure}
    \centering
    \includegraphics[width=1.03\linewidth]{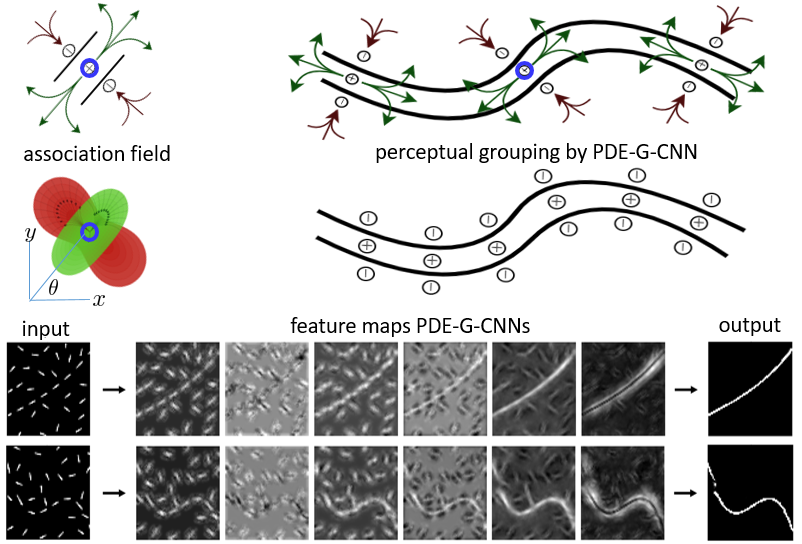}
    \caption{A PDE-G-CNN module trains left-invariant convection vector field $c$, a Riemannian ball over which we apply max-pooling (dilation for line excitation), and a Riemannian ball over which we apply min-pooling (erosion for inhibition/sharpening). Top: by \Cref{res:shifted_kernel} a PDE-G-CNN module trains: 1) a center point (blue), 2) an association field for excitation (green), and 3) an association field for inhibition (red). Bottom: As network-depth increases these association fields group together as visible in the feature maps.}
    \label{fig:geons}
\end{figure}

Recall the relation between (approximative) Riemannian balls and association fields, as visualised in \Cref{fig:association_field,fig:relation_distance_association_field,fig:relation_rho_c_association_field}.

The top left corner in \Cref{fig:geons} shows how a single PDE-G-CNN module (i.e. operator between two nodes in the network). The top-right shows the geometric rationale behind a PDE-G-CNNs that essentially performs perceptual grouping of association fields via training, and indeed the bottom two rows of \Cref{fig:geons} reveals how the grouping of association fields becomes visible in the feature maps of two input test images. 
In comparison this (for PDE-G-CNNs) typical geometric behavior is absent in feature maps of CNNs applied to the same images, recall \Cref{fig:feature_maps}.
    
    \end{appendices}

    \bibliography{literature}


\begin{thebibliography}{61}
\ifx \bisbn   \undefined \def \bisbn  #1{ISBN #1}\fi
\ifx \binits  \undefined \def \binits#1{#1}\fi
\ifx \bauthor  \undefined \def \bauthor#1{#1}\fi
\ifx \batitle  \undefined \def \batitle#1{#1}\fi
\ifx \bjtitle  \undefined \def \bjtitle#1{#1}\fi
\ifx \bvolume  \undefined \def \bvolume#1{\textbf{#1}}\fi
\ifx \byear  \undefined \def \byear#1{#1}\fi
\ifx \bissue  \undefined \def \bissue#1{#1}\fi
\ifx \bfpage  \undefined \def \bfpage#1{#1}\fi
\ifx \blpage  \undefined \def \blpage #1{#1}\fi
\ifx \burl  \undefined \def \burl#1{\textsf{#1}}\fi
\ifx \doiurl  \undefined \def \doiurl#1{\url{https://doi.org/#1}}\fi
\ifx \betal  \undefined \def \betal{\textit{et al.}}\fi
\ifx \binstitute  \undefined \def \binstitute#1{#1}\fi
\ifx \binstitutionaled  \undefined \def \binstitutionaled#1{#1}\fi
\ifx \bctitle  \undefined \def \bctitle#1{#1}\fi
\ifx \beditor  \undefined \def \beditor#1{#1}\fi
\ifx \bpublisher  \undefined \def \bpublisher#1{#1}\fi
\ifx \bbtitle  \undefined \def \bbtitle#1{#1}\fi
\ifx \bedition  \undefined \def \bedition#1{#1}\fi
\ifx \bseriesno  \undefined \def \bseriesno#1{#1}\fi
\ifx \blocation  \undefined \def \blocation#1{#1}\fi
\ifx \bsertitle  \undefined \def \bsertitle#1{#1}\fi
\ifx \bsnm \undefined \def \bsnm#1{#1}\fi
\ifx \bsuffix \undefined \def \bsuffix#1{#1}\fi
\ifx \bparticle \undefined \def \bparticle#1{#1}\fi
\ifx \barticle \undefined \def \barticle#1{#1}\fi
\bibcommenthead
\ifx \bconfdate \undefined \def \bconfdate #1{#1}\fi
\ifx \botherref \undefined \def \botherref #1{#1}\fi
\ifx \url \undefined \def \url#1{\textsf{#1}}\fi
\ifx \bchapter \undefined \def \bchapter#1{#1}\fi
\ifx \bbook \undefined \def \bbook#1{#1}\fi
\ifx \bcomment \undefined \def \bcomment#1{#1}\fi
\ifx \oauthor \undefined \def \oauthor#1{#1}\fi
\ifx \citeauthoryear \undefined \def \citeauthoryear#1{#1}\fi
\ifx \endbibitem  \undefined \def \endbibitem {}\fi
\ifx \bconflocation  \undefined \def \bconflocation#1{#1}\fi
\ifx \arxivurl  \undefined \def \arxivurl#1{\textsf{#1}}\fi
\csname PreBibitemsHook\endcsname

\bibitem{bekkers2018roto}
\begin{bchapter}
\bauthor{\bsnm{Bekkers}, \binits{E.J.}},
\bauthor{\bsnm{Lafarge}, \binits{M.W.}},
\bauthor{\bsnm{Veta}, \binits{M.}},
\bauthor{\bsnm{Eppenhof}, \binits{K.A.J.}},
\bauthor{\bsnm{Pluim}, \binits{J.P.W.}},
\bauthor{\bsnm{Duits}, \binits{R.}}:
\bctitle{Roto-translation covariant convolutional networks for medical image
  analysis}.
In: \bbtitle{International Conference on Medical Image Computing and
  Computer-Assisted Intervention},
pp. \bfpage{440}--\blpage{448}
(\byear{2018}).
\bcomment{Springer}.
\burl{https://arxiv.org/abs/1804.03393}
\end{bchapter}
\endbibitem

\bibitem{lecun1989backpropagation}
\begin{barticle}
\bauthor{\bsnm{LeCun}, \binits{Y.}},
\bauthor{\bsnm{Boser}, \binits{B.}},
\bauthor{\bsnm{Denker}, \binits{J.S.}},
\bauthor{\bsnm{Henderson}, \binits{D.}},
\bauthor{\bsnm{Howard}, \binits{R.E.}},
\bauthor{\bsnm{Hubbard}, \binits{W.}},
\bauthor{\bsnm{Jackel}, \binits{L.D.}}:
\batitle{Backpropagation applied to handwritten zip code recognition}.
\bjtitle{Neural computation}
\bvolume{1}(\bissue{4}),
\bfpage{541}--\blpage{551}
(\byear{1989})
\end{barticle}
\endbibitem

\bibitem{krizhevsky2012imagenet}
\begin{bchapter}
\bauthor{\bsnm{Krizhevsky}, \binits{A.}},
\bauthor{\bsnm{Sutskever}, \binits{I.}},
\bauthor{\bsnm{Hinton}, \binits{G.E.}}:
\bctitle{{I}mage{N}et classification with deep convolutional neural networks}.
In: \beditor{\bsnm{Pereira}, \binits{F.}},
\beditor{\bsnm{Burges}, \binits{C.J.}},
\beditor{\bsnm{Bottou}, \binits{L.}},
\beditor{\bsnm{Weinberger}, \binits{K.Q.}} (eds.)
\bbtitle{Advances in Neural Information Processing Systems},
vol. \bseriesno{25}.
\bpublisher{Curran Associates, Inc.},
\blocation{Red Hook, New York}
(\byear{2012}).
\burl{https://proceedings.neurips.cc/paper/2012/file/c399862d3b9d6b76c8436e924a68c45b-Paper.pdf}
\end{bchapter}
\endbibitem

\bibitem{litjens2017survey}
\begin{barticle}
\bauthor{\bsnm{Litjens}, \binits{G.}},
\bauthor{\bsnm{Bejnodri}, \binits{B.E.}},
\bauthor{\bsnm{Setio}, \binits{A.A.A.}},
\bauthor{\bsnm{Ciompi}, \binits{F.}},
\bauthor{\bsnm{Ghafoorian}, \binits{M.}},
\bauthor{\bparticle{van~der} \bsnm{Laak}, \binits{J.A.W.M.}},
\bauthor{\bparticle{van} \bsnm{Ginneken}, \binits{B.}},
\bauthor{\bsnm{S\'{a}nchez}, \binits{C.I.}}:
\batitle{A survey on deep learning in medical image analysis}.
\bjtitle{Medical Image Analysis}
\bvolume{42},
\bfpage{60}--\blpage{88}
(\byear{2017})
\end{barticle}
\endbibitem

\bibitem{cohen2016group}
\begin{barticle}
\bauthor{\bsnm{Cohen}, \binits{T.S.}},
\bauthor{\bsnm{Welling}, \binits{M.}}:
\batitle{Group equivariant convolutional networks}.
\bjtitle{Proc. of the 33rd Int. Conf. on Machine Learning}
\bvolume{48},
\bfpage{1}--\blpage{12}
(\byear{2016})
\end{barticle}
\endbibitem

\bibitem{dieleman2016exploiting}
\begin{botherref}
\oauthor{\bsnm{Dieleman}, \binits{S.}},
\oauthor{\bsnm{De~Fauw}, \binits{J.}},
\oauthor{\bsnm{Kavukcuoglu}, \binits{K.}}:
Exploiting cyclic symmetry in convolutional neural networks.
arXiv preprint arXiv:1602.02660
(2016)
\end{botherref}
\endbibitem

\bibitem{dieleman2015rotation}
\begin{barticle}
\bauthor{\bsnm{Dieleman}, \binits{S.}},
\bauthor{\bsnm{Willett}, \binits{K.W.}},
\bauthor{\bsnm{Dambre}, \binits{J.}}:
\batitle{Rotation-invariant convolutional neural networks for galaxy morphology
  prediction}.
\bjtitle{Monthly Notices of the Royal Astronomical Society}
\bvolume{450}(\bissue{2}),
\bfpage{1441}--\blpage{1459}
(\byear{2015})
\end{barticle}
\endbibitem

\bibitem{winkels20183d}
\begin{botherref}
\oauthor{\bsnm{Winkels}, \binits{M.}},
\oauthor{\bsnm{Cohen}, \binits{T.S.}}:
{3D} {G-CNN}s for pulmonary nodule detection.
MIDL,
1--11
(2018)
\end{botherref}
\endbibitem

\bibitem{worrall2018cubenet}
\begin{botherref}
\oauthor{\bsnm{Worrall}, \binits{D.}},
\oauthor{\bsnm{Brostow}, \binits{G.}}:
{CubeNet}: Equivariance to {3D} rotation and translation.
ECCV 2018,
585--602
(2018)
\end{botherref}
\endbibitem

\bibitem{oyallon2015deep}
\begin{bchapter}
\bauthor{\bsnm{Oyallon}, \binits{E.}},
\bauthor{\bsnm{Mallat}, \binits{S.}}:
\bctitle{Deep roto-translation scattering for object classification}.
In: \bbtitle{Proceedings of the IEEE Conference on Computer Vision and Pattern
  Recognition},
pp. \bfpage{2865}--\blpage{2873}
(\byear{2015})
\end{bchapter}
\endbibitem

\bibitem{weiler2018learning}
\begin{bchapter}
\bauthor{\bsnm{Weiler}, \binits{M.}},
\bauthor{\bsnm{Hamprecht}, \binits{F.A.}},
\bauthor{\bsnm{Storath}, \binits{M.}}:
\bctitle{Learning steerable filters for rotation equivariant {CNNs}}.
In: \bbtitle{Proceedings of the IEEE Conference on Computer Vision and Pattern
  Recognition},
pp. \bfpage{849}--\blpage{858}
(\byear{2018})
\end{bchapter}
\endbibitem

\bibitem{bekkers2019bspline}
\begin{botherref}
\oauthor{\bsnm{Bekkers}, \binits{E.J.}}:
{B}-spline {CNN}s on {L}ie groups.
CoRR
\textbf{abs/1909.12057}
(2019)
{\href{https://arxiv.org/abs/1909.12057}{{1909.12057}}}
\end{botherref}
\endbibitem

\bibitem{finzi2020generalizing}
\begin{bchapter}
\bauthor{\bsnm{Finzi}, \binits{M.}},
\bauthor{\bsnm{Stanton}, \binits{S.}},
\bauthor{\bsnm{Izmailov}, \binits{P.}},
\bauthor{\bsnm{Wilson}, \binits{A.G.}}:
\bctitle{Generalizing convolutional neural networks for equivariance to {L}ie
  groups on arbitrary continuous data}.
In: \beditor{\bsnm{III}, \binits{H.D.}},
\beditor{\bsnm{Singh}, \binits{A.}} (eds.)
\bbtitle{Proceedings of the 37th International Conference on Machine Learning}.
\bsertitle{Proceedings of Machine Learning Research},
vol. \bseriesno{119},
pp. \bfpage{3165}--\blpage{3176}.
\bpublisher{PMLR},
\blocation{Virtual}
(\byear{2020}).
\burl{http://proceedings.mlr.press/v119/finzi20a.html}
\end{bchapter}
\endbibitem

\bibitem{cohen2019general}
\begin{botherref}
\oauthor{\bsnm{Cohen}, \binits{T.S.}},
\oauthor{\bsnm{Geiger}, \binits{M.}},
\oauthor{\bsnm{Weiler}, \binits{M.}}:
A general theory of equivariant {CNN}s on homogeneous spaces.
Advances in Neural Information Processing Systems
\textbf{32}
(2019)
\end{botherref}
\endbibitem

\bibitem{worrall2017harmonic}
\begin{bchapter}
\bauthor{\bsnm{Worrall}, \binits{D.E.}},
\bauthor{\bsnm{Garbin}, \binits{S.J.}},
\bauthor{\bsnm{Turmukhambetov}, \binits{D.}},
\bauthor{\bsnm{Brostow}, \binits{G.J.}}:
\bctitle{Harmonic networks: Deep translation and rotation equivariance}.
In: \bbtitle{Proceedings of the IEEE Conference on Computer Vision and Pattern
  Recognition},
pp. \bfpage{5028}--\blpage{5037}
(\byear{2017})
\end{bchapter}
\endbibitem

\bibitem{kondor2018generalization}
\begin{bchapter}
\bauthor{\bsnm{Kondor}, \binits{R.}},
\bauthor{\bsnm{Trivedi}, \binits{S.}}:
\bctitle{On the generalization of equivariance and convolution in neural
  networks to the action of compact groups}.
In: \beditor{\bsnm{Dy}, \binits{J.}},
\beditor{\bsnm{Krause}, \binits{A.}} (eds.)
\bbtitle{Proceedings of the 35th {International} {Conference} on {Machine}
  {Learning}}.
\bsertitle{Proceedings of {Machine} {Learning} {Research}},
vol. \bseriesno{80},
pp. \bfpage{2747}--\blpage{2755}.
\bpublisher{PMLR},
\blocation{Stockholmsmässan, Stockholm Sweden}
(\byear{2018}).
\burl{http://proceedings.mlr.press/v80/kondor18a.html}
\end{bchapter}
\endbibitem

\bibitem{esteves2018learning}
\begin{bchapter}
\bauthor{\bsnm{Esteves}, \binits{C.}},
\bauthor{\bsnm{Allen-Blanchette}, \binits{C.}},
\bauthor{\bsnm{Makadia}, \binits{A.}},
\bauthor{\bsnm{Daniilidis}, \binits{K.}}:
\bctitle{Learning {SO(3)} equivariant representations with spherical {CNNs}}.
In: \bbtitle{Proceedings of the European Conference on Computer Vision (ECCV)},
pp. \bfpage{52}--\blpage{68}
(\byear{2018})
\end{bchapter}
\endbibitem

\bibitem{weiler2019general}
\begin{bchapter}
\bauthor{\bsnm{Weiler}, \binits{M.}},
\bauthor{\bsnm{Cesa}, \binits{G.}}:
\bctitle{General {E(2)}-equivariant steerable {CNNs}}.
In: \bbtitle{Advances in Neural Information Processing Systems},
pp. \bfpage{14334}--\blpage{14345}
(\byear{2019})
\end{bchapter}
\endbibitem

\bibitem{paoletti2020rotation}
\begin{barticle}
\bauthor{\bsnm{Paoletti}, \binits{M.E.}},
\bauthor{\bsnm{Haut}, \binits{J.M.}},
\bauthor{\bsnm{Roy}, \binits{S.K.}},
\bauthor{\bsnm{Hendrix}, \binits{E.M.T.}}:
\batitle{Rotation equivariant convolutional neural networks for hyperspectral
  image classification}.
\bjtitle{IEEE Access}
\bvolume{8},
\bfpage{179575}--\blpage{179591}
(\byear{2020}).
\doiurl{10.1109/ACCESS.2020.3027776}
\end{barticle}
\endbibitem

\bibitem{weiler2021coordinate}
\begin{botherref}
\oauthor{\bsnm{Weiler}, \binits{M.}},
\oauthor{\bsnm{Forré}, \binits{P.}},
\oauthor{\bsnm{Verlinde}, \binits{E.}},
\oauthor{\bsnm{Welling}, \binits{M.}}:
Coordinate Independent Convolutional Networks -- Isometry and Gauge Equivariant
  Convolutions on {R}iemannian Manifolds.
arXiv
(2021).
\doiurl{10.48550/ARXIV.2106.06020}.
\url{https://arxiv.org/abs/2106.06020}
\end{botherref}
\endbibitem

\bibitem{cohen2019gauge}
\begin{bchapter}
\bauthor{\bsnm{Cohen}, \binits{T.S.}},
\bauthor{\bsnm{Weiler}, \binits{M.}},
\bauthor{\bsnm{Kicanaoglu}, \binits{B.}},
\bauthor{\bsnm{Welling}, \binits{M.}}:
\bctitle{Gauge equivariant convolutional networks and the icosahedral {CNN}}.
In: \beditor{\bsnm{Chaudhuri}, \binits{K.}},
\beditor{\bsnm{Salakhutdinov}, \binits{R.}} (eds.)
\bbtitle{Proceedings of the 36th International Conference on Machine Learning}.
\bsertitle{Proceedings of Machine Learning Research},
vol. \bseriesno{97},
pp. \bfpage{1321}--\blpage{1330}.
\bpublisher{PMLR},
\blocation{Long Beach, California}
(\byear{2019}).
\burl{https://proceedings.mlr.press/v97/cohen19d.html}
\end{bchapter}
\endbibitem

\bibitem{bogatskiy2020lorentz}
\begin{botherref}
\oauthor{\bsnm{Bogatskiy}, \binits{A.}},
\oauthor{\bsnm{Anderson}, \binits{B.}},
\oauthor{\bsnm{Offermann}, \binits{J.T.}},
\oauthor{\bsnm{Roussi}, \binits{M.}},
\oauthor{\bsnm{Miller}, \binits{D.W.}},
\oauthor{\bsnm{Kondor}, \binits{R.}}:
Lorentz Group Equivariant Neural Network for Particle Physics.
arXiv
(2020).
\doiurl{10.48550/ARXIV.2006.04780}.
\url{https://arxiv.org/abs/2006.04780}
\end{botherref}
\endbibitem

\bibitem{sifre2013rotation}
\begin{bchapter}
\bauthor{\bsnm{Sifre}, \binits{L.}},
\bauthor{\bsnm{Mallat}, \binits{S.}}:
\bctitle{Rotation, scaling and deformation invariant scattering for texture
  discrimination}.
In: \bbtitle{2013 IEEE Conference on Computer Vision and Pattern Recognition},
pp. \bfpage{1233}--\blpage{1240}
(\byear{2013}).
\doiurl{10.1109/CVPR.2013.163}
\end{bchapter}
\endbibitem

\bibitem{bekkers2018template}
\begin{barticle}
\bauthor{\bsnm{Bekkers}, \binits{E.J.}},
\bauthor{\bsnm{Loog}, \binits{M.}},
\bauthor{\bparticle{ter} \bsnm{Haar~Romeny}, \binits{B.M.}},
\bauthor{\bsnm{Duits}, \binits{R.}}:
\batitle{Template matching via densities on the roto-translation group}.
\bjtitle{IEEE Transactions on Pattern Analysis and Machine Intelligence}
\bvolume{40}(\bissue{2}),
\bfpage{452}--\blpage{466}
(\byear{2018}).
\doiurl{10.1109/TPAMI.2017.2652452}
\end{barticle}
\endbibitem

\bibitem{worrall2019deep}
\begin{bchapter}
\bauthor{\bsnm{Worrall}, \binits{D.}},
\bauthor{\bsnm{Welling}, \binits{M.}}:
\bctitle{Deep scale-spaces: Equivariance over scale}.
In: \beditor{\bsnm{Wallach}, \binits{H.}},
\beditor{\bsnm{Larochelle}, \binits{H.}},
\beditor{\bsnm{Beygelzimer}, \binits{A.}},
\beditor{\bparticle{d\textquotesingle} \bsnm{Alch\'{e}-Buc}, \binits{F.}},
\beditor{\bsnm{Fox}, \binits{E.}},
\beditor{\bsnm{Garnett}, \binits{R.}} (eds.)
\bbtitle{Advances in Neural Information Processing Systems},
vol. \bseriesno{32}.
\bpublisher{Curran Associates, Inc.},
\blocation{Red Hook, New York}
(\byear{2019}).
\burl{https://proceedings.neurips.cc/paper/2019/file/f04cd7399b2b0128970efb6d20b5c551-Paper.pdf}
\end{bchapter}
\endbibitem

\bibitem{satorras2021equivariant}
\begin{bchapter}
\bauthor{\bsnm{Satorras}, \binits{V.G.}},
\bauthor{\bsnm{Hoogeboom}, \binits{E.}},
\bauthor{\bsnm{Welling}, \binits{M.}}:
\bctitle{{E(n)} equivariant graph neural networks}.
In: \beditor{\bsnm{Meila}, \binits{M.}},
\beditor{\bsnm{Zhang}, \binits{T.}} (eds.)
\bbtitle{Proceedings of the 38th International Conference on Machine Learning}.
\bsertitle{Proceedings of Machine Learning Research},
vol. \bseriesno{139},
pp. \bfpage{9323}--\blpage{9332}.
\bpublisher{PMLR},
\blocation{Virtual}
(\byear{2021}).
\burl{https://proceedings.mlr.press/v139/satorras21a.html}
\end{bchapter}
\endbibitem

\bibitem{bronstein2021geometric}
\begin{botherref}
\oauthor{\bsnm{Bronstein}, \binits{M.M.}},
\oauthor{\bsnm{Bruna}, \binits{J.}},
\oauthor{\bsnm{Cohen}, \binits{T.S.}},
\oauthor{\bsnm{Veličković}, \binits{P.}}:
Geometric Deep Learning: Grids, Groups, Graphs, Geodesics, and Gauges.
arXiv
(2021).
\doiurl{10.48550/ARXIV.2104.13478}.
\url{https://arxiv.org/abs/2104.13478}
\end{botherref}
\endbibitem

\bibitem{smets2022pdebased}
\begin{barticle}
\bauthor{\bsnm{Smets}, \binits{B.M.N.}},
\bauthor{\bsnm{Portegies}, \binits{J.W.}},
\bauthor{\bsnm{Bekkers}, \binits{E.J.}},
\bauthor{\bsnm{Duits}, \binits{R.}}:
\batitle{{PDE}-based group equivariant convolutional neural networks}.
\bjtitle{Journal of Mathematical Imaging and Vision}
(\byear{2022}).
\doiurl{10.1007/s10851-022-01114-x}
\end{barticle}
\endbibitem

\bibitem{duits2010leftinvariant}
\begin{barticle}
\bauthor{\bsnm{Duits}, \binits{R.}},
\bauthor{\bsnm{Franken}, \binits{E.M.}}:
\batitle{Left-invariant parabolic evolution equations on ${SE}(2)$ and contour
  enhancement via invertible orientation scores, part~{I}: Linear
  left-invariant diffusion equations on ${SE}(2)$}.
\bjtitle{QAM-AMS}
\bvolume{68},
\bfpage{255}--\blpage{292}
(\byear{2010})
\end{barticle}
\endbibitem

\bibitem{duits2005perceptual}
\begin{botherref}
\oauthor{\bsnm{Duits}, \binits{R.}}:
Perceptual organization in image analysis: a mathematical approach based on
  scale, orientation and curvature.
PhD thesis,
Eindhoven University of Technology
(2005)
\end{botherref}
\endbibitem

\bibitem{duits2013morphological}
\begin{barticle}
\bauthor{\bsnm{Duits}, \binits{R.}},
\bauthor{\bsnm{Dela~Haije}, \binits{T.C.J.}},
\bauthor{\bsnm{Creusen}, \binits{E.}},
\bauthor{\bsnm{Ghosh}, \binits{A.}}:
\batitle{Morphological and linear scale spaces for fiber enhancement in
  {DW-MRI}}.
\bjtitle{Journal of Mathematical Imaging and Vision}
\bvolume{46}(\bissue{3}),
\bfpage{326}--\blpage{368}
(\byear{2013})
\end{barticle}
\endbibitem

\bibitem{duits2007scale}
\begin{bchapter}
\bauthor{\bsnm{Duits}, \binits{R.}},
\bauthor{\bsnm{Burgeth}, \binits{B.}}:
\bctitle{Scale spaces on {L}ie groups}.
In: \bbtitle{International Conference on Scale Space and Variational Methods in
  Computer Vision},
pp. \bfpage{300}--\blpage{312}
(\byear{2007}).
\bcomment{Springer}
\end{bchapter}
\endbibitem

\bibitem{franken2008enhancement}
\begin{botherref}
\oauthor{\bsnm{Franken}, \binits{E.M.}}:
Enhancement of crossing elongated structures in images.
PhD thesis,
Eindhoven University of Technology
(2008)
\end{botherref}
\endbibitem

\bibitem{bekkers2017retinal}
\begin{botherref}
\oauthor{\bsnm{Bekkers}, \binits{E.J.}}:
Retinal image analysis using sub-{R}iemannian geometry in {SE(2)}.
PhD thesis,
Eindhoven University of Technology
(2017)
\end{botherref}
\endbibitem

\bibitem{hubel1959receptive}
\begin{barticle}
\bauthor{\bsnm{Hubel}, \binits{D.H.}},
\bauthor{\bsnm{Wiesel}, \binits{T.N.}}:
\batitle{Receptive fields of single neurons in the cat's striate cortex}.
\bjtitle{Journal of Physiology}
\bvolume{148},
\bfpage{574}--\blpage{591}
(\byear{1959})
\end{barticle}
\endbibitem

\bibitem{bosking1997orientation}
\begin{barticle}
\bauthor{\bsnm{Bosking}, \binits{W.H.}},
\bauthor{\bsnm{Zhang}, \binits{Y.}},
\bauthor{\bsnm{Schofield}, \binits{B.}},
\bauthor{\bsnm{Fitzpatrick}, \binits{D.}}:
\batitle{Orientation selectivity and the arrangement of horizontal connections
  in tree shrew striate cortex}.
\bjtitle{The Journal of Neuroscience}
\bvolume{17}(\bissue{6}),
\bfpage{2112}--\blpage{2127}
(\byear{1997})
\end{barticle}
\endbibitem

\bibitem{petitot2003neurogeometry}
\begin{barticle}
\bauthor{\bsnm{Petitot}, \binits{J.}}:
\batitle{The neurogeometry of pinwheels as a sub-{R}iemannian contact
  structure}.
\bjtitle{Journal of Physiology - Paris}
\bvolume{97},
\bfpage{265}--\blpage{309}
(\byear{2003})
\end{barticle}
\endbibitem

\bibitem{citti2006cortical}
\begin{barticle}
\bauthor{\bsnm{Citti}, \binits{G.}},
\bauthor{\bsnm{Sarti}, \binits{A.}}:
\batitle{A cortical based model of perceptional completion in the
  roto-translation space}.
\bjtitle{Journal of Mathematical Imaging and Vision}
\bvolume{24}(\bissue{3}),
\bfpage{307}--\blpage{326}
(\byear{2006})
\end{barticle}
\endbibitem

\bibitem{field1993contour}
\begin{barticle}
\bauthor{\bsnm{Field}, \binits{D.J.}},
\bauthor{\bsnm{Hayes}, \binits{A.}},
\bauthor{\bsnm{Hess}, \binits{R.F.}}:
\batitle{Contour integration by the human visual system: Evidence for a local
  “association field”}.
\bjtitle{Vision Research}
\bvolume{33}(\bissue{2}),
\bfpage{173}--\blpage{193}
(\byear{1993}).
\doiurl{10.1016/0042-6989(93)90156-Q}
\end{barticle}
\endbibitem

\bibitem{baspinar2021cortical}
\begin{barticle}
\bauthor{\bsnm{Baspinar}, \binits{E.}},
\bauthor{\bsnm{Calatroni}, \binits{L.}},
\bauthor{\bsnm{Franceschi}, \binits{V.}},
\bauthor{\bsnm{Prandi}, \binits{D.}}:
\batitle{A cortical-inspired sub-{R}iemannian model for {P}oggendorff-type
  visual illusions}.
\bjtitle{Journal of Imaging}
\bvolume{7},
\bfpage{41}
(\byear{2021}).
\doiurl{10.3390/jimaging7030041}
\end{barticle}
\endbibitem

\bibitem{franceschiello2019geometrical}
\begin{barticle}
\bauthor{\bsnm{Franceschiello}, \binits{B.}},
\bauthor{\bsnm{Mashtakov}, \binits{A.}},
\bauthor{\bsnm{Citti}, \binits{G.}},
\bauthor{\bsnm{Sarti}, \binits{A.}}:
\batitle{Geometrical optical illusion via sub-{R}iemannian geodesics in the
  roto-translation group}.
\bjtitle{Differential Geometry and its Applications}
\bvolume{65},
\bfpage{55}--\blpage{77}
(\byear{2019}).
\doiurl{10.1016/j.difgeo.2019.03.007}
\end{barticle}
\endbibitem

\bibitem{duits2013association}
\begin{barticle}
\bauthor{\bsnm{Duits}, \binits{R.}},
\bauthor{\bsnm{Boscain}, \binits{U.}},
\bauthor{\bsnm{Rossi}, \binits{F.}},
\bauthor{\bsnm{Sachkov}, \binits{Y.L.}}:
\batitle{Association fields via cuspless sub-{R}iemannian geodesics in
  {SE(2)}}.
\bjtitle{Journal of Mathematical Imaging and Vision}
\bvolume{49}(\bissue{2}),
\bfpage{384}--\blpage{417}
(\byear{2014}).
\doiurl{10.1007/s10851-013-0475-y}
\end{barticle}
\endbibitem

\bibitem{sachkov2011cutlocus}
\begin{barticle}
\bauthor{\bsnm{Sachkov}, \binits{Y.L.}}:
\batitle{Cut locus and optimal synthesis in the sub-{R}iemannian problem on the
  group of motions of a plane}.
\bjtitle{ESAIM: Control, Optimization and Calculus of Variations}
\bvolume{17},
\bfpage{293}--\blpage{321}
(\byear{2011})
\end{barticle}
\endbibitem

\bibitem{moiseev2010maxwell}
\begin{barticle}
\bauthor{\bsnm{Moiseev}, \binits{I.}},
\bauthor{\bsnm{Sachkov}, \binits{Y.L.}}:
\batitle{Maxwell strata in sub-{R}iemannian problem on the group of motions of
  a plane}.
\bjtitle{ESAIM: Control, Optimisation and Calculus of Variation}
\bvolume{16}(\bissue{2}),
\bfpage{380}--\blpage{399}
(\byear{2010}).
\doiurl{10.1051/cocv/2009004}
\end{barticle}
\endbibitem

\bibitem{duits2018optimal}
\begin{botherref}
\oauthor{\bsnm{Duits}, \binits{R.}},
\oauthor{\bsnm{Meesters}, \binits{S.P.L.}},
\oauthor{\bsnm{Mirebeau}, \binits{J.-M.}},
\oauthor{\bsnm{Portegies}, \binits{J.M.}}:
Optimal paths for variants of the {2D} and {3D} {Reeds}--{Shepp} car with
  applications in image analysis.
Journal of Mathematical Imaging and Vision,
1--33
(2018)
\end{botherref}
\endbibitem

\bibitem{petitot2017elements}
\begin{bbook}
\bauthor{\bsnm{Petitot}, \binits{J.}}:
\bbtitle{Elements of Neurogeometry}.
\bsertitle{Lecture Notes in Morphogenesis}.
\bpublisher{Springer},
\blocation{London}
(\byear{2017}).
\doiurl{10.1007/978-3-319-65591-8}
\end{bbook}
\endbibitem

\bibitem{bekkers2015pde}
\begin{barticle}
\bauthor{\bsnm{Bekkers}, \binits{E.J.}},
\bauthor{\bsnm{Duits}, \binits{R.}},
\bauthor{\bsnm{Mashtakov}, \binits{A.}},
\bauthor{\bsnm{Sanguinetti}, \binits{G.R.}}:
\batitle{A {PDE} approach to data-driven sub-{R}iemannian geodesics in
  {SE(2)}}.
\bjtitle{SIAM Journal on Imaging Sciences}
\bvolume{8}(\bissue{4}),
\bfpage{2740}--\blpage{2770}
(\byear{2015})
\end{barticle}
\endbibitem

\bibitem{bekkers2018nilpotent}
\begin{barticle}
\bauthor{\bsnm{Bekkers}, \binits{E.J.}},
\bauthor{\bsnm{Chen}, \binits{D.}},
\bauthor{\bsnm{Portegies}, \binits{J.M.}}:
\batitle{Nilpotent approximations of sub-{R}iemannian distances for fast
  perceptual grouping of blood vessels in {2D} and {3D}}.
\bjtitle{Journal of Mathematical Imaging and Vision}
\bvolume{60}(\bissue{6}),
\bfpage{882}--\blpage{899}
(\byear{2018}).
\doiurl{10.1007/s10851-018-0787-z}
\end{barticle}
\endbibitem

\bibitem{chirikjian2000engineering}
\begin{bbook}
\bauthor{\bsnm{Chirikjian}, \binits{G.S.}},
\bauthor{\bsnm{Kyatkin}, \binits{A.B.}}:
\bbtitle{Engineering Applications of Noncommutative Harmonic Analysis: With
  Emphasis on Rotation and Motion Groups}.
\bpublisher{CRC Press},
\blocation{Boca Raton, Florida}
(\byear{2000}).
\burl{https://books.google.nl/books?id = nfbLBQAAQBAJ}
\end{bbook}
\endbibitem

\bibitem{schmidt2016morphological}
\begin{barticle}
\bauthor{\bsnm{Schmidt}, \binits{M.}},
\bauthor{\bsnm{Weickert}, \binits{J.}}:
\batitle{Morphological counterparts of linear shift-invariant scale-spaces}.
\bjtitle{Journal of Mathematical Imaging and Vision}
\bvolume{56}(\bissue{2}),
\bfpage{352}--\blpage{366}
(\byear{2016})
\end{barticle}
\endbibitem

\bibitem{boomgaard1994morphological}
\begin{barticle}
\bauthor{\bparticle{van~den} \bsnm{Boomgaard}, \binits{R.}},
\bauthor{\bsnm{Smeulders}, \binits{A.}}:
\batitle{The morphological structure of images: the differential equations of
  morphological scale-space}.
\bjtitle{IEEE Transactions on Pattern Analysis and Machine Intelligence}
\bvolume{16}(\bissue{11}),
\bfpage{1101}--\blpage{1113}
(\byear{1994}).
\doiurl{10.1109/34.334389}
\end{barticle}
\endbibitem

\bibitem{evans2010partial}
\begin{bbook}
\bauthor{\bsnm{Evans}, \binits{L.C.}}:
\bbtitle{Partial Differential Equations}
vol. \bseriesno{19}.
\bpublisher{American Mathematical Society},
\blocation{Providence, Rhode Island}
(\byear{2010})
\end{bbook}
\endbibitem

\bibitem{diop2021extension}
\begin{bchapter}
\bauthor{\bsnm{Diop}, \binits{E.H.S.}},
\bauthor{\bsnm{Mbengue}, \binits{A.}},
\bauthor{\bsnm{Manga}, \binits{B.}},
\bauthor{\bsnm{Seck}, \binits{D.}}:
\bctitle{Extension of mathematical morphology in {R}iemannian spaces}.
In: \bbtitle{Scale Space and Variational Methods in Computer Vision},
pp. \bfpage{100}--\blpage{111}.
\bpublisher{Springer},
\blocation{Cham}
(\byear{2021})
\end{bchapter}
\endbibitem

\bibitem{fathi2007weakkam}
\begin{barticle}
\bauthor{\bsnm{Fathi}, \binits{A.}},
\bauthor{\bsnm{Maderna}, \binits{E.}}:
\batitle{Weak {KAM} theorem on non compact manifolds}.
\bjtitle{Nonlinear Differential Equations and Applications NoDEA}
\bvolume{14}(\bissue{1-2}),
\bfpage{1}--\blpage{27}
(\byear{2007}).
\doiurl{10.1007/s00030-007-2047-6}
\end{barticle}
\endbibitem

\bibitem{azagra2005nonsmooth}
\begin{barticle}
\bauthor{\bsnm{Azagra}, \binits{D.}},
\bauthor{\bsnm{Ferrera}, \binits{J.}},
\bauthor{\bsnm{L{\'o}pez-Mesas}, \binits{F.}}:
\batitle{Nonsmooth analysis and {H}amilton--{J}acobi equations on {R}iemannian
  manifolds}.
\bjtitle{Journal of Functional Analysis}
\bvolume{220}(\bissue{2}),
\bfpage{304}--\blpage{361}
(\byear{2005})
\end{barticle}
\endbibitem

\bibitem{lupi2021kernel}
\begin{botherref}
\oauthor{\bsnm{Lupi}, \binits{G.}}:
Kernel approximations in lie groups and application to group-invariant {CNN}.
Master thesis,
University of Bologna
(2021)
\end{botherref}
\endbibitem

\bibitem{terelst1998weighted}
\begin{barticle}
\bauthor{\bsnm{{ter Elst}}, \binits{A.F.M.}},
\bauthor{\bsnm{Robinson}, \binits{D.W.}}:
\batitle{Weighted subcoercive operators on {Lie} groups}.
\bjtitle{Journal of Functional Analysis}
\bvolume{157}(\bissue{1}),
\bfpage{88}--\blpage{163}
(\byear{1998}).
\doiurl{10.1006/jfan.1998.3259}
\end{barticle}
\endbibitem

\bibitem{medioni2021tensor}
\begin{bbook}
\bauthor{\bsnm{Mordohai}, \binits{P.}},
\bauthor{\bsnm{Medioni}, \binits{G.}}:
\bbtitle{Tensor Voting: A Perceptual Organization Approach to Computer Vision
  and Machine Learning}
vol. \bseriesno{2},
(\byear{2006}).
\doiurl{10.2200/S00049ED1V01Y200609IVM008}
\end{bbook}
\endbibitem

\bibitem{sanchez2019segmentation}
\begin{botherref}
\oauthor{\bsnm{Cervantes-Sanchez}, \binits{F.}},
\oauthor{\bsnm{Cruz-Aceves}, \binits{I.}},
\oauthor{\bsnm{Hernandez-Aguirre}, \binits{A.}},
\oauthor{\bsnm{Hernandez-Gonzalez}, \binits{M.A.}},
\oauthor{\bsnm{Solorio-Meza}, \binits{S.E.}}:
Automatic segmentation of coronary arteries in {X}-ray angiograms using
  multiscale analysis and artificial neural networks.
Applied Sciences
\textbf{9}(24)
(2019).
\doiurl{10.3390/app9245507}
\end{botherref}
\endbibitem

\bibitem{duits2021equivariant}
\begin{barticle}
\bauthor{\bsnm{Duits}, \binits{R.}},
\bauthor{\bsnm{Smets}, \binits{B.M.N.}},
\bauthor{\bsnm{Bekkers}, \binits{E.J.}},
\bauthor{\bsnm{Portegies}, \binits{J.W.}}:
\batitle{Equivariant deep learning via morphological and linear scale space
  {PDEs} on the space of positions and orientations}.
\bjtitle{LNCS}
\bvolume{12679},
\bfpage{27}--\blpage{39}
(\byear{2021})
\end{barticle}
\endbibitem

\bibitem{baspinar2018geometric}
\begin{barticle}
\bauthor{\bsnm{Baspinar}, \binits{E.}},
\bauthor{\bsnm{Citti}, \binits{G.}},
\bauthor{\bsnm{Sarti}, \binits{A.}}:
\batitle{A geometric model of multi-scale orientation preference maps via
  {G}abor functions}.
\bjtitle{Journal of Mathematical Imaging and Vision}
\bvolume{60}(\bissue{6}),
\bfpage{900}--\blpage{912}
(\byear{2018})
\end{barticle}
\endbibitem

\end{thebibliography}

\end{document}